\documentclass{article} 
\usepackage{iclr2023_conference,times}


\usepackage{amsmath,amsfonts,bm}









\def\eqref#1{equation~\ref{#1}}









\def\1{\bm{1}}








\def\va{{\bm{a}}}

\def\valpha{{\bm{\alpha}}}



\DeclareMathAlphabet{\mathsfit}{\encodingdefault}{\sfdefault}{m}{sl}
\SetMathAlphabet{\mathsfit}{bold}{\encodingdefault}{\sfdefault}{bx}{n}















\usepackage{hyperref}
\usepackage{url}
\usepackage{booktabs}       
\usepackage{subcaption}
\usepackage{graphicx}

\usepackage{amsfonts}       
\usepackage{amsmath}
\usepackage{amsthm}

\usepackage{nicefrac}       
\usepackage{microtype}      
\usepackage{xcolor}         

\usepackage[capitalize,noabbrev]{cleveref}

\usepackage{tikz}
\usetikzlibrary{calc}

\usepackage{geometry}
\usepackage[utf8]{inputenc}
\usepackage{amsmath,amssymb, amsthm, bbm}

\newtheorem{theorem}{Theorem}[section]
\newtheorem{corollary}{Corollary}[theorem]
\newtheorem{lemma}[theorem]{Lemma}

\newtheorem{observation}[theorem]{Observation}
\newtheorem{definition}[theorem]{Definition}
\newtheorem{claim}[theorem]{Claim}

\newcommand{\on}{\operatorname}

\title{Feature Dropout: Revisiting the Role of \\ Augmentations in Contrastive Learning}


\author{Alex Tamkin\thanks{\href{atamkin@stanford.edu}{atamkin@stanford.edu}}, \hspace{0.03mm} Margalit Glasgow, Xiluo He, Noah Goodman   \\
Stanford University\\
}

\iclrfinalcopy 
\begin{document}

\maketitle

\begin{abstract}
What role do augmentations play in contrastive learning? Recent work suggests that good augmentations are \textit{label-preserving} with respect to a specific downstream task. We complicate this picture by showing that label-destroying augmentations can be useful in the foundation model setting, where the goal is to learn diverse, general-purpose representations for \textit{multiple} downstream tasks. We perform contrastive learning experiments on a range of image and audio datasets with multiple downstream tasks (e.g. for digits superimposed on photographs, predicting the class of one vs.~the other). We find that Viewmaker Networks, a recently proposed model for learning augmentations for contrastive learning, produce label-destroying augmentations that stochastically destroy features needed for different downstream tasks. These augmentations are interpretable (e.g. altering shapes, digits, or letters added to images) and surprisingly often result in better performance compared to expert-designed augmentations, despite not preserving label information. To support our empirical results, we theoretically analyze a simple contrastive learning setting with a linear model. In this setting, label-destroying augmentations are crucial for preventing one set of features from suppressing the learning of features useful for another downstream task. Our results highlight the need for analyzing the interaction between \textit{multiple} downstream tasks when trying to explain the success of foundation models.
\end{abstract}

\section{Introduction}
In recent years, foundation models \citep{Bommasani2021OnTO} have exhibited remarkable progress on a range of AI tasks \citep{Devlin2019BERTPO, Liu2019RoBERTaAR, Ramesh2021ZeroShotTG, radford2021learning, brown2020language, Chowdhery2022PaLMSL, Hoffmann2022TrainingCL, Alayrac2022FlamingoAV, Reed2022AGA}. A crucial characteristic of foundation models is that they can be adapted for a range of downstream tasks. For example, a foundation model trained on ImageNet should ideally not only perform well at object classification, but should also have learned general features useful for localization, segmentation, and other visual tasks. Indeed, this is borne out by recent work showing the high accuracy of foundation models on a range of downstream tasks \citep{Chen2020ASF}, as well as a range of analysis work showing models learn high-level semantic features including texture, color, pose, and style \citep{Goh2021MultimodalNI}.

One popular strategy for training foundation models involves training models to match transformed versions (known as \textit{views} or \textit{augmentations}) of the same input. For example, image views might include common data augmentations such as cropping or color jitter \citep{Chen2020ASF}, while views for speech might include pitch modulation or spectrogram masking \citep{Kharitonov2021DataAug, Park2019Specaugment}. This family of objectives includes contrastive approaches such as SimCLR and MoCo, as well as non-contrastive approaches such as BYOL and SwAV \citep{Chen2020ASF, He2020MomentumCF, Grill2020BYOL, Caron2020SWAV}. 

Given the central importance of these views for defining the self-supervised task, much work has focused on the question of what views lead to high-quality representations. The prevailing consensus, exemplified by \citep{Tian2020WhatMF}, holds that views should be \textit{label-preserving} with respect to a downstream task. In other words, because the contrastive loss will produce representations which are \textit{invariant} to features that vary across views, any information we wish to preserve in the representations should not be altered by such views. As \citet{Tian2020WhatMF} write: ``\textit{A good set of views are those that share the minimal information necessary to perform well at the downstream task}.''

Here, we question whether this assumption---in particular, with its focus on a single task---is enough to explain why contrastive foundation models succeed on a \textit{range} of downstream tasks. In \Cref{sec:invariance-in-practice}, we observe that the actual choice and application of \textbf{views in practice} does not align with this prevailing consensus. For example, complete invariance to several common data augmentations (e.g. shifts in brightness or cropping) is undesirable since augmentations of inputs from different classes can collide. Furthermore, in many cases there are explicit ways to specify invariances (e.g. converting images to grayscale) that researchers avoid in favor of specifying them indirectly via augmentations (e.g. hue shifts). These observations suggest that specifying invariances is not the sole role of these views.

Instead, we suspect that augmentations serve as a form of \textbf{feature dropout}---preventing any one feature from becoming a shortcut feature and suppressing the learning of other features. We study this idea empirically in Viewmaker Networks, a recently proposed method that appears to learn to drop out different features in the input via adversarial training. We apply viewmaker and expert views to datasets with two associated downstream tasks, one involving classifying the main input (e.g., an image or audio recording) and one involving a simple overlaid element (e.g., a digit, shape, letter, or speech snippet). 
We observe that the viewmaker augmentations selectively obscure these overlaid features. Despite this, the viewmaker representations still learn both downstream tasks well, while expert views often struggle on one or the other. This further suggests that being label-preserving is not a necessary property of good views, as long as the label information is still \textit{sometimes} accessible.

Finally, we formalize the intuition that feature dropout can aid learning with a theoretical analysis of a simple linear contrastive setting. In this setting, we characterize how the noisiness of each feature directly determines how quickly features are learned, and uncover an \textbf{interaction between features} governing how fast they are learned. In particular, we show how learning one feature quickly can suppress the learning of other features, and show that adding noise to the ``easiest'' feature can \textit{increase} the rate at which other features are learned. This further indicates that \textit{label-destroying} augmentations may have a direct role in ensuring that contrastive models learn a broad range of features for downstream tasks. 

Overall, these findings suggest the need to revisit common assumptions about the role of augmentations for contrastive learning in the foundation model setting, and move towards a better understanding of how to train generalist models that learn diverse features from unlabeled data.

\section{Common practices are at odds with the ``invariance'' explanation}
\label{sec:invariance-in-practice}

We begin by briefly exploring several common augmentations used in contrastive learning for natural images, and explore how they come into conflict with the common assumption described above. First, we observe that many common augmentations can affect the label of the input, depending on the downstream task. For example, many downstream image recognition tasks require color information (e.g. identifying bird species) or brightness (e.g. scene or time-of-day classification), implying that invariance to these characteristics would be undesirable. Yet hue shifts, greyscaling, and brightness shifts are common augmentations used in contrastive learning \citet{Chen2020ASF, He2020MomentumCF}

Second, repeated application of some augmentations causes challenges for \textit{all} downstream tasks. For example, applying brightness shifts repeatedly results in any image turning completely black or completely white. Thus the class label cannot be truly invariant to this augmentation, since inputs from different classes can experience an ``augmentation collision'' at this black or white image (this is formalized in \Cref{sec:proof-invariance}).\footnote{Note that invariance is not to be confused with the related but distinct property of equivariance, often discussed as a desirable property of network architectures (e.g. see \citet{fukushima1982neocognitron, Chen2020AGF})} This argument also applies to other augmentations, including shifts in contrast\footnote{Continuous reduction in contrast eventually produces single-color images, given finite precision images} and random masking. 

Third, some augmentations are commonly used \textit{despite} ways of explicitly encoding invariance to them. For example, two image augmentations are \textit{hue shifts} and  \textit{greyscaling}. Invariance to both of these augmentations can be explicitly encoded by always converting an image to greyscale. Yet doing so is not common practice because color information is still desirable for many downstream tasks.

The contradictions between the invariance rationale for augmentations in contrastive learning and these common practices suggest the need for additional explanations for the role of augmentations.

\section{Viewmaker Networks Succeed Despite Destroying Label Information}
\label{sec:viewmaker}

As another point of evidence that good views need not be label-preserving, we consider the behavior of viewmaker networks \citep{Tamkin2021ViewmakerNL}, a generative model which produces augmentations for contrastive learning. Intuitively, viewmakers learn a stochastic augmentation policy that makes the contrastive task as hard as possible for the encoder. The stochastic augmentations are parameterized as additive perturbations bounded by an $L_1$ norm, meaning the viewmaker can alter but not completely destroy the original image.

Formally, given an input $x \in \mathbb N$, a viewmaker network $V_\psi$ is trained jointly with an encoder $E_\theta$ to optimize the minimax expression:
\begin{align*}
    \max_\psi \min_\theta \mathcal L
    \left(  
    E_\theta \left(x + \epsilon \frac{V_\psi(x, \delta_1)}{||V_\psi(x, \delta_1) ||_1}\right),
    E_\theta \left(x + \epsilon \frac{V_\psi(x, \delta_2)}{||V_\psi(x, \delta_2) ||_1}\right)
    \right)
\end{align*}
Here $\mathcal L$ is a multiview loss function (e.g. \citep{Chen2020ASF, He2020MomentumCF}), $x$ is a minibatch of inputs, $||\cdot ||_1$ is the $L_1$ norm, $\epsilon$ is the \textit{distortion budget} controlling the strength of the views, and $\delta_1, \delta_2 \sim N(0,1)$ are random inputs that enable the viewmaker to learn a stochastic augmentation policy. We clamp the output of the viewmaker for images to $[0,1]$ as in \cite{Tamkin2021ViewmakerNL}.

Viewmaker networks learn to stochastically alter different parts of the input, including task-relevant features, meaning that these augmentations are not label-preserving. Nevertheless, as we will see shortly, viewmaker networks enable strong performance on multiple downstream tasks, including often better performance than expert-designed augmentations. Moreover, this \textbf{feature dropout} capability of viewmaker networks may help them to learn many features well rather than focusing on the easiest ones.

\subsection{Datasets}

We consider the behavior of viewmaker networks on four datasets, including three image and one audio dataset. Each dataset is constructed in such a way as to support two distinct downstream classification tasks, enabling us to examine how well each downstream task is learned. The presence of two downstream tasks enables us to analyze the \textit{foundation model} setting where we wish to learn features relevant for multiple downstream tasks, as opposed to one set or the other.

\paragraph{Image datasets} The three image datasets are based on the canonical CIFAR-10 image-recognition dataset \citep{Krizhevsky09learningmultiple} (MIT-License). One task is always to predict the CIFAR-10 object label (e.g. \texttt{airplane} or \texttt{bird}). The other task is dependent on an additional feature overlaid on the image: \textbf{C+Shapes:} The CIFAR-10 image is overlaid with one of three randomly-colored shapes: a square, a triangle, or a circle. The second task is to predict what shape was overlaid (N=3 classes). \textbf{C+Digits:} The CIFAR-10 images are overlaid with four copies of a randomly-sampled digit from the MNIST dataset. The second task is to predict the digit class (N=10 classes). \textbf{C+Letters:} The CIFAR-10 images are overlaid with four copies of a randomly-colored English letter. The second task is to predict the class of the letter (N=26 classes).

\paragraph{Audio dataset} The audio dataset is created by overlaying the audio of a spoken digit (from the AudioMNIST dataset \citep{Becker2018InterpretingAE}, MIT License) with a random background sound (collected from one of three possible classes: cafe, machinery, and traffic) \citep{Saki2016SBRT, Saki2016AS}. The tasks are to predict the digit class (N=10 classes) and to predict the sound class (N=3 classes). Inputs are presented to the network as log mel spectrograms.

\begin{figure*}%
\centering
\begin{subfigure}{.25\linewidth}
    \includegraphics[width=\columnwidth]{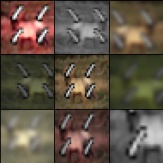}
    \label{fig:digits-expert-augs}
\end{subfigure} \qquad%
\begin{subfigure}{.25\linewidth}
    \includegraphics[width=\columnwidth]{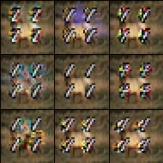}
    \label{fig:digits-expert-augs}
\end{subfigure} \qquad %
\begin{subfigure}{.25\linewidth}
    \includegraphics[width=\columnwidth]{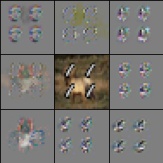}
    \label{fig:digits-viewmaker-augs}
\end{subfigure} \\
\begin{subfigure}{.25\linewidth}
    \includegraphics[width=\columnwidth]{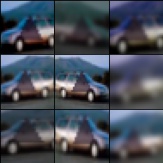}
    \label{fig:shapes-expert-augs}
\end{subfigure}\qquad  %
\begin{subfigure}{.25\linewidth}
    \includegraphics[width=\columnwidth]{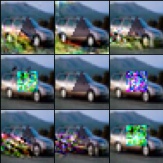}
    \label{fig:shapes-expert-augs}
\end{subfigure}\qquad  %
\begin{subfigure}{.25\linewidth}
    \includegraphics[width=\columnwidth]{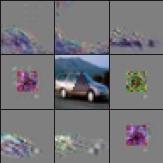}
    \label{fig:shapes-viewmaker-augs}
\end{subfigure} \\
\begin{subfigure}{.25\linewidth}
    \includegraphics[width=\columnwidth]{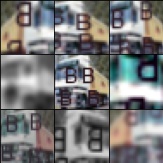}
    \label{fig:letters-expert-augs}
\end{subfigure}\qquad  %
\begin{subfigure}{.25\linewidth}
    \includegraphics[width=\columnwidth]{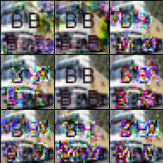}
    \label{fig:letters-expert-augs}
\end{subfigure}\qquad  %
\begin{subfigure}{.25\linewidth}
    \includegraphics[width=\columnwidth]{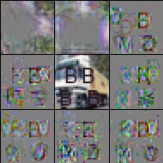}
    \label{fig:letters-viewmaker-augs}
\end{subfigure} \\%
\begin{subfigure}{.25\linewidth}
    \includegraphics[width=\columnwidth]{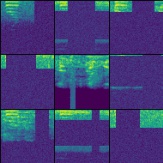}
    \label{fig:audio-expert-augs}
\end{subfigure}\qquad %
\begin{subfigure}{.25\linewidth}
    \includegraphics[width=\columnwidth]{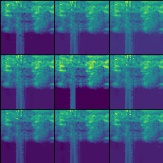}
    \label{fig:audio-expert-augs}
\end{subfigure}\qquad %
\begin{subfigure}{.25\linewidth}
    \includegraphics[width=\columnwidth]{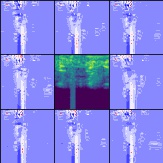}
    \label{fig:audio-viewmaker-augs}
\end{subfigure}%
\caption{\textbf{Comparison of viewmaker and expert augmentations on datasets with multiple features.} The viewmaker augmentations adapt to the particular semantics of the input data, and make targeted perturbations which remove the class-relevant information of the synthetic features (e.g. occluding the digit, shape, letter, or speech). Despite this, the encoder network is still able to learn strong representations. \textit{Rows} (from top): Digits, Shapes, Letters, Audio. \textit{Columns} (from left): Expert augmentations, viewmaker augmentations, difference between original and viewmaker augmentation, rescaled to [0,1]. Center image in each grid is the original. Audio Expert views shown are Spectral views.}
\label{fig:augs}
\end{figure*}

\subsection{Experiments}

\paragraph{Pretraining} We pretrain with the SimCLR algorithm for 200 epochs with a batch size of 256 and a temperature of 0.1. We use a ResNet-18 model with standard modifications for smaller inputs (including a smaller stride and no initial maxpool) as used in \citet{Tamkin2021ViewmakerNL}. For the expert augmentations, we use the standard SimCLR augmentations for the image datasets \citep{Chen2020ASF}, and the SpecAug \citep{Park2019Specaugment} augmentations for the audio datasets, which randomly mask out different frequency and time bands, as well as the WaveAug \citep{Kharitonov2021DataAug} augmentations, which alter various properties of the waveform such as the pitch and speed. For the viewmaker augmentations, we use a budget of $\epsilon = 0.05 P$ for the image datasets, and $\epsilon = 0.125 P$ for the audio datasets, where $P$ is the number of pixels in the input.

\paragraph{Linear Evaluation} We evaluate the quality of the learned representations by training a linear softmax classifier on top of the prepool representations. We train for 100 epochs, using the same parameters as Viewmaker \citep{Tamkin2021ViewmakerNL}, training separate linear classifiers using the same pretrained network for each downstream task \citep{Chen2020ASF}. Augmentations are applied during training but not evaluation.

\subsection{Results}

\paragraph{Qualitative evidence of feature dropout} Visually, the viewmaker augmentations seem to stochastically alter different aspects of the input, as shown in \Cref{fig:augs}. In addition to modifying the background of each input, the viewmaker also selectively modifies the additional synthetic features added to each domain: \textbf{C+Digits:} The viewmaker augmentations selectively add pixels to the MNIST digits, making it difficult to distinguish which number is present.
\textbf{C+Shapes:} The viewmaker augmentations sometimes draw squares around the shape in the center, making it difficult to determine the shape class.
\textbf{C+Letters:} The viewmaker draws letter-like markings on top of the letters, obscuring the letter identity and color.
\textbf{Audio:} The viewmaker identifies the narrow band corresponding to the speech and applies perturbations to it. As can be seen in \Cref{fig:augs}, these label-destroying augmentations are quite common, occuring in a sizable fraction of the sampled views. 

\paragraph{Quantitative evidence of feature dropout} We also measure this selectivity of features quantitatively in \Cref{sec:dropout-histogram} and \Cref{fig:dropout-hist}. We augment images 1,200 times and observe the impact on the predictive probability of the correct object class. Two clear modes appear for the viewmaker augmentations, but not expert augmentations. This corresponds to the fraction of time the viewmaker destroys the overlaid feature information (low P(correct object class)) and preserves it (high P(correct object class)).

\paragraph{Viewmaker succeeds despite destroying label information} As shown in \Cref{table:results-images} and \cref{table:results-audio}, viewmaker networks are able to achieve good accuracy on both tasks, while expert augmentations frequently achieve lower performance on one or both tasks. For example, on the image tasks, while expert views achieve slightly higher performance on the image only, they experience a large drop in accuracy when the synthetic feature is added. In two of these cases (Shape and Digit) the viewmaker models are able to achieve a higher accuracy on both the image and the synthetic feature, while on the third (Letters) viewmakers achieve slightly lower accuracy on the images but achieve half the error on the synthetic object. For the audio experiments the picture is similar---the viewmaker is able to avoid catastrophic drops in performance learning both features together, achieving the highest accuracy on both, while the expert views experience larger drops and worse overall performance. Note that the high performance of expert views for our control tasks (CIFAR-10/Speech/Sound Only) indicates that the viewmaker views are not merely better all-around views, but that they specifically help the model learn multiple features.

These results provide additional evidence that label-preserving views are not necessary for learning good representations---and that the ability to perform feature dropout may improve learning of multiple tasks.

\begin{table}
\small
\centering
\begin{tabular}{lcc|cccl}\toprule
           & Viewmaker (CIFAR-10)  & Expert (CIFAR-10) & Viewmaker (Object) & Expert (Object)\\\midrule
CIFAR-10 Only	& 84.5	& \textbf{86.2}	& -	& - \\
\midrule
C+Shape	& \textbf{79.8}	& 76.0	& \textbf{100.0}	& \textbf{100.0} \\
C+Digit	& \textbf{69.3}	& 58.8	& \textbf{94.3}	    & 86.7 \\
C+Letter	& 71.9	& \textbf{74.8}	& \textbf{96.9}	& 94.1 \\
\bottomrule
\end{tabular}

\caption{\textbf{Transfer accuracy on different features.} Viewmaker networks are able to achieve good performance across multiple downstream tasks, while expert views sometimes falter. Networks are pretrained on the datasets on the left, and transfer accuracy is reported for the different conditions on the columns. Runs are averages of three seeds (with the exception of CIFAR-10 Only, which is taken from \citep{Tamkin2021ViewmakerNL}).}
\label{table:results-images}
\end{table}

\begin{table}
\small
\centering
\begin{tabular}{lcccccc}\toprule
& \multicolumn{3}{c}{Speech Accuracy} & \multicolumn{3}{c}{Background Sound Accuracy}
\\\cmidrule(lr){2-4}\cmidrule(lr){5-7}
           & Viewmaker  & Spectral & Waveform & Viewmaker & Spectral & Waveform \\\midrule
Speech Only	& 92.4	& \textbf{97.0}	& 76.7	& - & - & -\\
Bkgd. Sound Only	& -	& -	& -	& \textbf{100.0} & 32.64 & \textbf{100.0} \\
\midrule
Speech + Sound	& \textbf{60.8}	& 10.1	& 53.6 & \textbf{97.0} & 47.2 & 43.3 \\
\bottomrule
\end{tabular}
\caption{\textbf{Audio transfer accuracies.} Viewmaker networks achieve good performance across multiple tasks, while expert views sometimes suffer catastrophic drops as another feature is added. Networks are pretrained on the datasets on the left, and transfer accuracy is reported for the different conditions on the columns. Runs are averages of three seeds.}
\label{table:results-audio}
\end{table}

\section{Theoretical Analysis of Feature Interactions in A Linear Contrastive Setting}
\label{sec:theory}

In this section, we theoretically analyze a simple linear model that captures the essence of how label-destroying augmentations can improve downstream accuracy. We study a setting where the data contains many underlying features that are relevant to downstream classification tasks, and where these features are preserved to varying degrees across augmentations. We will show that a linear model trained with a contrastive objective learns these features, and that adding noise to one feature can speed the learning of other features during gradient descent. One difference between the linear setting we theoretically analyze and \Cref{sec:viewmaker} is that in this section we add stochastic Gaussian noise to destroy features across augmentations, as opposed to the more bimodal feature dropout behavior seen in \Cref{fig:augs}.

\subsection{Data Model and Setting} 
We study a model which consists of data with $K$ distinct features, each corresponding to some ground truth unit-vector directions $\mu_1, \ldots, \mu_K \in \mathbb{R}^d$. We sample each data point $u \in \mathbb{R}^{K \times d}$ and its \textit{augmentation}  (a.k.a. its \textit{positive pair} or its \textit{view}) $v \in \mathbb{R}^{K \times d}$ as follows. For $k \in 1,\ldots, K$, the $k$th row of $u$, which we denote $u_k$, is drawn from the Gaussian distribution $\mathcal{N}(0, I_d)$. The $k$th row of the augmentation, $v_k$, is drawn from the same distribution, but is correlated with $u_k$ in the $\mu_k$-direction (and is otherwise independent in the other directions). The strength of the correlation is governed by parameter $\alpha_k \in [0, 1]$ in the following sense:
$v_k^T\mu_k = \alpha_k u_k^T\mu_k + \sqrt{1 - \alpha_k^2}\xi$,
where $\xi \sim \mathcal{N}(0, 1)$. 
Thus the larger $\alpha_k$, the stronger the correlation in that feature across the two views. Figure~\ref{fig:network}(a) visualizes the correlation of $(u_k, v_k)$ in an augmented pair. Formally, we can write that $(u_k, v_k) \sim \mathcal{N}\left(0,\begin{pmatrix}I_d & \alpha_k \mu_k\mu_k^T \\ \alpha_k \mu_k\mu_k^T & I_d\end{pmatrix}\right)$, for a vector $\valpha \in [0, 1]^k$. 

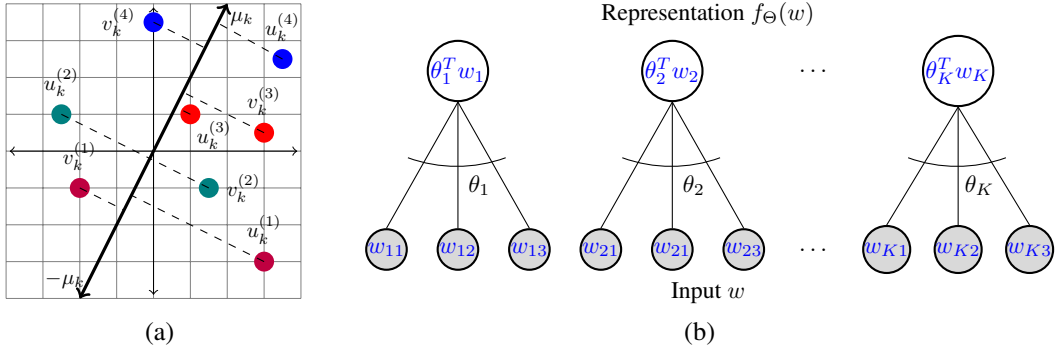
\begin{figure}[]
    \centering
        \begin{tabular}{cc}
        \begin{tikzpicture}[scale=0.49, every node/.style={scale=0.8}]

\draw[step=1cm,gray,very thin] (-4,-4) grid (4,4);
\draw[<->] (-3.9,0) -- (3.9,0);
\draw[<->] (0,-3.9) -- (0,3.9);
\draw[very thick,<->] (-2,-4) -- (2, 4);

\node[circle, fill=blue, label=above:{$u^{(4)}_k$}] at (3.5, 2.5) {};
\node[circle, fill=blue, label=left:{$v^{(4)}_k$}] at (0, 3.5) {};
\draw[dashed] (3.5, 2.5) -- (1.75, 3.475);
\draw[dashed] (0, 3.5) -- (1.4, 2.8);

\node[circle, fill=red, label=above:{$v^{(3)}_k$}] at (3, 0.5) {};
\draw[dashed] (3, 0.5) -- (0.8, 1.6);
\node[circle, fill=red, label={[label distance=-0.2cm]315:{$u^{(3)}_k$}}] at (1, 1) {};
\draw[dashed] (1,1) -- (0.6, 1.2);

\node[circle, fill=teal, label=above:{$u^{(2)}_k$}] at (-2.5, 1) {};
\draw[dashed] (-2.5, 1) -- (-0.1, -0.2);
\node[circle, fill=teal, label=right:{$v^{(2)}_k$}] at (1.5, -1) {};
\draw[dashed] (1.5, -1) -- (-0.1, -0.2);

\node[circle, fill=purple, label=above:{$u^{(1)}_k$}] at (3, -3) {};
\draw[dashed] (3, -3) -- (-0.6, -1.2);
\node[circle, fill=purple, label=above:{$v^{(1)}_k$}] at (-2, -1) {};
\draw[dashed] (-2, -1) -- (-0.8, -1.6);

\node[circle] at (-2.4, -3.5) {$-\mu_k$};
\node[circle] at (2.4, 3.6) {$\mu_k$};

\end{tikzpicture} & \pagestyle{empty}

\def\layersep{2.5cm}

\tikzset{neuron/.style={circle,thick,fill=black!25,minimum size=17pt,inner sep=0pt},
    input neuron/.style={neuron, draw,thick, fill=gray!30},
    hidden neuron/.style={neuron,fill=white,draw},
    hoz/.style={rotate=-90}}   

\begin{tikzpicture}[-,draw=black, node distance=\layersep,transform shape,rotate=90,scale=0.95, every node/.style={scale=0.95}]  

\foreach \name / \k / \d in {1/1/1,2/1/2,3/1/3,4/2/1, 5/2/1, 6/2/3, 8/K/1, 9/K/2, 10/K/3}
\node[input neuron, hoz] (I-\name) at (0,-\name){\color{blue}$w_{\k\d}$};

\node[hoz] (I-7) at (0,-7) {$\dots$};

\foreach \y / \name in {2/1,5/2,9/K}{
    \draw (1+\layersep/2,-\y+0.7) arc (160:200:2);
    \node[hoz] at (0.9,-\y-0.3) {$\theta_\name$};}


\foreach \y / \name in {2/1,5/2,9/K}
\path node [hidden neuron, hoz] (H-\name) at (\layersep,-\y cm) {\color{blue}$\theta_{\name}^Tw_{\name}$};

\path
   node[hoz] () at (\layersep,-7 cm) {$\dots$};


    \foreach \source / \dest in {1/1,1/2,1/3,2/4,2/5,2/6,K/8, K/9, K/10}
            \path (H-\source.south) edge (I-\dest.north);

    \node[hoz] at (3.3,-5.5) {Representation $f_{\Theta}(w)$};
    \node[hoz] at (-0.6,-5.5) {Input $w$};
\end{tikzpicture}\\
        (a) & (b)
        \end{tabular}
    \caption{We show how label-destroying augmentations can aid learning of other features in a linear contrastive setting: (a) The correlation of the $k$th feature of an augmentation pair, shown for $d = 2$. Each pair $u^{(i)}_k$ and $v^{(i)}_k$ have correlated projections onto the ground truth $\mu_k$ direction, representing the feature conserved across augmentations. (b) Feedforward linear network which computes the representation $f_{\Theta}(w)$. As each feature $\mu_k$ is learned ($\theta_k \to \mu_k$) the representations of the two views $f_\Theta(u^{(i)}), f_\Theta(v^{(i)})$ become more similar, decreasing the contrastive loss.}
    \label{fig:network}
\end{figure}

We will learn a model $\Theta \in \mathbb{R}^{K \times d}$, which represents a collection of $K$ feature extractors, as pictured in Figure~\ref{fig:network}(b). The model $\Theta$, with rows $\{\theta_k\}_{k \in [K]}$, maps a data point $w \in \mathbb{R}^{K \times d}$ to a representation $f_{\Theta}(w) \in \mathbb{R}^K$ by computing a score $w_k^T\theta_k$ for each element in the representation. That is, $(f_{\Theta}(w))_k = w_k^T\theta_k$.

Our goal is that the model $\Theta$ will be useful for a downstream classification task which depends on the ground truth features. A good representation will capture ground truth features that are correlated across augmentations, such that $\theta_k$ is aligned with $\mu_k$ or $-\mu_k$.

\paragraph{Training.} We will study the the evolution of $\Theta$ as we optimize a standard constrictive learning objective using gradient descent \citep{Dosovitskiy2014DiscriminativeUF, Chen2020ASF}. At each round of gradient descent, we sample a fresh batch of $m$ data points and their augmentations, $(U, V) := \{(u^{(i)}, v^{(i)}\}_{i \in [m]}$. For each $i, j \in [m]$, we compute a similarity score $z_{ij} := \langle f_{\Theta}(u^{(i)}), f_{\Theta}(v^{(j)})\rangle = \sum_k (\theta_k^Tu_k^{(i)})(\theta_k^Tv_k^{(j)})$ using the dot product of their $K$-dimensional representations. We then compute the logits $p_{ij} := \frac{\exp(z_{ij})}{\sum_{j'}\exp(z_{ij'})}$ using the softmax function, and use the classwise cross entropy loss function
$\mathcal{L}(\Theta; U, V) := -\log(p_{ii})$.

\subsection{Main Result}
We will study gradient descent (GD) on the cross entropy loss, and consider how adding noise to one feature affects the learning of the other features.  As suggested earlier, we can measure how well we learn the $k$th feature by measuring the alignment of $\theta_k$ with $\mu_k$ or $-\mu_k$. A natural way to measure this alignment is the acute angle between $\pm \mu_k$ and $\theta_k$, given by $\arccos\left(\frac{|\mu_k^T\theta_k|}{\|\theta_k\|_2}\right)$. Lemma~\ref{lem:downstream} in Appendix~\ref{appendix:proofs} proves that this quantity directly determines the test accuracy on a natural downstream linear classification task. 

Formally, we say we \textit{add noise} to some feature $k'$ of a data point $v$, if for some $\beta \in [0, 1)$, we let $\tilde{v}_{k'} = \beta v_{k'} + \sqrt{1 - \beta^2}\xi$, where $\xi \sim \mathcal{N}(0, I_d)$, and $\tilde{v}_{k} = v_k$ for $k \neq k'$. Thus if $(u, v)$ were a pair generated with the correlation coefficients $\{\alpha_k\}_{k \in [K]}$, then the distribution of $(u, \tilde{v})$ comes from the modified correlation coefficients $\{\tilde{\alpha}\}_{k \in [K]}$ with the single modification $\tilde{\alpha}_{k'} = \beta \alpha_k$. We now present our main theorem:

\begin{theorem}[Noise improves feature learning]\label{thm:main}
There exists a universal constant $C$, such that the following holds. 
Let $\Theta^{(t + 1)} = \Theta^{(t)} - \eta (\nabla \mathcal{L}(U, V; \Theta) + \lambda \Theta^{(t)})$, and $\tilde{\Theta}^{(t + 1)} = \Theta^{(t)} - \eta(\nabla \mathcal{L}(U, \tilde{V}; \Theta) + \lambda\Theta^{(t)})$, where $\tilde{V}$ is $V$ with any amount of added noise in the $k'$ feature. This has the effect of changing $\alpha_{k'}$ to $\tilde{\alpha}_{k'}$ for any $\tilde{\alpha}_{k'} < \alpha_{k'}$. Then for any $k \neq k'$, if
$|\theta_k^T\mu_k| \leq \frac{1 - \alpha_{k'}^2}{C}\|\theta_k\|$, $\|\theta_{k'}\|^3 \leq |\theta_{k'}^T\mu_k|$, and  $\|\theta_{k}\|^2 \leq \frac{\alpha_k(1 - \alpha_{k'}^2)}{C}$,
then for a small enough step size $\eta$,
        $\mathbb{E}_{U, V}\left[\arccos\left(\frac{|\mu_k^T\theta_k^{(t + 1)}|}{\|\theta_k^{(t + 1)}\|_2}\right)\right] > \mathbb{E}_{U, \tilde{V}}\left[\arccos\left(\frac{|\mu_k^T\tilde{\theta}_k^{(t + 1)}|}{\|\tilde{\theta}_k^{(t + 1)}\|_2}\right)\right]$.
\end{theorem}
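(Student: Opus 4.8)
The plan is a one-step perturbation analysis: treat $\alpha_{k'}$ as the knob, compute the expected gradient of the cross-entropy loss with respect to the $k$-th row to first order in $\eta$, and show that lowering $\alpha_{k'}$ strictly lowers the expected misalignment of feature $k$. Throughout, write $\phi(\theta):=\arccos(|\mu_k^T\theta|/\|\theta\|)$; it depends only on the $k$-th row and is scale-invariant, so $\nabla\phi(\theta)\perp\theta$, and a short computation gives $\nabla\phi(\theta)\propto -\big(\|\theta^\perp\|^2\,\mathrm{sgn}(\mu_k^T\theta)\,\mu_k-|\mu_k^T\theta|\,\theta^\perp\big)$ with positive proportionality constant, where $\theta^\perp:=\theta-(\mu_k^T\theta)\mu_k$; this is the ``de-correlating'' direction, and the hypothesis $|\theta_k^T\mu_k|\le\frac{1-\alpha_{k'}^2}{C}\|\theta_k\|$ forces $\|\theta_k^\perp\|^2\asymp\|\theta_k\|^2$.

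\textbf{Step 1: reduction to the expected gradient.} The update of the $k$-th row is $\theta_k^{(t+1)}=(1-\eta\lambda)\theta_k^{(t)}-\eta g_k$ with $g_k:=\nabla_{\theta_k}\mathcal L(U,V;\Theta^{(t)})$, so Taylor expanding in $\eta$ and taking expectations gives $\mathbb E_{U,V}[\phi(\theta_k^{(t+1)})]=\phi(\theta_k^{(t)})-\eta\langle\nabla\phi(\theta_k^{(t)}),\,\lambda\theta_k^{(t)}+\mathbb E_{U,V}[g_k]\rangle+O(\eta^2)$, and likewise with $\tilde V$. Subtracting, the $\phi(\theta_k^{(t)})$ and weight-decay terms cancel and it suffices to show $\big\langle\nabla\phi(\theta_k^{(t)}),\ \mathbb E_{U,V}[g_k]-\mathbb E_{U,\tilde V}[\tilde g_k]\big\rangle<0$, then take $\eta$ small. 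The $O(\eta^2)$ remainder is uniformly controlled because $g_k$ is a polynomial in jointly Gaussian vectors (all moments finite) and $\phi$ is smooth with bounded derivatives on the region where $|\mu_k^T\theta|/\|\theta\|$ is bounded away from $1$, which the hypotheses guarantee.

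\textbf{Step 2: isolate the $\alpha_{k'}$-dependence of $\mathbb E[g_k]$.} Since the hypotheses force $\|\Theta\|$, hence every similarity score $z_{ij}$, to be small, expand the softmax cross-entropy about the uniform distribution. Using independence of the features across $k$ together with $\mathbb E[(\theta_k^Tu_k)v_k]=\alpha_k(\mu_k^T\theta_k)\mu_k$ and $\mathbb E[(\theta_{k'}^Tu_{k'})(\theta_{k'}^Tv_{k'})]=\alpha_{k'}(\mu_{k'}^T\theta_{k'})^2$, one obtains $\mathbb E[\mathcal L]=\text{const}-\tfrac{m-1}{m}\sum_l\alpha_l(\mu_l^T\theta_l)^2+\tfrac{m-1}{2m}\sum_l\|\theta_l\|^4+\tfrac{m-1}{2m^2}\big(\sum_l\alpha_l(\mu_l^T\theta_l)^2\big)^2+\tfrac{m-1}{2m^2}\sum_l\alpha_l^2(\mu_l^T\theta_l)^4+(\text{higher order in }\|\Theta\|)$. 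The only coupling between features $k$ and $k'$ at this order is the cross term $\big(\sum_l\alpha_l(\mu_l^T\theta_l)^2\big)^2$; differentiating in $\theta_k$ ($k\ne k'$), the $\alpha_{k'}$-dependent part of $\mathbb E[g_k]$ is $\tfrac{2(m-1)}{m^2}\alpha_k(\mu_k^T\theta_k)\,\alpha_{k'}(\mu_{k'}^T\theta_{k'})^2\,\mu_k+(\text{higher order})$ — precisely the ``a fast feature suppresses a slow one'' interaction, since it pushes $\theta_k$ \emph{away} from $\pm\mu_k$. Hence $\Delta g_k:=\mathbb E_{U,V}[g_k]-\mathbb E_{U,\tilde V}[\tilde g_k]=\tfrac{2(m-1)}{m^2}\alpha_k(\mu_k^T\theta_k)(\mu_{k'}^T\theta_{k'})^2(\alpha_{k'}-\tilde\alpha_{k'})\mu_k+R$, where $R$ collects the higher-order contributions (also $\propto\alpha_{k'}-\tilde\alpha_{k'}$ by smoothness in $\alpha_{k'}$).

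\textbf{Step 3: the inner product and error control.} Pairing the leading term of $\Delta g_k$ with $\nabla\phi$ (using $\mu_k\cdot\nabla\phi\propto-\|\theta_k^\perp\|^2\,\mathrm{sgn}(\mu_k^T\theta_k)$) gives a strictly negative contribution of size $\asymp\tfrac{m-1}{m^2}(\alpha_{k'}-\tilde\alpha_{k'})\alpha_k\,|\mu_k^T\theta_k|\,(\mu_{k'}^T\theta_{k'})^2\|\theta_k^\perp\|^2$ (strict provided $\mu_k^T\theta_k\ne0$; the degenerate case is a measure-zero exception handled by a one-sided argument). It remains to show $|\langle\nabla\phi,R\rangle|$ is smaller. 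The key point is that $\nabla\phi\perp\theta_k$, so the largest part of $R$ — which lies along $\theta_k$ and is exactly where the $|\mu_k^T\theta_k|$-free contributions such as $\nabla_{\theta_k}\|\theta_k\|^4$ land — contributes nothing; every remaining $\alpha_{k'}$-dependent higher-order term carries an extra factor of $\mu_k^T\theta_k$ or of $\|\Theta\|^2$ after projection. Bounding these crudely (all combinatorial coefficients are $O(1)$, feature sums collapse to $\|\Theta\|_F^2$, and the $m$-powers cancel between numerator and denominator), one gets $|\langle\nabla\phi,R\rangle|\lesssim\frac{\|\theta_k\|^2}{\alpha_k}\cdot(\text{leading contribution})$; the three stated hypotheses — the first keeping $\theta_k$ well away from the $\mu_k$ axis, the third making $\|\theta_k\|^2$ small relative to $\alpha_k(1-\alpha_{k'}^2)$, and the second keeping $(\mu_{k'}^T\theta_{k'})^2$ from being negligible relative to the scale of the feature-$k'$ perturbations — then force $|\langle\nabla\phi,R\rangle|<(\text{leading contribution})$ once $C$ is a large enough universal constant. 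This yields $\langle\nabla\phi,\Delta g_k\rangle<0$, and small $\eta$ finishes it; the statement ``for any $\tilde\alpha_{k'}<\alpha_{k'}$'' follows since all three hypotheses are preserved as $\alpha_{k'}$ decreases, so one may equivalently integrate $\frac{d}{d\alpha_{k'}}\mathbb E[\phi(\theta_k^{(t+1)})]>0$ over $[\tilde\alpha_{k'},\alpha_{k'}]$.

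\textbf{Expected main obstacle.} The hard part is the error control in Steps 2--3: the softmax is not globally close to its low-order expansion because the $z_{ij}$ are unbounded products of Gaussians, so ``expand about uniform'' must be justified in expectation via sub-exponential concentration / moment bounds on the $z_{ij}$; and one must then verify that \emph{every} $\alpha_{k'}$-dependent higher-order term is genuinely subdominant after projecting onto $\nabla\phi$ — i.e.\ that the three hypotheses with a single universal $C$ are exactly strong enough to dominate the accumulated remainder.
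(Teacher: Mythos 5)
Your Step 1 is essentially the paper's own reduction (its Lemma~\ref{lem:limit} computes exactly the pairing of the expected gradient with the two components of $\nabla\phi$, one along $\mu_k$ and one along $\theta_k$, with a negative prefactor), and your idea of integrating $\frac{d}{d\alpha_{k'}}$ over $[\tilde\alpha_{k'},\alpha_{k'}]$ is also how the paper handles ``any amount of added noise.'' The divergence, and the gap, is in Step 2. You propose to expand the softmax cross-entropy about the uniform distribution and read off the $\alpha_{k'}$-dependence from a fourth-order polynomial in $\Theta$. But the theorem's hypotheses only constrain $\theta_k$ and $\theta_{k'}$; they say nothing about $\|\theta_l\|$ for the other $K-2$ features, nor about $K$ itself. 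Indeed the whole point of the setting is that some features may already be well-learned and dominant, so the logits $z_{ij}$ need not be small and $p_{ij}$ need not be close to uniform. Your expansion therefore requires an assumption the theorem does not grant, and the ``const $-\frac{m-1}{m}\sum_l\alpha_l(\mu_l^T\theta_l)^2+\cdots$'' formula is not valid in the regime the theorem covers. The paper sidesteps this by conditioning: it defines $q_{ij}$ to be the logits with the $k$ and $k'$ coordinates zeroed out, treats $q_{ij}$ exactly (no expansion), and only does perturbative Gaussian analysis — via Stein's lemma and half-normal MGF bounds — on the four Gaussian blocks $u_k,v_k,u_{k'},v_{k'}$. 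All of its ``good term vs.\ junk term'' bounds are relative to $q_{ii}q_{ij}$, which is why no smallness of the other features is needed.

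A second, related issue: even restricted to features $k$ and $k'$, ``expand about uniform and control the remainder by moment bounds'' is not routine, because the remainder involves $\exp$ of products of Gaussians, whose expectations can blow up; the paper needs its Lemma~\ref{lem:guassian_abs} (a quantitative half-normal MGF bound requiring $\sigma\le 1/C$) precisely to make the analogous step rigorous, and applies it only to the two constrained features. You correctly flag this as the main obstacle, but flagging it is not resolving it, and under your global expansion it cannot be resolved without adding hypotheses. Separately, the mechanism you identify — the cross term $\bigl(\sum_l\alpha_l(\mu_l^T\theta_l)^2\bigr)^2$ coupling $k$ to $k'$ — is the right intuition (it is the polynomial shadow of the paper's observation that adding noise to feature $k'$ decreases $p_{ii}$ and hence increases the useful part of the gradient), and your observation that $\nabla\phi\perp\theta_k$ kills the radial part of the error is genuinely used in the paper (that is why the $\theta_k^T\nabla_k$ term carries the extra factor $(\mu_k^T\theta_k)^2/\|\theta_k\|^2$). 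But as written the argument does not prove the theorem under its stated hypotheses.
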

We briefly comment on the three assumptions on $\Theta$ in the theorem. The first assumption, $|\theta_k^T\mu_k| \leq \frac{1 - \alpha_{k'}^2}{C}\|\theta_k\|$ requires that $\theta_k$ is not too aligned with $\mu_k$ -- that is, the result applies to all features $k$ that aren't already learned too well. The second two assumptions are satisfied if the $k'$th feature has been learned to some extent, and the norm of $\theta_k$ and $\theta_{k'}$ are small, which can be enforced throughout training with $\ell_2$ regularization.

The theorem guarantees that at any point in training, if we add noise to the $k'$th feature, the next step of GD learns all other features \textit{better} than if we didn't add noise. To validate the implication of this result for the complete trajectory of GD, we include simulations in Appendix~\ref{apx:sims}. Our experiments show that introducing noise part-way through training to dominant features can significantly speed the alignment of weak features, with only a small cost to the alignment of the dominant features. We prove our result in Appendix~\ref{appendix:proofs}, including intuition and a technical overview of the steps in Section~\ref{sec:tech_overview}.

\section{Related work}
\label{sec:related}

\paragraph{Understanding contrastive and multiview learning}
Many prior works have laid the foundations for current contrastive and multiview learning algorithms \citep{Becker1992SelforganizingNN, Hadsell2006DimensionalityRB, Dosovitskiy2014DiscriminativeUF, Wu2018UnsupervisedFL, Bachman2019LearningRB,  Misra2020SelfSupervisedLO, He2020MomentumCF, Chen2020ASF}. Several works perform analysis studies of contrastive learning to identify important factors \citep{Cole2021WhenDC, Zhao2021WhatMI} or how contrastive models differ from supervised learning \citep{Yang2020TransferLO, Ericsson2021HowWD, Karthik2021TradeoffsBC}. \citet{HaoChen2021ProvableGF} study contrastive learning using the concept of an augmentation graph. This model assumes the fraction of non-label preserving augmentations is ``extremely small;'' interestingly, we show in practice this can 
be quite large and still yield good performance. \citet{wang2022chaos} theoretically study contrastive learning under an assumption of label-preserving augmentations, though they show that such an assumption alone does not suffice to learn. Most relevant to our work, \citet{Tian2020WhatMF, Ericsson2021WhyDS} study how the information shared between different views impacts learning of downstream tasks. We complicate this picture by analyzing the foundation model setting where a single model must learn features for multiple tasks that are not known in advance. In this setting, we find that label-destroying perturbations, thought to be harmful by \citet{Tian2020WhatMF}, are useful for preventing one feature from suppressing others.

\paragraph{Feature suppression} Our work is closely connected to the notion of \textit{feature suppression} \citep{Hermann2020WhatSF}, where the presence of one feature can crowd out or suppress the learning of other features. Several works have explored the relevance of this concept in contrastive learning. For example, the original SimCLR paper \citep{Chen2020ASF} noted that color jitter augmentation was necessary to prevent the network from using only the color profile of the input to solve the contrastive task. Followup work \citep{Chen2021IPCL} explores this phenomenon in more detail, characterizing how different hyperparameters and dataset features affect feature suppression. Other works have attempted to address feature suppression in contrastive learning, either via auxiliary losses \citep{Li2020AddressingFS} or by modifying representations in the latent space \citep{Robinson2021CanCL}. Our work relates to these in two ways. First, we empirically and theoretically investigate feature suppression as an alternate rationale for the role of augmentations, as opposed to invariance. Second, we show that an existing method, viewmaker networks \citep{Tamkin2021ViewmakerNL}, can identify and potentially neutralize suppressing features in an interpretable way, resulting in better performance than expert augmentations. These insights may also generalize to other self-supervised learning settings, such as language modeling, where multiple features may exist in competition \citep{tamkin2020language}.

\vspace{-0.1cm}

\paragraph{Spurious correlations and shortcut features} Outside the framing of feature suppression, several other works explore how classifiers can learn or make use of unwanted features. Shortcut features \citep{Geirhos2020ShortcutLI} describe often-simple features (e.g. the average color of an input) which are learned by networks at the expense of more salient features (e.g. the object class). This notion is connected to spurious correlations \citep{simon1954spurious} in deep learning which have been explored extensively \citep{Sagawa2019DistributionallyRN, Sagawa2020AnIO, Srivastava2020RobustnessTS, Tu2020AnES, Xiao2021NoiseOS}, including in the context of self-supervised learning \citep{Minderer2020AutomaticSR, tamkin2022active}. Other works have also performed theoretical analysis of how related dynamics affect learning in the supervised setting \citep{Li2019TowardsET, Shah2020ThePO}. Our work suggests that viewmaker networks may be a useful tool as well here---both as an interpretability tool to visualize the different features a network relies on, and as a way to reduce reliance on particular features without completely destroying the information.

\vspace{-0.18cm}

\section{Discussion and Conclusion}
\label{sec:conclusion}
We have presented several different arguments complicating the commonly-articulated belief that the role of augmentations is to specify invariances for a contrastive learning model. First, common augmentations such as brightness shifts would result in useless representations if networks became truly invariant to them. Second, viewmaker networks succeed at contrastive learning despite learning label-destroying perturbations which drop out different features in the input. Finally, we present an analysis of a linear contrastive setting where we prove that label-destroying views actually have a positive effect on contrastive learning if the goal is to avoid learning one feature at the expense of others.

Our work has limitations. For example, our empirical analysis is limited to four synthetic datasets spanning vision and audio, whereas self-supervised learning may be applied to naturalistic data spanning a much wider range of modalities \citep{Tamkin2021DABSAD, tamkindabs}. In addition, our theoretical analysis considers a linear contrastive setting, whereas current neural networks are highly nonlinear. Improving upon both of these fronts is an exciting area for future work.

On the other hand, understanding augmentations as dropping out easy features suggests possible ways of improving the performance of self-supervised learning. For instance, viewmaker networks cap the extent to which views can differ from the underlying image. Our analysis here suggests the role of this cap indirectly sets the dropout rate of different features in the input; some way of directly encoding this objective may yield more flexible and performant viewmaker approaches.

The challenge of learning a broad range of useful features lies at the heart of self-supervised learning. We hope our work sheds light on this challenge in contrastive learning, especially as these objectives continue to develop and are applied more broadly and at larger scale. 

\newpage
\paragraph{Ethics Statement} Our work is centered on conceptual understanding, making it challenging to confidently predict societal impacts. Better conceptual understanding of existing methods may help us understand the failure modes and successes of current models better, which may have positive impacts. However, if this understanding enables the development of more powerful methods, the work may indirectly accentuate whatever social impacts (positive or negative) those applications have.

\paragraph{Reproducibility Statement} We include hyperparameters and experimental settings for our experiments in \Cref{sec:viewmaker}, and complete statements of our theoretical results in \Cref{appendix:proofs}. Our code is released at \url{https://github.com/xiluohe/feature-dropout}.

\bibliography{iclr2023_conference}
\bibliographystyle{iclr2023_conference}

\newpage
\appendix
\section{Code release}
Our code is available at \url{https://github.com/xiluohe/feature-dropout}.

\section{Formalization of observation in \Cref{sec:invariance-in-practice}}
\label{sec:proof-invariance}

\begin{definition}[Invariance]
A function $f: \mathbb R^m \to \mathbb R^n$ is invariant to a set of transformations $G$ if and only if $f \circ g(x) = f(x)$ for all $x \in \mathbb R^m$ and for all $g \in G$.
\end{definition}

\begin{definition}[Augmentation collision]
An augmentation collision occurs if, for two inputs $x_a, x_b$ and set of transformations $G$, there exist $g_a^{(1)}, \ldots, g_a^{(n_a)}, g_b^{(1)}, \ldots, g_b^{(n_b)} \in G$ for some $n_a, n_b \in \mathbb{N}$ such that $g_a^{(1)} \circ \ldots \circ g_a^{(n_a)} (x_a) = g_b^{(1)} \circ \ldots \circ g_a^{(n_b)} (x_b)$.
\end{definition}

\begin{observation}
If there exists an augmentation collision for inputs $x_a, x_b$ and transformation set $G$, and $f$ is invariant to $G$, then $f(x_a) = f(x_b)$.
\end{observation}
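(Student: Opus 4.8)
The plan is to unwind the definition of augmentation collision and apply the invariance hypothesis repeatedly. First I would record the elementary fact that invariance to $G$ propagates through finite compositions: if $f \circ g = f$ (as functions $\mathbb R^m \to \mathbb R^n$) for every $g \in G$, then $f \circ (g^{(1)} \circ \cdots \circ g^{(n)}) = f$ for every finite sequence $g^{(1)}, \ldots, g^{(n)} \in G$. This follows by a one-line induction on $n$: the base case $n=0$ is trivial (empty composition is the identity), and for the inductive step, writing $h = g^{(2)} \circ \cdots \circ g^{(n)}$, we have $f \circ (g^{(1)} \circ h) = (f \circ g^{(1)}) \circ h = f \circ h = f$, using the invariance of $f$ to the single transformation $g^{(1)} \in G$ and then the inductive hypothesis applied to $h$.

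Next, by the definition of augmentation collision applied to $x_a, x_b$ and $G$, there exist $n_a, n_b \in \mathbb N$ and transformations $g_a^{(1)}, \ldots, g_a^{(n_a)}, g_b^{(1)}, \ldots, g_b^{(n_b)} \in G$ with
\[
g_a^{(1)} \circ \cdots \circ g_a^{(n_a)}(x_a) \;=\; g_b^{(1)} \circ \cdots \circ g_b^{(n_b)}(x_b).
\]
Applying $f$ to both sides of this equality and then invoking the propagated-invariance fact from the previous paragraph (once for the $a$-side composition, once for the $b$-side composition) yields
\[
f(x_a) \;=\; f\!\left(g_a^{(1)} \circ \cdots \circ g_a^{(n_a)}(x_a)\right) \;=\; f\!\left(g_b^{(1)} \circ \cdots \circ g_b^{(n_b)}(x_b)\right) \;=\; f(x_b),
\]
which is exactly the claimed conclusion.

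There is essentially no hard step here; the only thing worth being careful about is stating the composition-invariance lemma cleanly so that the induction is rigorous rather than hand-waved with ``$\ldots$''. I would also note in passing that the observation does not require $G$ to be closed under composition or to contain identities — only that each individual $g$ used in the collision lies in $G$ and that $f$ is invariant to each such $g$ — so the statement is exactly as general as its hypotheses allow.
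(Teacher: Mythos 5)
Your proof is correct and follows essentially the same route as the paper's: apply $f$ to both sides of the collision equality and cancel the compositions by invariance. The only difference is that you make the induction behind ``invariance propagates through finite compositions'' explicit, whereas the paper absorbs it into a single ``applying invariance'' step.
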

\begin{proof}
By the definition of an augmentation collision, $g_a^{(1)} \circ \ldots \circ g_a^{(n_a)} (x_a) = g_b^{(1)} \circ \ldots \circ g_a^{(n_b)} (x_b)$. By the definition of a function, we have $f \circ g_a^{(1)} \circ \ldots \circ g_a^{(n_a)} (x_a) = f \circ g_b^{(1)} \circ \ldots \circ g_a^{(n_b)} (x_b)$. Applying invariance, we obtain $f(x_a) = f(x_b)$.
\end{proof}
Applying this observation, we observe that if the downstream labeling function $f$ is invariant to a class of augmentations, then there cannot be an augmentation collision for inputs with different labels. However, common augmentations such as brightness shifts can reduce any image to a black or white image, resulting in an augmentation collapse between any two inputs. 

\section{Additional feature dropout experiments}

\subsection{Quantifying the importance of feature dropout}
\label{sec:dropout-masking}

To assess the importance of label-destroying augmentations to the success of the viewmaker, we experiment with a setup where the viewmaker cannot destroy the information in the object class. To do this, we compute a mask around the object and zero out any perturbation from the viewmaker within that mask. We then perform pretraining and transfer as usual.

As we report in \Cref{table:feature-masking}, the accuracy of the CIFAR-10 class label drops precipitously, as expected. At the same time, the accuracy of two of the other objects remains mostly constant (shape and digits), while the accuracy for letters declines modestly (perhaps because the color of the letter is now able to suppress the learning of the letter class.

\begin{table}[h!]
\small
\centering
\begin{tabular}{lcc|cccl}\toprule
           & Viewmaker (C-10)  & Mask-Viewmaker (C-10) & Viewmaker (Object) & Mask-Viewmaker (Object)\\\midrule
C+Shape	& 79.8	& 26.0	& 100.0	& 95.8 \\
C+Digit	& 69.3	& 50.7	& 94.3	    & 95.0 \\
C+Letter	& 71.9	& 23.2	& 96.9	& 71.8 \\
\bottomrule
\end{tabular}
\caption{\textbf{Experiments with a masked viewmaker  which is unable to destroy the object class.} Transfer accuracy on CIFAR-10 (C-10) and the object task (Shape, Digit, or Letter). The Mask-Viewmaker has its perturbation masked such that it cannot destroy the label of the object. This results in the features in the object suppressing the CIFAR-10 accuracy, while leaving the object accuracy relatively unscathed.}
\label{table:feature-masking}
\end{table}

\begin{figure}[h!]
    \centering
\includegraphics[width=4.7cm]{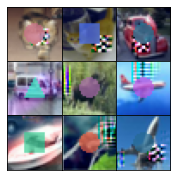}
\includegraphics[width=4.7cm]{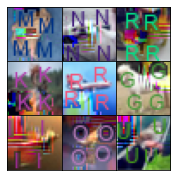}
\includegraphics[width=4.7cm]{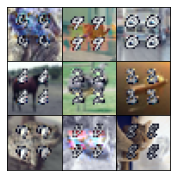}
    \caption{\textbf{Non-label destroying Viewmaker perturbation examples.} }
    \label{fig:mfd_plots}
\end{figure}

\subsection{Quantifying the degree of feature dropout}
\label{sec:dropout-histogram}

We perform an exploratory analysis to testing how well different views drop out the features in an input. We augment a 1,200 examples (CIFAR-10 image plus an overlaid object) using a given augmentation policy (either the expert or viewmaker augmentations). We then encode the model with a classifier trained off of the other augmentation policy (i.e. expert for viewmaker augmentations or the reverse) in order to test how well the augmentations drop out the features. We use a different encoder to see the effects of the augmentations prior to the encoder having a chance to adapt to them. 

We observe a bimodal behavior for the viewmaker views, shown in \Cref{fig:dropout-hist}, suggesting that the model is adapting to the semantics of the input and has learned to stochastically drop out the simple feature some fraction of the time. By contrast, the expert views display no such structure. Using the corresponding encoder and views leads to models performing uniformly well, as shown in \Cref{fig:dropout-hist-same}. 

\begin{figure}
    \centering
    \begin{subfigure}{0.33\linewidth}
        \includegraphics[width=\linewidth]{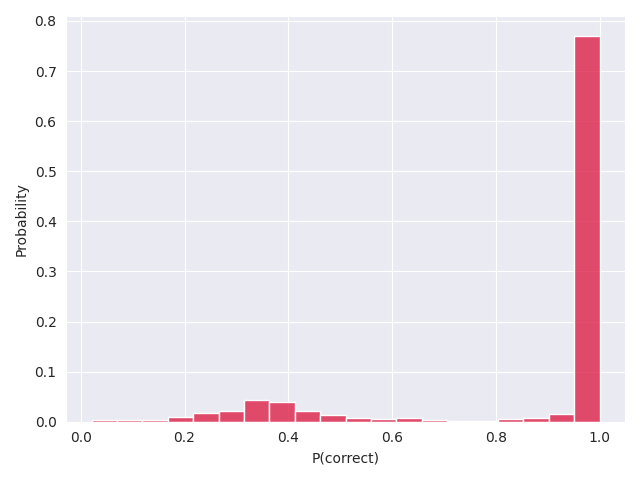}    
        \caption{Viewmaker / Shapes}
    \end{subfigure}%
    \hfill
    \begin{subfigure}{0.33\linewidth}
        \includegraphics[width=\linewidth]{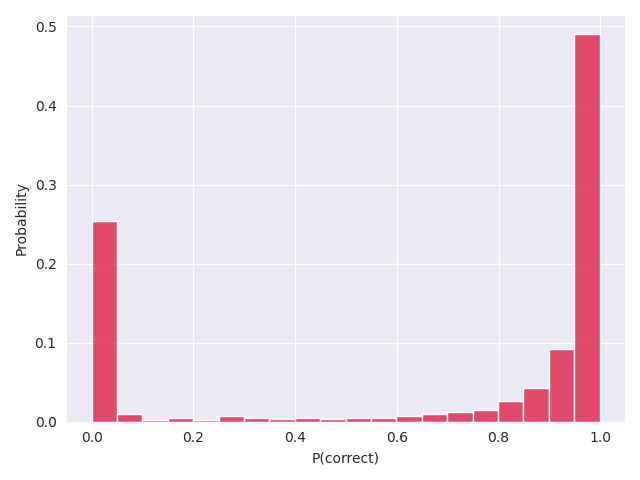}    
        \caption{Viewmaker / Letters}
    \end{subfigure}%
    \hfill
    \begin{subfigure}{0.33\linewidth}
        \includegraphics[width=\linewidth]{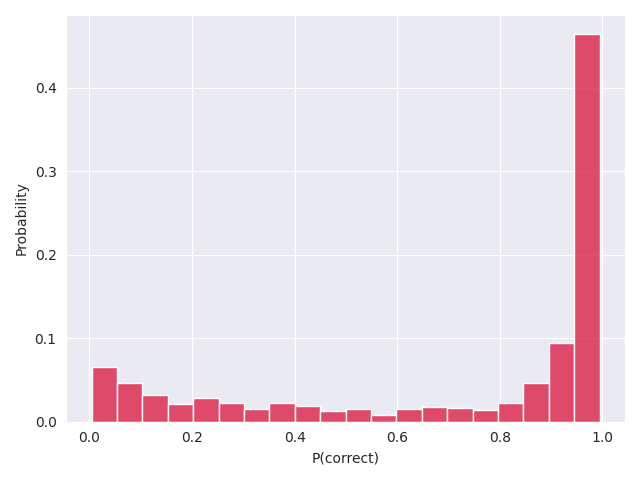}    
        \caption{Viewmaker / Digits}
    \end{subfigure}%
    \\
    \begin{subfigure}{0.33\linewidth}
        \includegraphics[width=\linewidth]{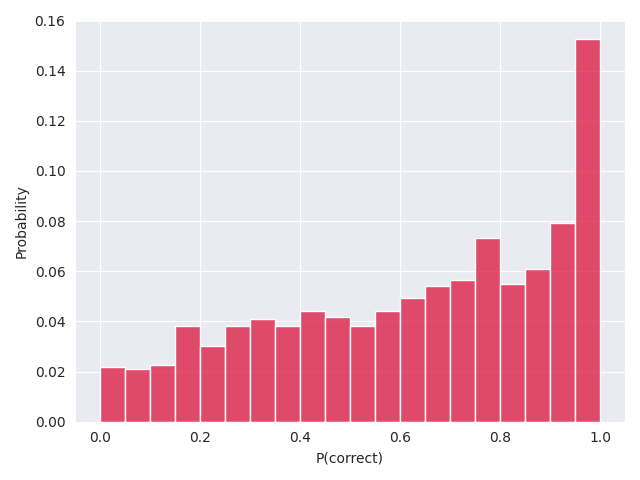}    
        \caption{Expert / Shapes}
    \end{subfigure}%
    \hfill
    \begin{subfigure}{0.33\linewidth}
        \includegraphics[width=\linewidth]{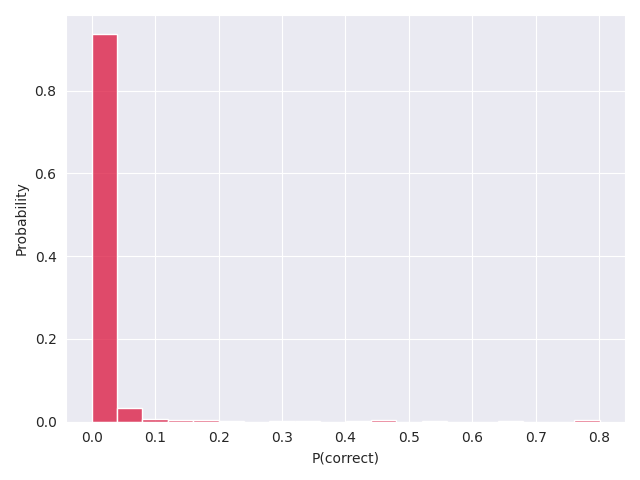}    
        \caption{Expert / Letters}
    \end{subfigure}%
    \hfill
    \begin{subfigure}{0.33\linewidth}
        \includegraphics[width=\linewidth]{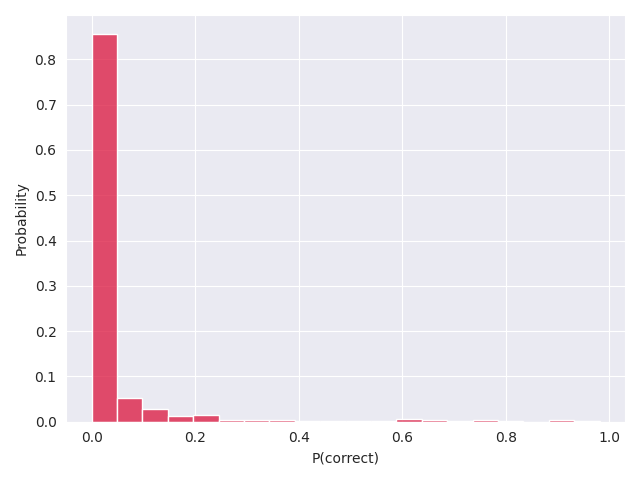}    
        \caption{Expert / Digits}
    \end{subfigure}%
    \caption{\textbf{Viewmaker augmentations stochastically drop out simple features added to the input.} Probability of the correct answer for different augmentations (Viewmaker or Expert) and different examples from different datasets (Shapes, Letters, Digits). Each histogram shows a single example from each dataset randomly augmented 1200 times, and the corresponding probabilities of the correct answer. The viewmaker augmentations display a bimodal structure, indicating that the simple feature is selectively either destroyed or preserved. The expert augmentations by contrast lack such structure, reflecting their lack of adaptation to the structure of each input.}
    \label{fig:dropout-hist}
\end{figure}

\begin{figure}
    \centering
    \begin{subfigure}{0.33\linewidth}
        \includegraphics[width=\linewidth]{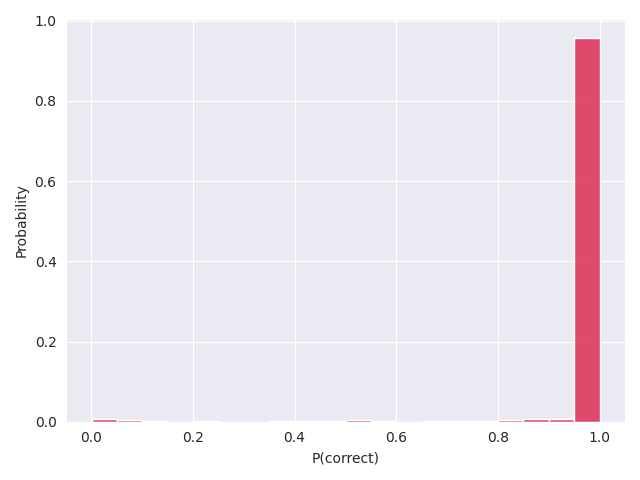}    
        \caption{Viewmaker / Shapes}
    \end{subfigure}%
    \hfill
    \begin{subfigure}{0.33\linewidth}
        \includegraphics[width=\linewidth]{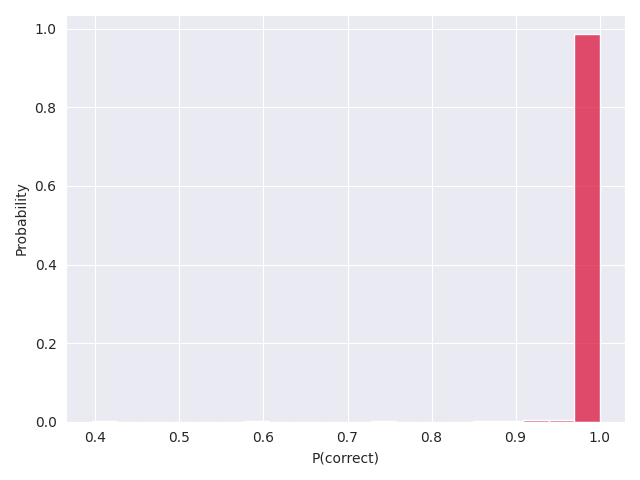}    
        \caption{Viewmaker / Letters}
    \end{subfigure}%
    \hfill
    \begin{subfigure}{0.33\linewidth}
        \includegraphics[width=\linewidth]{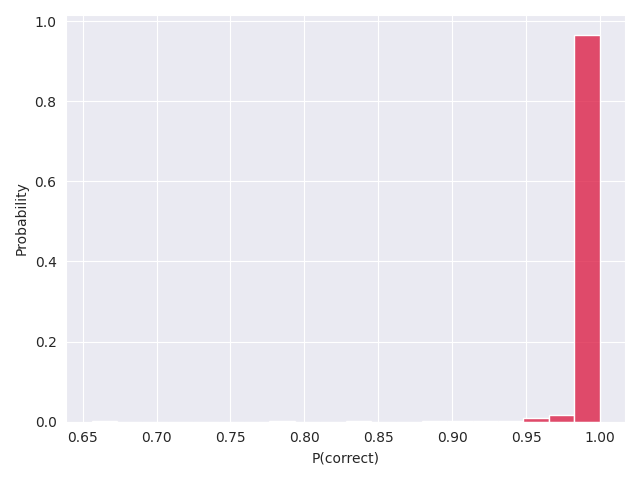}    
        \caption{Viewmaker / Digits}
    \end{subfigure}%
    \\
    \begin{subfigure}{0.33\linewidth}
        \includegraphics[width=\linewidth]{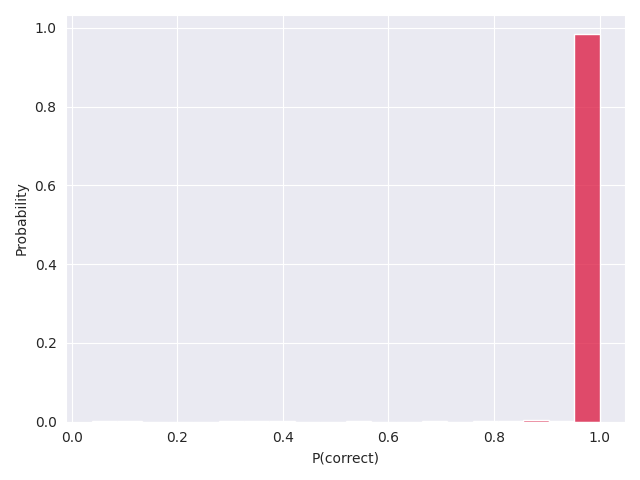}    
        \caption{Expert / Shapes}
    \end{subfigure}%
    \hfill
    \begin{subfigure}{0.33\linewidth}
        \includegraphics[width=\linewidth]{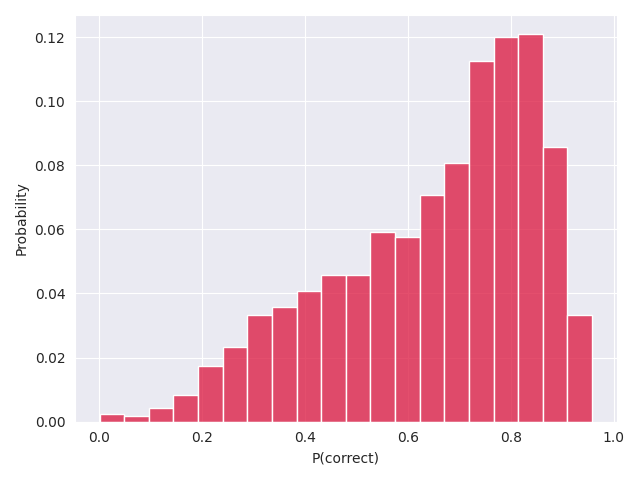}    
        \caption{Expert / Letters}
    \end{subfigure}%
    \hfill
    \begin{subfigure}{0.33\linewidth}
        \includegraphics[width=\linewidth]{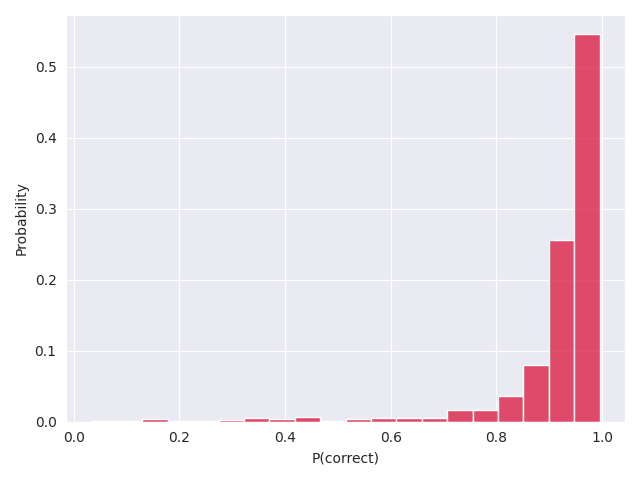}    
        \caption{Expert / Digits}
    \end{subfigure}%
    \caption{\textbf{Evaluating views with their respective encoder does not reveal bimodal structure for viewmaker or expert views.} Details are the same as in \Cref{fig:dropout-hist}, with the exception that views are evaluated on their corresponding encoder.}
    \label{fig:dropout-hist-same}
\end{figure}

\section{End-to-end Simulations of Linear Setting}\label{apx:sims}

We empirically test the performance of the full trajectory of gradient descent when we add noise to the data. We study a setting with one weak feature with correlations coefficient $\alpha_1 \leq 0.5$, and 50 dominant features with $\alpha_k = 1$ for $k = 2, \cdots, 51$. We compare two approaches run on the same data: in the first approach, we run 150 iterations of GD without adding noise. In the second, we run 50 iterations of GD without noise, and then add noise to the dominant features for the remaining 100 iterations. 

In Figure~\ref{fig:gd_plots}(top), we compare the alignment of Feature 1 (the weak feature) and Feature 2 (one of the dominant features) to the ground truth in the two approaches. We observe that adding noise consistently accelerates the learning of the weak feature (blue), with little cost to the dominant features (red). The affect is consistent among many choices for $\alpha_1$, the correlation coefficient of the weak feature. We also plot in Figure~\ref{fig:gd_plots}(bottom) the probability of predicting the correct class (pair) of the view under both approaches. We observe that this probability drops sharply when we add noise, which we believe is the mechanism for faster learning with noise.

We remark that we chose to add noise to all the dominant features (instead of a single $k'$ a in our theorem) to accentuate the effect of adding noise. We observed a similar effect, but smaller, when we added noise to fewer features, or when there were fewer than $50$ dominant features.

\begin{figure}
    \centering
\includegraphics[width=3.7cm]{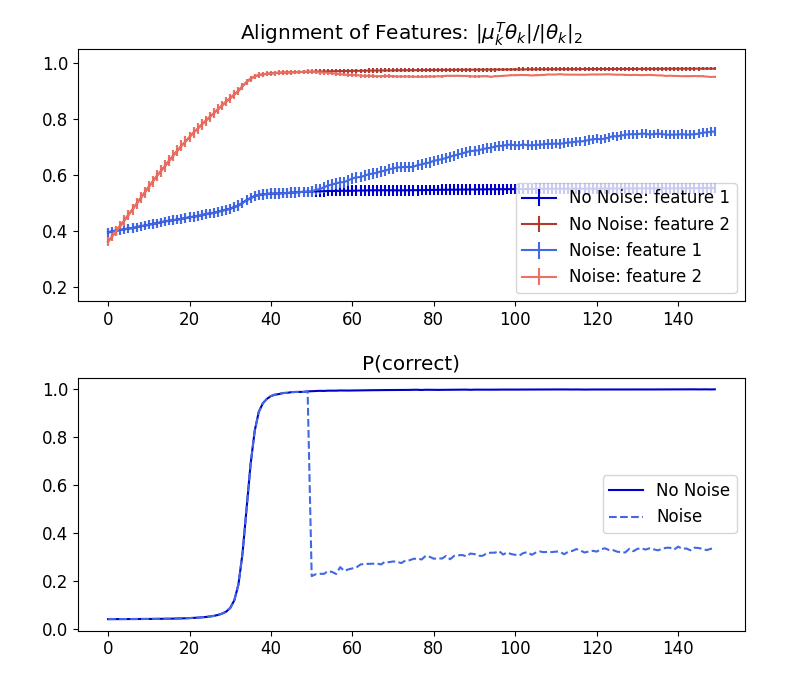}
\includegraphics[width=3.7cm]{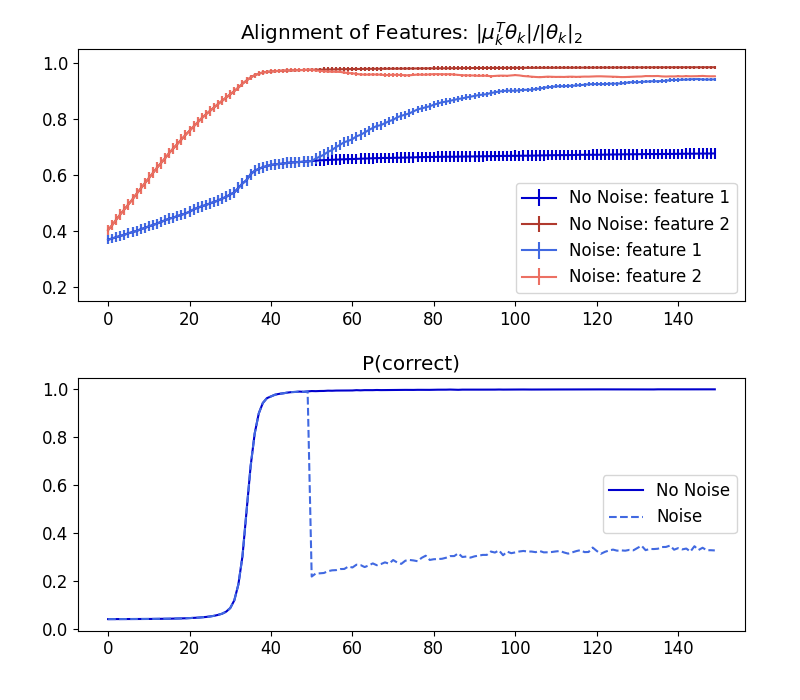}
\includegraphics[width=3.7cm]{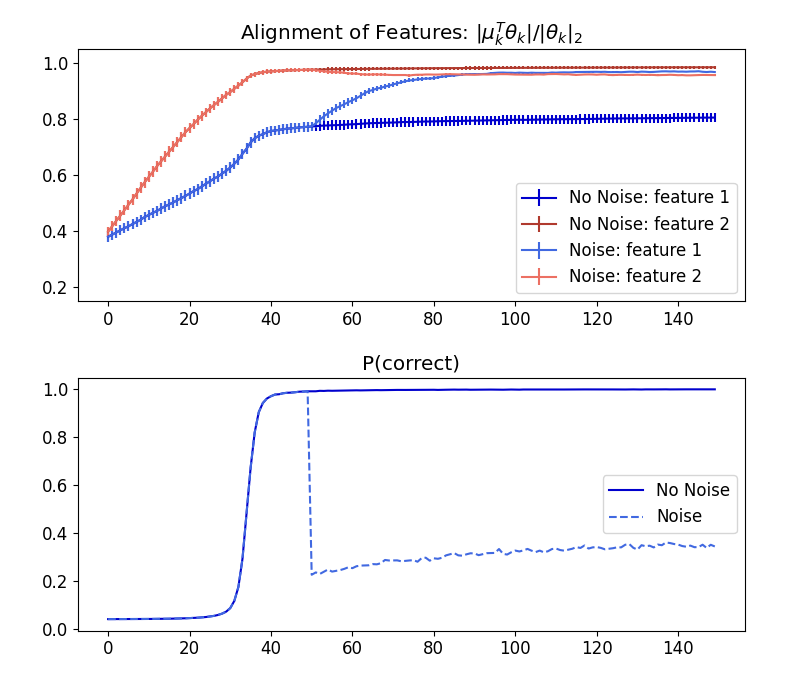}
\includegraphics[width=3.7cm]{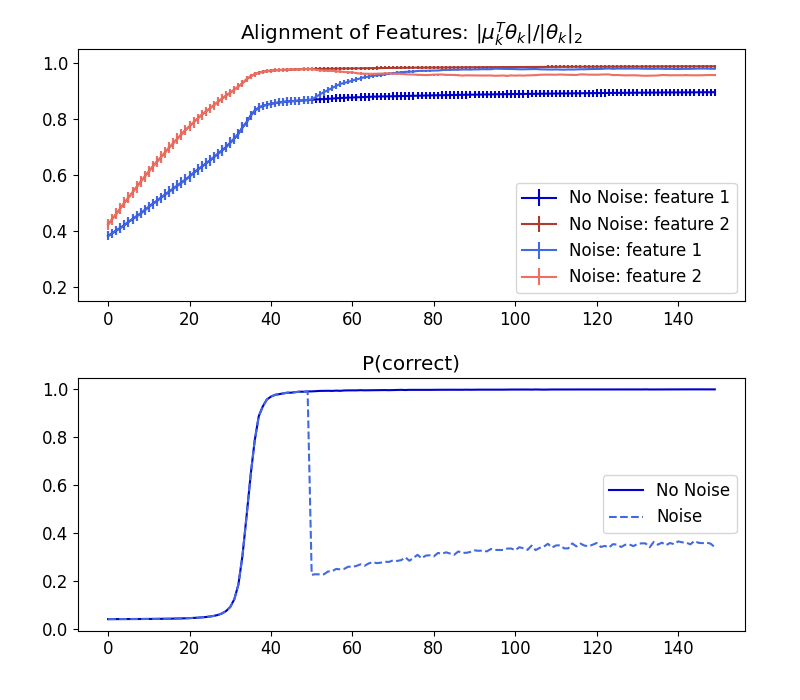}
    \caption{Alignment of features with verses without added noise. From left to right: $\alpha_1 = 0.125, 0.25, 0.375, 0.5$. The top plots show the alignment of Features 1 (weak) and 2 (dominant) to the ground truth; the bottom plots shows the probability of predicting the correct augmentation pair from the batch. Standard deviation bars are shown for the mean alignment over $200$ runs. We used dimension $d = 5$, and a batch size of $m = 25$.}
    \label{fig:gd_plots}
\end{figure}

\section{Full proofs of propositions and theorems}
\label{appendix:proofs}

We begin by stating and proving Lemma~\ref{lem:downstream} on the downstream classification accuracy.
\begin{lemma}[Downstream classification accuracy]\label{lem:downstream}
    Suppose we draw labeled data points $(u, y) \in \mathbb{R}^{K \times d} \times \{+1, 1\}$, where as before, $u_k \sim \mathcal{N}(0, I_d)$ for $k \in [K]$, and the label is given by $\on{sign}(u_k^T\mu_k)$. Then the best  linear classifier $\va \in \mathbb{R}^K$ on the representations $f_{\Theta}(u) \in \mathbb{R}^K$ achieves an test error of $\frac{1}{\pi}\arccos\left(\frac{|\mu_k^T\theta_k|}{\|\theta_k\|_2}\right)$. That is
    \begin{equation}
        \min_{\va \in \mathbb{R}^K} \Pr_u[\on{sign}(\va^Tf_{\Theta}(u)) \neq \on{sign}(\mu_k^Tu_k)] = \frac{\arccos\left(\frac{|\mu_k^T\theta_k|}{\|\theta_k\|_2}\right)}{\pi}.
    \end{equation}
    Thus if $\theta_k$ and $\mu_k$ are orthogonal, then the test error is $50\%$. If the angle between $\theta_k$ and the $\pm \mu_k$ is zero, then we achieve perfect classification accuracy. 
\end{lemma}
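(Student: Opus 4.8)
The plan is to reduce the problem to a two-dimensional Gaussian sign-agreement computation. First I would fix the downstream task index $k$ and expand the score of a candidate linear classifier: $\va^T f_\Theta(u) = \sum_j a_j (u_j^T\theta_j)$. Since the rows $u_1,\ldots,u_K$ are independent with $u_j \sim \mathcal{N}(0, I_d)$, each $u_j^T\theta_j \sim \mathcal{N}(0,\|\theta_j\|_2^2)$, and the ``off-task'' terms $N := \sum_{j\neq k} a_j(u_j^T\theta_j)$ form a mean-zero Gaussian that is independent of $u_k$, hence of the label $y = \on{sign}(u_k^T\mu_k)$. So the classifier's score is $a_k(u_k^T\theta_k) + N$ with $N \perp u_k$.

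Next I would identify the joint law of the label-defining quantity and the score. The pair $\big(u_k^T\mu_k,\ a_k(u_k^T\theta_k)+N\big)$ is jointly Gaussian and mean zero, with $\Var(u_k^T\mu_k) = \|\mu_k\|_2^2 = 1$, $\Var\big(a_k(u_k^T\theta_k)+N\big) = a_k^2\|\theta_k\|_2^2 + \Var(N)$, and covariance $a_k\,\mu_k^T\theta_k$. I would then invoke the classical identity that for a mean-zero bivariate normal $(X,Y)$ with correlation $\rho$, $\Pr[\on{sign}(X)\neq\on{sign}(Y)] = \tfrac1\pi\arccos(\rho)$, which follows from the orthant-probability formula $\Pr[X>0,Y>0] = \tfrac14 + \tfrac{\arcsin\rho}{2\pi}$ together with the symmetry $(X,Y)\mapsto(-X,-Y)$. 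This yields the test error of the classifier $\va$ as $\tfrac1\pi\arccos\!\Big(\tfrac{a_k\mu_k^T\theta_k}{\sqrt{a_k^2\|\theta_k\|_2^2+\Var(N)}}\Big)$ (equal to $\tfrac12$ when $a_k = 0$).

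Finally I would optimize over $\va$. Because $\arccos$ is decreasing, the error is minimized by making the correlation as large as possible. Since $\Var(N)\geq 0$ with equality precisely when $a_j = 0$ for all $j\neq k$, and since with $N=0$ the correlation equals $\on{sign}(a_k)\,\tfrac{\mu_k^T\theta_k}{\|\theta_k\|_2}$, the optimum is attained by taking $N = 0$ and $\on{sign}(a_k) = \on{sign}(\mu_k^T\theta_k)$, giving error $\tfrac1\pi\arccos\!\Big(\tfrac{|\mu_k^T\theta_k|}{\|\theta_k\|_2}\Big)$, as claimed; the degenerate cases $\theta_k = 0$ or $\mu_k^T\theta_k = 0$ give error $\tfrac12 = \tfrac1\pi\arccos(0)$, consistent with the formula, and adding an affine bias does not help by the same negation symmetry. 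I do not expect a genuine obstacle here; the only points requiring care are a clean statement of the bivariate-Gaussian sign identity and making the ``extra independent noise can only hurt'' reduction fully rigorous, including the $a_k = 0$ corner case — the rest is bookkeeping.
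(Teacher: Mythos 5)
Your proof is correct and follows essentially the same route as the paper's: identify the optimal classifier as (a multiple of) $\on{sign}(\mu_k^T\theta_k)e_k$ and compute the sign-agreement probability of two correlated mean-zero Gaussians via the orthant formula. The only difference is that the paper dismisses the optimality of this classifier as ``easy to see,'' whereas you justify it by showing that any nonzero weight on the other coordinates contributes independent noise that strictly decreases the correlation, and you handle the $a_k=0$ and $\mu_k^T\theta_k=0$ corner cases --- worthwhile details to make explicit.
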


\begin{proof}
It is easy to see that the best linear classifier $\va$ will (up to scaling) be equal to the vector $\on{sign}(\mu_k^T\theta_k)e_k$. Such a classifier predicts the correct sign whenever $\on{sign}(\va^Tf_{\Theta}(u)) = \on{sign}(\mu_k^T\theta_k)\on{sign}(\theta_k^Tu_k)$ equals $\on{sign}(\mu_k^Tu_k)$, which occurs exactly a $1 - \frac{\arccos\left(\frac{|\mu_k^T\theta_k|}{\|\theta_k\|_2}\right)}{\pi}$ fraction of the time.
\end{proof}

In the rest of this section, we prove our main theoretical result, Theorem~\ref{thm:main}, which shows that $\arccos\left(\frac{|\mu_k^T\theta_k|}{\|\theta_k\|_2}\right)$ decreases faster in expectation during gradient descent if we add noise to the $k'$ feature.

\subsection{Notation.}
We let $\delta_{ij}$ denote the $\delta$-function which equals $1$ if $i = j$ and $0$ otherwise. For a parameter $\Theta = \{\theta_k\}_{k \in [K]}$, we let $\theta_k^{\parallel} := \mu_k\mu_k^T\theta_k$ be the projection of $\theta_k$ in the $\mu_k$ direction. We let $\theta_k^{\perp} = \theta_k - \theta_k^{\parallel}$ be the projection of $\theta_k$ orthogonal to the feature $\mu_k$. 

Throughout this section, we consider the ground truth directions to be fixed, and we fix some initial correlation vector $\valpha$. We let $\mathbb{P}_\valpha$ denote the distribution from which the pair $(u, v)$ is drawn from the Gaussian distribution described in Section~\ref{sec:theory} with correlation coefficients $\valpha$. When unspecified, the variables $U, V$ are drawn from the distribution $\mathbb{P}_\valpha^m$. Since we study what happens when we vary $\alpha_{k'}$, for $x \in [0, 1]$, we use the shorthand $\mathbb{P}_x$ to denote the distribution $\mathbb{P}_{\valpha(x)}^m$, where $\alpha(x)_{k'} = x$, and $\alpha(x)_{k} = \alpha_k$ for all other $k$.

We denote $\mathcal{L}_i(\Theta; U, V) = \on{CE}(\{p_{ij}\}_{j \in [m]}, e_i) = -\log(p_{ii})$, which we abbreviate by $\mathcal{L}_i$. When it is clear that we are considering $\mathcal{L}_i$ for some fixed $i$, we omit the superscripts on the $i$th data point or its pair. That is, we denote $u_k := u_k^{(i)}$ and $v_k := v_k^{(i)}$.

\subsection{Preliminaries}

The following facts about of the derivative of the cross entropy loss are easy derived. 
\begin{lemma}\label{main_deriv}
    \begin{equation}
    \frac{\partial \mathcal{L}_i}{\partial \Theta} = \sum_j \left(p_{ij} - \delta_{ij}\right) \frac{\partial z_{ij}}{\partial \Theta} = \sum_i \sum_{j \neq i} p_{ij}\left(\frac{\partial z_{ij}}{\partial \Theta} - \frac{\partial z_{ii}}{\partial \Theta}\right),
\end{equation}
where \begin{equation}
     \frac{\partial z_{ij}}{\partial \theta_k} = (u_k^{(i)}{v_k^{(j)}}^T + v_k^{(j)}{u_k^{(i)}}^T)\theta_k.
\end{equation}
\end{lemma}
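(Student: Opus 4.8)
\textbf{Proof plan for Lemma~\ref{main_deriv}.}
The statement is a direct differentiation of the cross-entropy loss $\mathcal{L}_i = -\log(p_{ii})$ where $p_{ij} = \exp(z_{ij})/\sum_{j'}\exp(z_{ij'})$, so the plan is a routine application of the chain rule followed by a bookkeeping rearrangement. First I would recall the standard softmax--cross-entropy identity: for a fixed row $i$, writing $\mathcal{L}_i = -z_{ii} + \log\sum_{j'}\exp(z_{ij'})$, we differentiate with respect to any parameter. Each $z_{ij}$ depends on $\Theta$, so by the chain rule $\frac{\partial \mathcal{L}_i}{\partial \Theta} = -\frac{\partial z_{ii}}{\partial \Theta} + \sum_{j}\frac{\exp(z_{ij})}{\sum_{j'}\exp(z_{ij'})}\frac{\partial z_{ij}}{\partial \Theta} = \sum_j (p_{ij} - \delta_{ij})\frac{\partial z_{ij}}{\partial \Theta}$, which is the first claimed equality. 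For the second equality, I would use that $\sum_j p_{ij} = 1$ to write $\sum_j (p_{ij} - \delta_{ij})\frac{\partial z_{ij}}{\partial\Theta} = \sum_{j\neq i} p_{ij}\frac{\partial z_{ij}}{\partial\Theta} + (p_{ii}-1)\frac{\partial z_{ii}}{\partial\Theta}$, and then substitute $p_{ii} - 1 = -\sum_{j\neq i}p_{ij}$ to obtain $\sum_{j\neq i}p_{ij}\left(\frac{\partial z_{ij}}{\partial\Theta} - \frac{\partial z_{ii}}{\partial\Theta}\right)$. (The outer $\sum_i$ appearing in the displayed statement is, I believe, a typo for the total-loss version; for the single-index $\mathcal{L}_i$ there is no outer sum, and I would note this.)

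Next I would establish the formula for $\frac{\partial z_{ij}}{\partial\theta_k}$. Recall $z_{ij} = \sum_k (\theta_k^T u_k^{(i)})(\theta_k^T v_k^{(j)})$, so only the $k$th summand depends on $\theta_k$. Writing $a = \theta_k^T u_k^{(i)}$ and $b = \theta_k^T v_k^{(j)}$ as scalars, we have $z_{ij} \ni ab$ and $\nabla_{\theta_k}(ab) = b\,u_k^{(i)} + a\,v_k^{(j)} = (\theta_k^T v_k^{(j)})u_k^{(i)} + (\theta_k^T u_k^{(i)})v_k^{(j)}$. Recognizing $(\theta_k^T v_k^{(j)})u_k^{(i)} = u_k^{(i)}{v_k^{(j)}}^T\theta_k$ and similarly $(\theta_k^T u_k^{(i)})v_k^{(j)} = v_k^{(j)}{u_k^{(i)}}^T\theta_k$, this collapses to $\left(u_k^{(i)}{v_k^{(j)}}^T + v_k^{(j)}{u_k^{(i)}}^T\right)\theta_k$, as claimed. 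This rank-two-symmetric-matrix form is exactly the shape that will be convenient downstream, since it makes $\nabla_{\theta_k} z_{ij}$ linear in $\theta_k$ with a symmetric operator.

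There is essentially no obstacle here — the only things to be careful about are (i) matching the paper's convention that $\frac{\partial}{\partial\Theta}$ collects the per-row gradients $\frac{\partial}{\partial\theta_k}$ into a $K\times d$ object, so that the two displayed equations are consistent row-by-row; and (ii) the apparent stray $\sum_i$ in the second equality of the first display, which I would either silently read as the total loss $\mathcal{L} = \sum_i \mathcal{L}_i$ or flag as a typo. I would present the computation in two short displays (one for $\frac{\partial\mathcal{L}_i}{\partial\Theta}$, one for $\frac{\partial z_{ij}}{\partial\theta_k}$), each a couple of lines, and conclude.
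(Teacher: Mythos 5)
Your derivation is correct and is exactly the routine chain-rule computation the paper intends (it states the lemma as "easily derived" and gives no separate proof), including the softmax identity $p_{ii}-1=-\sum_{j\neq i}p_{ij}$ for the second equality and the rank-two form $(u_k^{(i)}{v_k^{(j)}}^T + v_k^{(j)}{u_k^{(i)}}^T)\theta_k$ for $\partial z_{ij}/\partial\theta_k$. Your reading of the stray outer $\sum_i$ as a typo (or as the total-loss version $\mathcal{L}=\sum_i\mathcal{L}_i$) is also the right call.
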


We will also need the following facts on Gaussian random variables. The first, Stein's Lemma, is well known.

\begin{lemma}[Stein's Lemma]\label{lem:stein}
    \begin{equation}
        \mathbb{E}_{X \sim \mathcal{N}(0, \sigma^2)}[Xf(X)] = \sigma^2\mathbb{E}_{X \sim \mathcal{N}(0, \sigma^2)}[f'(X)].
    \end{equation}
\end{lemma}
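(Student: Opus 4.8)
The plan is to prove Stein's Lemma by a single integration by parts against the Gaussian density. Let $\phi(x) = \frac{1}{\sqrt{2\pi}\,\sigma}\exp(-x^2/(2\sigma^2))$ denote the density of $\mathcal{N}(0,\sigma^2)$, and observe the elementary identity $\phi'(x) = -\tfrac{x}{\sigma^2}\phi(x)$, equivalently $x\,\phi(x) = -\sigma^2\phi'(x)$. Substituting this into the definition of the left-hand side gives $\mathbb{E}_{X\sim\mathcal{N}(0,\sigma^2)}[Xf(X)] = \int_{\mathbb{R}} x f(x)\phi(x)\,dx = -\sigma^2\int_{\mathbb{R}} f(x)\phi'(x)\,dx$.

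Next I would integrate by parts to transfer the derivative from $\phi$ onto $f$: $-\sigma^2\int_{\mathbb{R}} f(x)\phi'(x)\,dx = -\sigma^2\Bigl(\bigl[f(x)\phi(x)\bigr]_{-\infty}^{\infty} - \int_{\mathbb{R}} f'(x)\phi(x)\,dx\Bigr) = \sigma^2\int_{\mathbb{R}} f'(x)\phi(x)\,dx = \sigma^2\,\mathbb{E}_{X\sim\mathcal{N}(0,\sigma^2)}[f'(X)]$, provided the boundary term $\bigl[f(x)\phi(x)\bigr]_{-\infty}^{\infty}$ vanishes. This chain of equalities is exactly the claimed identity, so the computational content of the lemma is a single line.

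The only genuine obstacle is the regularity bookkeeping needed to make the two displayed steps rigorous. One should assume $f$ is locally absolutely continuous with a.e.\ derivative $f'$, and that $\mathbb{E}[|Xf(X)|]<\infty$ and $\mathbb{E}[|f'(X)|]<\infty$, so that both sides are well defined and the integration by parts is legitimate. Under these assumptions $f(x)\phi(x)$ is integrable near $\pm\infty$ (since $|f(x)|\le |xf(x)|$ for $|x|\ge 1$) and its derivative $f'(x)\phi(x)+f(x)\phi'(x)$ is integrable, so $f(x)\phi(x)$ has a limit at $\pm\infty$ which, by integrability, must be $0$; hence the boundary term drops. In every application of the lemma in this paper the relevant $f$ is smooth with at most polynomial growth (compositions of softmax and polynomials in the Gaussian coordinates), so all of these conditions hold automatically, and I would state the lemma under that standing assumption.
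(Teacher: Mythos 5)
Your proof is correct: the identity $x\,\phi(x) = -\sigma^2\phi'(x)$ followed by integration by parts, with the boundary term killed under the stated integrability and absolute-continuity assumptions, is the canonical argument for Stein's Lemma, and your observation that every $f$ used in the paper (smooth, polynomially bounded functions of Gaussian coordinates) satisfies these hypotheses is accurate. The paper itself offers no proof --- it invokes the lemma as well known --- so there is nothing to compare against; your write-up simply supplies the standard justification the authors omitted.
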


The next two lemmas are proved in Section~\ref{sec:lemma_pfs}.

\begin{lemma}\label{lem:guassian_abs}
    There exists some constant $C$ such that following holds. If $\sigma \leq \frac{1}{C}$, and $0 \leq t \leq \frac{1}{\sigma}$, then for any $c \in \{0, 1, 2, 3\}$, and $X \sim \mathcal{N}(0, \sigma^2)$ we have \begin{equation}
        \mathbb{E}_X\left[|X|^c \exp(t|X|)\exp(tX^2)\right] \leq C\sigma^c.
    \end{equation}
    If additionally $d \in \{0, 1, 2, 3\}$, $\rho \leq \frac{1}{C}$ and $Y \sim \mathcal{N}(0, \rho^2)$, then
    \begin{equation}
        \mathbb{E}_X\left[|X|^c|Y|^d\exp(t|X|)\exp(|X Y|)\right] \leq C\sigma^c\rho^d.
    \end{equation}
\end{lemma}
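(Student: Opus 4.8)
The plan is to kill the two inequalities at once by rescaling to standard Gaussians, which turns every parameter that could be large into a uniformly bounded one, and then to estimate a single elementary one‑dimensional integral.

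\textbf{First bound.} Write $X = \sigma Z$ with $Z \sim \mathcal N(0,1)$, so that
$\mathbb{E}_X[\,|X|^c\exp(t|X|)\exp(tX^2)\,] = \sigma^c\,\mathbb{E}_Z[\,|Z|^c\exp(a|Z|)\exp(bZ^2)\,]$,
where $a := t\sigma$ and $b := t\sigma^2$. The hypotheses immediately give $a = t\sigma \le 1$ and $b = a\sigma \le \sigma \le 1/C$, so for $C \ge 4$ we have $b \le 1/4$. It therefore suffices to bound $\mathbb{E}_Z[\,|Z|^c\exp(a|Z|)\exp(bZ^2)\,]$ by a universal constant whenever $a \le 1$, $b \le 1/4$, $c \in \{0,1,2,3\}$. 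Writing this as $\frac{2}{\sqrt{2\pi}}\int_0^\infty z^c\exp(az)\exp\!\big(-(\tfrac12-b)z^2\big)\,dz$ and using $a \le 1$ together with $\tfrac12 - b \ge \tfrac14$, it is at most $C_c := \frac{2}{\sqrt{2\pi}}\int_0^\infty z^c e^{z}e^{-z^2/4}\,dz < \infty$, a finite constant (increasing in $c$, hence $\le C_3$). This gives the first bound with any $C \ge \max(4, C_3)$.

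\textbf{Second bound.} Substitute $X = \sigma Z$, $Y = \rho W$ with $Z, W \sim \mathcal N(0,1)$ independent, giving $\sigma^c\rho^d\,\mathbb{E}_{Z,W}[\,|Z|^c|W|^d\exp(a|Z|)\exp(\gamma|ZW|)\,]$ with $a = t\sigma \le 1$ and $\gamma := \sigma\rho \le 1/C^2 \le 1/4$. Now I would use $|ZW| \le \tfrac12(Z^2+W^2)$ to factor: $\exp(\gamma|ZW|) \le \exp(\tfrac\gamma2 Z^2)\exp(\tfrac\gamma2 W^2)$, so by independence the expectation is at most $\mathbb{E}_Z[\,|Z|^c\exp(a|Z|)\exp(\tfrac\gamma2 Z^2)\,]\cdot\mathbb{E}_W[\,|W|^d\exp(\tfrac\gamma2 W^2)\,]$. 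Each factor is exactly of the shape handled in the first step (with the ``$b$'' parameter equal to $\gamma/2 \le 1/4$, and linear coefficient $a \le 1$, resp.\ $0$), hence bounded by $C_c$ and $C_d$ respectively. Multiplying back by $\sigma^c\rho^d$ yields the claim with $C \ge C_3^2$.

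\textbf{Putting it together / obstacle.} One then takes $C := \max\{4,\, C_3^2\}$, where $C_3 = \frac{2}{\sqrt{2\pi}}\int_0^\infty z^3 e^{z}e^{-z^2/4}\,dz$; this single $C$ simultaneously serves as the smallness threshold ($\sigma,\rho \le 1/C$), guarantees $b, \gamma/2 \le 1/4$, and bounds the integrals. There is no serious obstacle here: the only point that requires any care is the bookkeeping of making one and the same constant play both roles, and the one genuinely useful observation is that after rescaling the a‑priori unbounded quantities become uniformly controlled ($t\sigma \le 1$, $t\sigma^2 \le \sigma \le 1/C$, $\sigma\rho \le 1/C^2$), leaving a net Gaussian tail $e^{-z^2/4}$ that absorbs the polynomial and the $e^{z}$ factor.
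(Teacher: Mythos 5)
Your proof is correct. For the first inequality you and the paper do essentially the same thing---absorb the factor $\exp(tX^2)$ into the Gaussian weight, which works because $t\sigma^2 \le \sigma \le 1/C$ keeps the effective variance under control---but you finish by rescaling to a standard normal and bounding the single explicit integral $\int_0^\infty z^c e^{z}e^{-z^2/4}\,dz$, whereas the paper routes through the moment generating function of the half-normal distribution and its derivatives. For the second inequality your argument genuinely diverges from the paper's: you decouple $Z$ and $W$ via $|ZW| \le \tfrac12(Z^2+W^2)$ and factor the expectation by independence, reducing everything to two instances of the first bound, while the paper conditions on $Y$, applies the first statement with effective linear coefficient $t+|Y|$, and then integrates the resulting $Y$-dependent bound over $Y$. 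Your decoupling is cleaner and sidesteps the bookkeeping of the $e^{(t+|Y|)^2/2}$ factor in the paper's iterated argument; the paper's conditioning would be the more flexible route if the cross term were not symmetric in $X$ and $Y$. Two cosmetic points: the assertion that $C_c$ is increasing in $c$ is unnecessary (just take $\max_{c\in\{0,1,2,3\}}C_c$ over the finite index set), and you should state explicitly that the factorization in the second bound uses the independence of $X$ and $Y$, which is how the lemma is invoked in the paper.
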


\begin{lemma}\label{lemma:bds}
For some universal constant $C$, for any $\sigma \in [0, 1]$, $t \geq 0$, $c \in \{0, 1, 2, 3, 4\}$, we have
\begin{equation*}
    \mathbb{E}_{X \sim \mathcal{N}(0, \sigma^2)}\left[\left(\exp(t|X|) - 1\right)|X|^c\right] \leq Ct\sigma^c.
\end{equation*}
\end{lemma}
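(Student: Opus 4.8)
The plan is to reduce the claim to a uniform bound on a Gaussian moment‑generating quantity via the integral representation of $\exp(t|X|) - 1$. First I would write, using the fundamental theorem of calculus, $\exp(t|X|) - 1 = \int_0^t |X|\exp(r|X|)\,dr$, multiply through by $|X|^c$, take the expectation over $X \sim \mathcal{N}(0,\sigma^2)$, and swap the order of integration (Tonelli applies, since the integrand is nonnegative), obtaining $\mathbb{E}_X[(\exp(t|X|) - 1)|X|^c] = \int_0^t \mathbb{E}_X[|X|^{c+1}\exp(r|X|)]\,dr$. (Equivalently, the elementary pointwise inequality $e^s - 1 \le s e^s$ for $s \ge 0$ gives $(\exp(t|X|)-1)|X|^c \le t|X|^{c+1}\exp(t|X|)$ directly, which lands in the same place.)

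Next I would bound the integrand uniformly over $r \in [0, t]$. Substituting $X = \sigma Z$ with $Z \sim \mathcal{N}(0,1)$, we have $\mathbb{E}_X[|X|^{c+1}\exp(r|X|)] = \sigma^{c+1}\,\mathbb{E}_Z[|Z|^{c+1}\exp(r\sigma|Z|)]$. Provided $t\sigma \le 1$ --- the same regime as the hypothesis $t \le 1/\sigma$ in the companion Lemma~\ref{lem:guassian_abs}, and implicitly in force here --- we get $r\sigma \le 1$, hence $\exp(r\sigma|Z|) \le \exp(|Z|)$, so the expectation is at most $\mathbb{E}_Z[|Z|^{c+1}\exp(|Z|)] =: C_c$, a finite constant because the Gaussian density decays faster than $e^{|Z|}$ grows; taking $C := \max_{c \in \{0,\dots,4\}} C_c$ makes it universal. (Alternatively one can invoke Lemma~\ref{lem:guassian_abs} itself, which bounds the larger quantity $\mathbb{E}_X[|X|^{c+1}\exp(r|X|)\exp(rX^2)]$ by $C\sigma^{c+1}$, after extending its stated exponent range from $\{0,1,2,3\}$ to cover $c+1 \le 5$.) Plugging this into the integral representation yields $\mathbb{E}_X[(\exp(t|X|)-1)|X|^c] \le \int_0^t C\sigma^{c+1}\,dr = Ct\sigma^{c+1} \le Ct\sigma^c$, where the final step uses $\sigma \le 1$; this is the claim, in fact slightly sharpened with $\sigma^{c+1}$ in place of $\sigma^c$.

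There is no genuinely hard step: the argument is just the integral representation together with a uniform Gaussian tail estimate, and the factor $t$ emerges automatically as the length of the interval of integration. The one point that requires care --- and which I would flag as the real content --- is that the inequality cannot hold for arbitrarily large $t$ (the left-hand side grows like $\exp(t^2\sigma^2/2)$), so one needs the restriction $t\sigma \le 1$ (equivalently $t \le 1/\sigma$), consistent with Lemma~\ref{lem:guassian_abs}; I would read the statement with this hypothesis. The remaining bookkeeping --- verifying $\mathbb{E}_Z[|Z|^{c+1}\exp(|Z|)] < \infty$ and that a single constant suffices across the finite set $c \in \{0,1,2,3,4\}$ --- is immediate.
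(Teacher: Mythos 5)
Your proof is correct, but it takes a genuinely different route from the paper's. The paper proves the lemma by induction on $c$: the base case $c=0$ is handled via the half-normal MGF (Lemma~\ref{lemma:mgf}), bounding $2e^{t^2/2}\Phi(t)-1$ by $Ct$, and the inductive step writes $|X|^c = X\cdot|X|^{c-1}\on{sign}(X)$ and applies Stein's Lemma (Lemma~\ref{lem:stein}) to trade a factor of $|X|$ for a $\sigma^2$ and a derivative, invoking Lemma~\ref{lem:guassian_abs} for the resulting $t|X|^{c-1}e^{t|X|}$ term. You instead use the pointwise inequality $e^s-1\le se^s$ and the rescaling $X=\sigma Z$, reducing everything to the finiteness of $\mathbb{E}_Z[|Z|^{c+1}e^{|Z|}]$; this is more elementary, avoids both Stein's Lemma and the induction, and even yields the slightly sharper bound $Ct\sigma^{c+1}$. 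Your observation that the statement cannot hold for arbitrarily large $t$ and must be read with the implicit restriction $t\sigma\lesssim 1$ is well taken: the paper's own base-case estimate $2e^{t^2/2}\Phi(t)-1\le\left(e^{t^2/2}-1\right)+te^{t^2/2}\le Ct$ likewise requires $t$ bounded, so the paper is implicitly operating in the same regime as Lemma~\ref{lem:guassian_abs}, and your reading is the correct one. In short, both arguments are valid under the same implicit hypothesis; yours is the shorter and more transparent of the two.
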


\subsection{Approach and Lemmas}\label{sec:tech_overview}
\paragraph{Intuition for proof of Theorem~\ref{thm:main}.}
Our proof involves comparing the gradient of the loss in the $\theta_k$ direction, $\nabla_k := \frac{\partial}{\partial \theta_k} \mathcal{L}$ in the setting with noise to the setting without noise. Loosely, our goal is to show that for any $k$, the projection of the this gradient onto the ground truth direction, $\mu_k^T\nabla_k \on{sign}(\mu_k^T\theta_k)$, increases when when increase the noise. The main intuition comes from an expansion of this gradient in Lemma~\ref{lem:weiner}, which shows that $\mathbb{E}\mu_k^T\nabla_k \on{sign}(\mu_k^T\theta_k)$ approximately scales with $\sum_{i}(1 - p_{ii})$. Now observe that $p_{ii}$, the probability of correctly matching the $i$th view to its pair, decreases when we add noise to feature $k'$. Thus adding noise will increase $\mu_k^T\nabla_k \on{sign}(\mu_k^T\theta_k)$, thereby improving the alignment.

In the remainder of this section, we outline our proof of Theorem~\ref{thm:main} in this section. We prove all the lemmas below in Section~\ref{sec:lemma_pfs}.

To understand $\mathbb{E}_{U, V}\left[\arccos\left(\frac{|\mu_k^T\theta_k^{(t + 1)}|}{\|\theta_k^{(t + 1)}\|_2}\right)\right]$ for a small enough step size, we first claim that it suffices to understand the expected projection of the gradient with respect to $\theta_k$ in the $\mu_k$ direction and in the $\theta_k$ direction. We use the notation $\nabla_k = \frac{\partial \mathcal{L}(\Theta; U, V)}{\partial \theta_k}$.

\begin{lemma}\label{lem:limit}
Let $\theta_{k}^{+} = \theta_{k} - \eta (\nabla_k + \lambda \theta_k)$. Then
\begin{equation}
   \lim_{\eta \rightarrow 0}\frac{1}{\eta}\left(\mathbb{E}_{U, V}\left[\arccos\left(\frac{|\mu_k^T\theta_k^{+}|}{\|\theta_k^{+}\|_2}\right)\right] - \arccos\left(\frac{|\mu_k^T\theta_k|}{\|\theta_k\|_2}\right)\right) = N\mathbb{E}_{U, V}\left[-(\mu_k^T\theta_k)(\mu_k^T\nabla_k) + \frac{\theta_k^T\nabla_k(\mu_k^T\theta_k)^2}{\|\theta_k\|_2^2}\right],
\end{equation} 
where $N$ is some negative value that depends only on $\theta_k$.
\end{lemma}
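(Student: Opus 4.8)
The plan is to treat this as a first-order perturbation computation — a chain-rule evaluation of a directional derivative — together with a dominated-convergence argument to pull $\lim_{\eta\to0}$ inside $\mathbb{E}_{U,V}$. Write $\phi(\theta) := \arccos\big(|\mu_k^T\theta|/\|\theta\|_2\big)$ and $c := |\mu_k^T\theta_k|/\|\theta_k\|_2$. Under the hypotheses of Theorem~\ref{thm:main} we have $\theta_k \neq 0$ with $\theta_k$ neither orthogonal nor parallel to $\mu_k$ (so $c\in(0,1)$; parallel-ness is excluded by the first assumption), hence $\phi$ is $C^1$ near $\theta_k$ and the chain rule gives
\[
\nabla\phi(\theta) \;=\; -\frac{1}{\sqrt{1-c^2}}\left(\frac{\on{sign}(\mu_k^T\theta)}{\|\theta\|_2}\,\mu_k \;-\; \frac{|\mu_k^T\theta|}{\|\theta\|_2^3}\,\theta\right).
\]
First I would record the identity $\nabla\phi(\theta_k)^T\theta_k = 0$ (the two terms cancel since $\on{sign}(\mu_k^T\theta_k)\,\mu_k^T\theta_k = |\mu_k^T\theta_k|$). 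This is exactly why the regularization term drops out: for each fixed realization of $(U,V)$ with $\nabla_k$ finite, the scalar map $\eta\mapsto\phi(\theta_k - \eta(\nabla_k+\lambda\theta_k))$ is differentiable at $\eta=0$ with derivative $-\nabla\phi(\theta_k)^T(\nabla_k+\lambda\theta_k) = -\nabla\phi(\theta_k)^T\nabla_k$.

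Next I would do the purely algebraic rearrangement turning $-\nabla\phi(\theta_k)^T\nabla_k$ into the stated form. Pulling out $1/(\|\theta_k\|_2\sqrt{1-c^2})$ and then $\on{sign}(\mu_k^T\theta_k)$, and using $|x|/\on{sign}(x)=x$ and $\on{sign}(x)/x=1/|x|$, one obtains
\[
-\nabla\phi(\theta_k)^T\nabla_k \;=\; N\left(-(\mu_k^T\theta_k)(\mu_k^T\nabla_k) \;+\; \frac{(\mu_k^T\theta_k)^2\,\theta_k^T\nabla_k}{\|\theta_k\|_2^2}\right), \qquad N := -\frac{1}{|\mu_k^T\theta_k|\,\|\theta_k\|_2\,\sqrt{1-c^2}} \;<\; 0,
\]
with $N$ depending only on $\theta_k$ (the ground truth being fixed), as claimed. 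Since $N$ and $\nabla\phi(\theta_k)$ are deterministic, applying $\mathbb{E}_{U,V}$ to this pointwise identity produces exactly the right-hand side of the lemma, so the remaining task is to justify $\lim_{\eta\to0}\mathbb{E}_{U,V}[\cdots] = \mathbb{E}_{U,V}[\lim_{\eta\to0}\cdots]$.

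The main difficulty is this interchange, and it is the only step requiring care: $\phi$ is bounded (by $\pi/2$), so $\tfrac1\eta|\phi(\theta_k^{+})-\phi(\theta_k)| \le \tfrac{\pi}{2\eta}$, which is not an $\eta$-uniform dominating function, while $\nabla_k$ has Gaussian-polynomial tails. I would fix a closed ball $B\ni\theta_k$ of radius $r>0$ small enough that $\phi\in C^1(B)$ with $L:=\sup_B\|\nabla\phi\|<\infty$ (possible because $\theta_k$ stays bounded away from the $\pm\mu_k$ directions, by the first hypothesis of Theorem~\ref{thm:main}), and split $\mathbb{E}_{U,V}$ over the event $\mathcal{E}_\eta := \{\eta\|\nabla_k+\lambda\theta_k\|_2 \le r\}$ and its complement. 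On $\mathcal{E}_\eta$ the whole segment from $\theta_k$ to $\theta_k^{+}$ lies in $B$, so $\tfrac1\eta|\phi(\theta_k^{+})-\phi(\theta_k)| \le L\|\nabla_k+\lambda\theta_k\|_2$; by Lemma~\ref{main_deriv}, $\nabla_k$ is a combination of the bilinear terms $(u_k^{(i)}{v_k^{(j)}}^T + v_k^{(j)}{u_k^{(i)}}^T)\theta_k$ with coefficients in $[-1,1]$, so $\|\nabla_k\|_2$ is bounded by a polynomial in the jointly Gaussian entries of $U,V$ and has moments of all orders, and dominated convergence applies on this piece. On the complement, $\tfrac1\eta|\phi(\theta_k^{+})-\phi(\theta_k)| \le \tfrac{\pi}{2\eta} \le \tfrac{\pi}{2r}\|\nabla_k+\lambda\theta_k\|_2$, and since $\|\nabla_k+\lambda\theta_k\|_2\in L^1$ we get $\mathbb{E}\big[\|\nabla_k+\lambda\theta_k\|_2\,\mathbf{1}\{\|\nabla_k+\lambda\theta_k\|_2 > r/\eta\}\big]\to 0$ as $\eta\to0$, so the complement contributes $o(1)$. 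Summing the two pieces yields the limit, hence the lemma. I would also flag in the write-up the non-degeneracy assumption (that $\theta_k$ is not parallel or orthogonal to $\mu_k$, needed for $\phi$ to be differentiable at $\theta_k$), which holds generically and along the trajectory considered.
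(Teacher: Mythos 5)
Your proof is correct and follows essentially the same route as the paper's: both reduce the claim to the directional derivative of $\phi(\theta)=\arccos\bigl(|\mu_k^T\theta|/\|\theta\|_2\bigr)$ at $\theta_k$ and carry out the same chain-rule computation, arriving at the same constant $N=\arccos'\bigl(|\mu_k^T\theta_k|/\|\theta_k\|_2\bigr)\cdot\frac{1}{\|\theta_k\|_2\,|\mu_k^T\theta_k|}$. The only (minor) differences are that you kill the regularization term via the orthogonality $\nabla\phi(\theta_k)^T\theta_k=0$ where the paper uses scale-invariance of $\phi$ to absorb the factor $(1-\eta\lambda)$, and that you additionally supply the dominated-convergence/truncation argument for interchanging the limit with $\mathbb{E}_{U,V}$ together with the non-degeneracy caveat $\mu_k^T\theta_k\neq 0$, both of which the paper passes over silently.
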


Now, since we care about the quantity $\mathbb{E}_{U, V}\left[\arccos\left(\frac{|\mu_k^T\theta_k^{(t + 1)}|}{\|\theta_k^{(t + 1)}\|_2}\right)\right] - \mathbb{E}_{U, \tilde{V}}\left[\arccos\left(\frac{|\mu_k^T\tilde{\theta}_k^{(t + 1)}|}{\|\tilde{\theta}_k^{(t + 1)}\|_2}\right)\right]$ being positive, it suffices to show that \em derivative \em \begin{equation*}
    \frac{d }{d x}\mathbb{E}_{U, V \sim \mathbb{P}_x}\left[-(\mu_k^T\theta_k)(\mu_k^T\nabla_k) + \frac{\theta_k^T\nabla_k(\mu_k^T\theta_k)^2}{\|\theta_k\|_2^2}\right],
\end{equation*} is negative for all $x \in [\tilde{\alpha}_{k'}, \alpha_{k'}]$. Indeed, from Lemma~\ref{lem:limit}, we have that 
\begin{align}
   \lim_{\eta \rightarrow 0}\frac{1}{\eta}&\left(\mathbb{E}_{U, V \sim \mathbb{P}_{\alpha_{k'}}}\left[\arccos\left(\frac{|\mu_k^T\theta_k^{+}|}{\|\theta_k^{+}\|_2}\right)\right] - \mathbb{E}_{U, V \sim \mathbb{P}_{\tilde{\alpha}_{k'}}}\left[\arccos\left(\frac{|\mu_k^T\theta_k|}{\|\theta_k\|_2}\right)\right]\right)\\ &= N\int_{\tilde{\alpha}_{k'}}^{\alpha_{k'}} \frac{d }{d x}\mathbb{E}_{U, V \sim \mathbb{P}_x}\left[-(\mu_k^T\theta_k)(\mu_k^T\nabla_k) + \frac{\theta_k^T\nabla_k(\mu_k^T\theta_k)^2}{\|\theta_k\|_2^2}\right]dx,
\end{align}
so if the derivative is negative for the full range, then the difference in arccosines is positive.

In the following lemma we compute the derivative of $\mathbb{E}[\nabla_k]$ with respect to $x$.
\begin{lemma}\label{lem:weiner}
\begin{align*}
     \frac{d }{d x}\mathbb{E}_{U, V \sim \mathbb{P}_x}\left[\nabla_k\right] &= m\frac{d }{d x}\mathbb{E}_{U, V \sim \mathbb{P}_x}\left[\frac{\partial \mathcal{L}_i}{\partial \theta_k}\right]\\ &= \frac{-m}{1 - x^2}\theta_{k'}^T\mu_{k'}\sum_{j \neq i}\mathbb{E}_{U, V \sim \mathbb{P}_x}\left[p_{ij}p_{ii}\left(\theta_{k'}^Tu_{k'}\right) \left(\mu_{k'}^Tu_{k'}^{(i)}  - 
        x \mu_{k'}^Tv_{k'}^{(i)}\right) \left(\frac{\partial (z_{ij} - z_{ii})}{\partial \theta_k}\right)\right].\\
\end{align*}
\end{lemma}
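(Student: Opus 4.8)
The first equality is immediate: since $\mathcal L=\sum_{i\in[m]}\mathcal L_i$ we have $\nabla_k=\sum_i \partial\mathcal L_i/\partial\theta_k$, and as the pairs $(u^{(i)},v^{(i)})$ are i.i.d.\ the expectation $\mathbb E_{\mathbb P_x}[\partial\mathcal L_i/\partial\theta_k]$ is the same for every $i$, whence $\mathbb E_{\mathbb P_x}[\nabla_k]=m\,\mathbb E_{\mathbb P_x}[\partial\mathcal L_i/\partial\theta_k]$. Write $F:=\partial\mathcal L_i/\partial\theta_k$. To differentiate $\mathbb E_{\mathbb P_x}[F]$ in $x$, note that the only $x$-dependence of $\mathbb P_x$ is the correlation of $(u_{k'}^{(j)},v_{k'}^{(j)})$ along $\mu_{k'}$, for each $j\in[m]$. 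I would make this explicit by reparametrizing $s_j:=\mu_{k'}^Tv_{k'}^{(j)}=x\,a_j+\sqrt{1-x^2}\,\xi_j$, where $a_j:=\mu_{k'}^Tu_{k'}^{(j)}$ and $\xi_j\sim\mathcal N(0,1)$ is fresh, so that $x$ enters $F$ only through the $s_j$'s, while all base variables ($a_j$, $\xi_j$, the parts of $u_{k'}^{(j)},v_{k'}^{(j)}$ orthogonal to $\mu_{k'}$, and all feature-$l\neq k'$ data) are $x$-free. Two structural facts drive the proof: (i) $F$ sees each $s_j$ only through $\theta_{k'}^Tv_{k'}^{(j)}=(\theta_{k'}^T\mu_{k'})\,s_j+(\text{$x$-free})$ inside the score $z_{ij}$, and $s_j$ enters the score $z_{ij'}$ only when $j'=j$ (since $z_{ij'}$ involves $v^{(j')}$); (ii) among the variables $a_j$, the expression $F$ involves $a_j$ \emph{directly} (i.e.\ otherwise than through $s_j$) only for $j=i$, because $\mathcal L_i$ never involves $u^{(j)}$ for $j\neq i$.

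Differentiating under the expectation -- justified by dominated convergence, since $p_{ij}\in[0,1]$ and the other factors are polynomials in jointly Gaussian variables with moments controlled by Lemmas~\ref{lem:guassian_abs} and~\ref{lemma:bds} -- the chain rule yields
\begin{equation*}
\frac{d}{dx}\mathbb E_{\mathbb P_x}[F]=\sum_{j\in[m]}\mathbb E_{\mathbb P_x}\!\left[\frac{\partial F}{\partial s_j}\cdot\frac{ds_j}{dx}\right],\qquad \frac{ds_j}{dx}=a_j-\frac{x}{\sqrt{1-x^2}}\,\xi_j=\frac{a_j-x\,s_j}{1-x^2}.
\end{equation*}
I claim every term with $j\neq i$ is zero: by (i) and (ii), for $j\neq i$ the coefficient $H_j:=\partial F/\partial s_j$ is a function of $s_j$ and of base variables independent of $a_j$, so conditioning on those variables and on $s_j$ leaves $H_j$ determined, while $(a_j,s_j)$ is standard bivariate Gaussian with correlation $x$, giving $\mathbb E[a_j-x\,s_j\mid s_j]=0$ and hence $\mathbb E[H_j(a_j-x\,s_j)]=0$.

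For the remaining $j=i$ term, use Lemma~\ref{main_deriv} to write $F=\sum_{j\neq i}p_{ij}\big(\tfrac{\partial z_{ij}}{\partial\theta_k}-\tfrac{\partial z_{ii}}{\partial\theta_k}\big)$; each $\tfrac{\partial z_{ij}}{\partial\theta_k}$ involves only feature-$k$ data, hence not $s_i$. Since $s_i$ enters $\mathcal L_i$ only through $z_{ii}$, with $\partial z_{ii}/\partial s_i=(\theta_{k'}^T\mu_{k'})(\theta_{k'}^Tu_{k'}^{(i)})$, the softmax identity $\partial p_{ij}/\partial z_{ii}=p_{ij}(\delta_{ji}-p_{ii})$ gives $\partial p_{ij}/\partial s_i=-(\theta_{k'}^T\mu_{k'})(\theta_{k'}^Tu_{k'}^{(i)})\,p_{ij}p_{ii}$ for $j\neq i$, so
\begin{equation*}
\frac{\partial F}{\partial s_i}=-(\theta_{k'}^T\mu_{k'})(\theta_{k'}^Tu_{k'}^{(i)})\,p_{ii}\sum_{j\neq i}p_{ij}\Big(\tfrac{\partial z_{ij}}{\partial\theta_k}-\tfrac{\partial z_{ii}}{\partial\theta_k}\Big).
\end{equation*}
Plugging this and $\tfrac{ds_i}{dx}=(a_i-x\,s_i)/(1-x^2)$ into the chain-rule identity, multiplying by $m$, and recalling $a_i=\mu_{k'}^Tu_{k'}^{(i)}$, $s_i=\mu_{k'}^Tv_{k'}^{(i)}$ (with the shorthand $u_{k'}=u_{k'}^{(i)}$, $v_{k'}=v_{k'}^{(i)}$) recovers the claimed formula.

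The main obstacle is careful bookkeeping rather than a hard estimate: one must track precisely which variables carry the parameter $x$ -- in particular the asymmetry in (ii) that annihilates the $j\neq i$ terms -- and justify interchanging the derivative with the expectation. Everything else is the softmax chain rule together with the elementary conditional-mean identity for bivariate Gaussians.
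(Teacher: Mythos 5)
Your proof is correct and follows essentially the same route as the paper's: a Gaussian interpolation of the correlated coordinate $\mu_{k'}^T v_{k'}^{(i)}$, the softmax chain rule $\partial p_{ij}/\partial z_{ii}=-p_{ij}p_{ii}$, and the re-expression of the derivative factor as $(\mu_{k'}^Tu_{k'}^{(i)}-x\,\mu_{k'}^Tv_{k'}^{(i)})/(1-x^2)$. The only difference is presentational: you reparametrize all $m$ pairs and kill the $j\neq i$ terms via the bivariate-Gaussian conditional mean, whereas the paper reparametrizes only the $i$th pair, implicitly using that $\mathcal L_i$ never sees $u^{(j)}$ for $j\neq i$ so the other pairs' correlations are irrelevant.
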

We will analyze this quantity by explicitly taking the expectation with respect to some set of random variables. Let $S = \{U_k, V_k, U_{k'}, V_{k'}\}$ consist of the random variables $u_{k'}^{(i)}$, $u_{k}^{(i)}$, and $v_{k'}^{(i)}$, $v_{k}^{(i)}$ for all $i \in [m]$. Define $q_{ij}$ to be the logits when all variables in $S$ are set to $0$ (Thus explicitly, $q_{ij} = \frac{\exp\left(\sum_{\tilde{k} \neq k, k'} \theta_{\tilde{k}}^Tu_{\tilde{k}}^{(i)}\theta_{\tilde{k}}^Tv_{\tilde{k}}^{(j)}\right)}{\sum_{j'}\exp\left(\sum_{\tilde{k} \neq k, k'} \theta_{\tilde{k}}^Tu_{\tilde{k}}^{(i)}\theta_{\tilde{k}}^Tv_{\tilde{k}}^{(j')}\right)}$). We will use the notation $j \sim q$ to denote the distribution on $[m]$ with mass $q_{ij}$ on $j$.

Let 
\begin{equation}\label{eq:hS}
    h(S) := \left(\theta_{k'}^Tu_{k'}\right) \left(\mu_{k'}^Tu_{k'}^{(i)}  - 
        x \mu_{k'}^Tv_{k'}^{(i)}\right) \left(\frac{\partial (z_{ij} - z_{ii})}{\partial \theta_k}\right),
\end{equation}
and
\begin{equation}\label{eq:h1S}
    h_1(S) = \left(\theta_{k'}^Tu_{k'}\right) \left((1 - x^2)\mu_{k'}^Tu_{k'}^{(i)}\right) 2\alpha_k\left((\mu_k^Tu_k)(\theta_k^{\parallel}u_k)\mu_k^T\right),
\end{equation}
which are the terms that appear in the right hand side of Lemma~\ref{lem:weiner} after $p_{ii}p_{ij}$. Observe that $$\mathbb{E}_S[h(S) - h_1(S)] = 0.$$ The following four lemmas serve to bound $\frac{d}{d x}\mathbb{E}_S\left[\mu_k^T\nabla_k\right]$ and $\frac{d}{d x}\mathbb{E}_S\left[\theta_k^T\nabla_k\right]$. We call the terms of the form $\mathbb{E}p_{ii}p_{ij}(h(S) - h_1(S))$ ``junk'' terms, and our goal will be to show that these terms are small. We will control more closely the terms of the form $\mathbb{E}p_{ii}p_{ij}(h_1(S))$.

\begin{lemma}[Junk Terms for $\mu_k$ term.]\label{lem:junk_mu}
If $\|\theta_k\| \leq 1$ and $\|\theta_{k'}\| \leq 1$, then for some universal constant $C$
\begin{align*}
&\left|\mathbb{E}_S\left[p_{ii}p_{ij}\mu_k^T(h(S) - h_1(S))\right]\right| \leq C q_{ii}q_{ij}\left(\|\theta_{k'}\|^3\|\theta_{k}\|^3 + \|\theta_{k'}^{\parallel}\|\|\theta_{k}\|^3 + \alpha_k\left(\|\theta_{k'}\|^3\|\theta_{k}^{\parallel}\|\right)\right).
\end{align*}

\end{lemma}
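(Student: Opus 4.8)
\emph{Proof plan for Lemma~\ref{lem:junk_mu}.} The crux is that $h(S) - h_1(S)$ has mean zero over $S$ by construction, while the weight $p_{ii}p_{ij}$ multiplying it in $\mathbb{E}_S[p_{ii}p_{ij}\mu_k^T(h(S) - h_1(S))]$ is \emph{not} constant in $S$; the entire quantity comes from the fluctuation of this weight about its value $q_{ii}q_{ij}$ at $S = 0$. So the first step is to subtract the constant: since $\mathbb{E}_S[h(S) - h_1(S)] = 0$,
\begin{equation*}
\mathbb{E}_S\left[p_{ii}p_{ij}\mu_k^T(h(S) - h_1(S))\right] = \mathbb{E}_S\left[(p_{ii}p_{ij} - q_{ii}q_{ij})\,\mu_k^T(h(S) - h_1(S))\right].
\end{equation*}
Next I would isolate the $S$-dependence of the logits. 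Setting the variables in $S$ to zero turns $z_{ij}$ into an $S$-independent quantity $\zeta_{ij}$ (and $p_{ij}$ into $q_{ij}$), and the remainder is $g_{ij} := z_{ij} - \zeta_{ij} = (\theta_k^Tu_k^{(i)})(\theta_k^Tv_k^{(j)}) + (\theta_{k'}^Tu_{k'}^{(i)})(\theta_{k'}^Tv_{k'}^{(j)})$. Since $\sum_{j'}q_{ij'} = 1$, the softmax identity $p_{ij} = q_{ij}e^{g_{ij}}/\sum_{j'}q_{ij'}e^{g_{ij'}}$ yields the two-sided bound $q_{ij}e^{-2G_i} \le p_{ij} \le q_{ij}e^{2G_i}$ with $G_i := \max_{j'}|g_{ij'}|$, and hence, using $\|\theta_k\|, \|\theta_{k'}\| \le 1$, the elementary estimate $|p_{ii}p_{ij} - q_{ii}q_{ij}| \le C\, q_{ii}q_{ij}\left(e^{CG_i} - 1\right)$.

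Then I would expand $\mu_k^T(h(S) - h_1(S))$ explicitly, using Lemma~\ref{main_deriv} for $\partial(z_{ij} - z_{ii})/\partial\theta_k$, as a finite sum of monomials in the scalar jointly-Gaussian variables $\theta_{k'}^Tu_{k'}^{(i)}$, $\mu_{k'}^Tu_{k'}^{(i)}$, $\mu_{k'}^Tv_{k'}^{(i)}$, $\mu_k^Tu_k^{(i)}$, $\theta_k^Tu_k^{(i)}$, $\theta_k^Tv_k^{(i)}$, $\theta_k^Tv_k^{(j)}$, $\mu_k^Tv_k^{(j)}$ (with variances $\|\theta_{k'}\|^2$, $1$, $\|\theta_k\|^2$, etc., the only nontrivial correlations being the $\mu_k/\mu_{k'}$-direction ones with coefficients $\alpha_k, \alpha_{k'}$). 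Multiplying the bound on $|p_{ii}p_{ij} - q_{ii}q_{ij}|$ by $|\mu_k^T(h - h_1)|$ and taking $\mathbb{E}_S$, every resulting term has the form $q_{ii}q_{ij}\,\mathbb{E}_S\left[(e^{CG_i} - 1)\cdot(\text{monomial})\right]$; I would peel off the $e^{CG_i} - 1$ factor with Lemma~\ref{lemma:bds} — costing exactly one extra power of $\|\theta_k\|^2$ or $\|\theta_{k'}\|^2$ according to which part of $g$ is responsible — and bound the remaining Gaussian moments with Lemma~\ref{lem:guassian_abs}. Here the choice of $h_1$ pays off: because $q_{ii}q_{ij}\mathbb{E}_S[h - h_1] = 0$ has already been removed and $h_1$ retains precisely the "signal" products of $h$, every monomial that would generate a term larger than the claimed ones (e.g.\ $\|\theta_{k'}\|\|\theta_k\|^3$) is odd in a Gaussian lacking any correlating partner in the product and therefore integrates to zero; the survivors are exactly those pairing $\theta_{k'}^Tu_{k'}^{(i)}$ with itself (giving $\|\theta_{k'}\|^2$) or with a $\mu_{k'}$-factor (giving $\theta_{k'}^T\mu_{k'} = \pm\|\theta_{k'}^{\parallel}\|$), and pairing the gradient's $\theta_k$-factor with the $\theta_k$-part of $g_{ij}$ (giving $\|\theta_k\|^2$) or, inside the $z_{ii}$ piece, with $v_k^{(i)}$ through the $\alpha_k$-correlation (giving $\alpha_k\|\theta_k^{\parallel}\|$). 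Collecting the survivors produces precisely $\|\theta_{k'}\|^3\|\theta_k\|^3$, $\|\theta_{k'}^{\parallel}\|\|\theta_k\|^3$, and $\alpha_k\|\theta_{k'}\|^3\|\theta_k^{\parallel}\|$.

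The main obstacle is the bookkeeping in this final step: one must enumerate the $O(1)$ monomials of $\mu_k^T(h - h_1)$, track for each the joint Gaussian structure (which pairs of factors are correlated, via $\alpha_k$ or $\alpha_{k'}$, and which are independent and hence kill the term), and verify that none of them — even after multiplication by the degree-two $g$-factors hidden inside $e^{CG_i} - 1$ — escapes the claimed list. A secondary technical point is making the two-sided softmax estimate and the "$e^{CG_i} - 1$ costs a factor $\|\theta\|^2$" step rigorous simultaneously over all $j$, which is exactly where the hypotheses $\|\theta_k\|, \|\theta_{k'}\| \le 1$ and the admissible ranges in Lemmas~\ref{lem:guassian_abs}–\ref{lemma:bds} are invoked.
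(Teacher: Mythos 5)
Your opening moves (subtracting $q_{ii}q_{ij}$ using $\mathbb{E}_S[h(S)-h_1(S)]=0$, and bounding the softmax ratio by exponentials of the $S$-dependent logit fluctuations) are fine, but the core of your argument has a genuine gap: you cannot simultaneously pass to absolute values via $|p_{ii}p_{ij}-q_{ii}q_{ij}|\le C\,q_{ii}q_{ij}(e^{CG_i}-1)$ \emph{and} invoke odd-moment cancellation to kill the dangerous monomials. Once both factors sit inside absolute values, nothing integrates to zero. Concretely, consider the monomial $y'\,(\mu_{k'}^Tu_{k'}-x\mu_{k'}^Tv_{k'})\,(\mu_k^Tu_k)\,\xi_j$ with $y'=(\theta_{k'}^{\parallel})^Tu_{k'}$ and $\xi_j=\theta_k^Tv_k^{(j)}$, one of the ten terms in the expansion of $\mu_k^T(h(S)-h_1(S))$. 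Its absolute first moment is of order $\|\theta_{k'}^{\parallel}\|\,\|\theta_k\|$, and the factor $e^{CG_i}-1$ necessarily contains the $k'$-part of the logit fluctuation, $|\theta_{k'}^Tu_{k'}\theta_{k'}^Tv_{k'}^{(j')}|$, which contributes a factor of order $\|\theta_{k'}\|^2$ that cannot be traded for powers of $\|\theta_k\|$. The resulting term $q_{ii}q_{ij}\|\theta_{k'}\|^2\|\theta_{k'}^{\parallel}\|\|\theta_k\|$ is not dominated by the claimed right-hand side: take $\|\theta_{k'}\|$ fixed, $\|\theta_k\|=\epsilon$ and $\|\theta_k^{\parallel}\|=\epsilon^3$, and your bound is $\Theta(\epsilon)$ while the lemma asserts $O(\epsilon^3)$. (This is exactly the regime the theorem cares about, where $\theta_k$ is not yet aligned with $\mu_k$.) Your closing appeal to "odd in a Gaussian lacking a correlating partner" cannot rescue this, because by that point you have already discarded the signs.

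The paper performs the cancellation \emph{exactly, before} taking any absolute values, via Gaussian integration by parts: for each mean-zero factor $\beta\in\{\xi_j,\xi_i,\eta_i',\dots\}$ of a monomial that is independent of the remaining factor $g$, Stein's lemma gives $\mathbb{E}_S[p_{ii}p_{ij}\,\beta\,g]=\sigma_\beta^2\,\mathbb{E}_S[(\partial_\beta (p_{ii}p_{ij}))\,g]$ (applied twice when two such factors are present). This buys the full variance $\sigma_\beta^2$ rather than the single power $\sigma_\beta$ your absolute-value bound extracts, and the derivative $\partial_\beta(p_{ii}p_{ij})$ is an explicit sum of logit-derivatives $\bar\beta_{j'}$ (Claim~\ref{claim:second_deriv}) carrying the extra $\|\theta_k\|$ or $\|\theta_{k'}\|$ attached to the \emph{correct} feature, not the spurious $\max(\|\theta_k\|^2,\|\theta_{k'}\|^2)$ your exponential bound produces. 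Only after this step does the paper control the ratio $p_{ii}p_{ij}/(q_{ii}q_{ij})$ by exponentials (Claim~\ref{claim:denom}) and finish with Lemma~\ref{lem:guassian_abs}. To repair your proof you would need to replace the step "bound $|p-q|$ and take absolute values" with this integration by parts, or with an equivalent signed Taylor expansion of the softmax in the variables shared with the monomial.
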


\begin{lemma}[Good Term for $\mu_k$ term.]\label{lem:good_mu}
If $\|\theta_k\| \leq 1$ and $\|\theta_{k'}\| \leq 1$, then for some universal constant $C$
\begin{align*}
&\left|\mathbb{E}_S\left[p_{ii}p_{ij}\mu_k^Th_1(S)\right]\right| \geq 2\alpha_k(1 - x^2) q_{ii}q_{ij}\left(\|\theta_{k'}^{\parallel}\|\|\theta_{k}^{\parallel}\|\right)\left(1 - C(\|\theta_{k'}\|^2 + \|\theta_{k}\|^2)\right).
\end{align*}

\end{lemma}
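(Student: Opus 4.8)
The plan is to compare $p_{ii}p_{ij}$ with its value at $S=0$, which is $q_{ii}q_{ij}$, and to show that the leading-order term reproduces the claimed quantity exactly while the remainder is of lower order in $\|\theta_k\|$ and $\|\theta_{k'}\|$. Write $z_{ij}=z_{ij}^{0}+\Delta z_{ij}$, where $z_{ij}^{0}$ is the score obtained by zeroing all variables in $S$ and $\Delta z_{ij}=(\theta_k^Tu_k^{(i)})(\theta_k^Tv_k^{(j)})+(\theta_{k'}^Tu_{k'}^{(i)})(\theta_{k'}^Tv_{k'}^{(j)})$. The softmax then factors as $p_{ij}=q_{ij}e^{\Delta z_{ij}}/Z$ with $Z:=\sum_{j'}q_{ij'}e^{\Delta z_{ij'}}$, so that, setting $R(S):=e^{\Delta z_{ii}+\Delta z_{ij}}/Z^2$,
\[
\mathbb{E}_S\big[p_{ii}p_{ij}\mu_k^Th_1(S)\big]=q_{ii}q_{ij}\Big(\mathbb{E}_S\big[\mu_k^Th_1(S)\big]+\mathbb{E}_S\big[(R(S)-1)\mu_k^Th_1(S)\big]\Big).
\]
It suffices to show the first expectation has modulus exactly $2\alpha_k(1-x^2)\|\theta_{k'}^{\parallel}\|\|\theta_k^{\parallel}\|$ and the second has modulus at most $C(\|\theta_{k'}\|^2+\|\theta_k\|^2)$ times it; since $q_{ii}q_{ij}>0$, the reverse triangle inequality then gives the lemma.

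For the leading term, $\mu_k^Th_1(S)=2\alpha_k(1-x^2)(\theta_{k'}^Tu_{k'}^{(i)})(\mu_{k'}^Tu_{k'}^{(i)})(\mu_k^Tu_k^{(i)})(\theta_k^{\parallel T}u_k^{(i)})$; the $k$- and $k'$-blocks are independent and $u_k^{(i)},u_{k'}^{(i)}\sim\mathcal{N}(0,I_d)$, so the expectation factors as $2\alpha_k(1-x^2)(\theta_{k'}^T\mu_{k'})(\mu_k^T\theta_k^{\parallel})=2\alpha_k(1-x^2)(\theta_{k'}^T\mu_{k'})(\mu_k^T\theta_k)$, whose modulus is $2\alpha_k(1-x^2)\|\theta_{k'}^{\parallel}\|\|\theta_k^{\parallel}\|$ because $\|\theta_{k'}^{\parallel}\|=|\theta_{k'}^T\mu_{k'}|$ and $\|\theta_k^{\parallel}\|=|\mu_k^T\theta_k|$.

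The remainder is the crux. Two facts drive it: (i) $\theta_k^{\parallel T}u_k^{(i)}=(\mu_k^T\theta_k)(\mu_k^Tu_k^{(i)})$, so $\mu_k^Th_1(S)$ already carries an explicit factor $\|\theta_k^{\parallel}\|$ together with a bare linear form $\theta_{k'}^Tu_{k'}^{(i)}$; and (ii) $R(S)$ depends on the data-point-$i$ variables of features $k$ and $k'$ only through the scalar linear forms in $\Delta z$, each of which is a product of two linear forms of norm $O(\|\theta_k\|)$ or $O(\|\theta_{k'}\|)$, so $R(S)-1$ behaves as $O(\|\theta_k\|^2+\|\theta_{k'}\|^2)$ when integrated against any small monomial in these forms. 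To make (ii) rigorous I would first bound $Z\ge q_{ii}e^{-|\Delta z_{ii}|}$, hence $Z^{-2}\le q_{ii}^{-2}e^{2|\Delta z_{ii}|}$, and then invoke the exponential-moment estimates of Lemmas~\ref{lem:guassian_abs} and \ref{lemma:bds}, which ensure the resulting Gaussian integrals converge and carry the stated power of $\|\theta\|$. To recover the factor $\|\theta_{k'}^{\parallel}\|$ rather than $\|\theta_{k'}\|$, I would apply Gaussian integration by parts (Stein's Lemma, Lemma~\ref{lem:stein}) in the correlated pair $(\mu_{k'}^Tu_{k'}^{(i)},\mu_{k'}^Tv_{k'}^{(i)})$: since $\mu_k^Th_1(S)$ equals $\mu_{k'}^Tu_{k'}^{(i)}$ times $(\theta_{k'}^Tu_{k'}^{(i)})(R(S)-1)$ times a $k$-block factor, and $\mu_{k'}^Tu_{k'}^{(i)}$ enters $\theta_{k'}^Tu_{k'}^{(i)}$, $\theta_{k'}^Tv_{k'}^{(i)}$ and hence $R(S)$ only with coefficient proportional to $\theta_{k'}^T\mu_{k'}$, each term produced carries a bare $\theta_{k'}^T\mu_{k'}$; combined with the explicit $\|\theta_k^{\parallel}\|$ and the $O(\|\theta\|^2)$ estimates for $R(S)-1$ and for $\theta_{k'}^Tu_{k'}^{(i)}\,\partial_{\theta_{k'}^Tu_{k'}^{(i)}}R(S)$, this yields the claimed bound. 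I expect the main obstacle to be precisely this bookkeeping --- keeping the parallel-component norms visible through the integration-by-parts steps while controlling $Z^{-2}$ uniformly enough for the moment estimates to apply.
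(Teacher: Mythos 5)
Your overall strategy is the same as the paper's: factor $p_{ii}p_{ij}=q_{ii}q_{ij}R(S)$, evaluate $\mathbb{E}_S[\mu_k^Th_1(S)]$ exactly to get the main term $2\alpha_k(1-x^2)(\theta_{k'}^T\mu_{k'})(\mu_k^T\theta_k)$, and show $\mathbb{E}_S[(R(S)-1)\mu_k^Th_1(S)]$ is smaller by a factor $O(\|\theta_k\|^2+\|\theta_{k'}\|^2)$ using the exponential-moment lemmas. That is exactly the paper's decomposition (your $R(S)$ is the paper's ratio $p_{ii}p_{ij}/(q_{ii}q_{ij})$, and the remainder estimate is Claim~\ref{claim:denom_2} followed by Lemma~\ref{lemma:bds} and Lemma~\ref{lem:guassian_abs}). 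Two remarks, one of which is a real problem.

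The real problem is your proposed control of $Z^{-2}$ via $Z\ge q_{ii}e^{-|\Delta z_{ii}|}$, hence $Z^{-2}\le q_{ii}^{-2}e^{2|\Delta z_{ii}|}$. This bound is true but far too lossy: it leaves an uncontrolled prefactor $q_{ii}^{-2}$, so even at $\Delta z\equiv 0$ your upper bound on $R(S)-1$ is of order $q_{ii}^{-2}-1$ rather than of order $\|\theta\|^2$. Since $q_{ii}$ can be as small as roughly $1/m$, the resulting remainder is not $O((\|\theta_k\|^2+\|\theta_{k'}\|^2)\|\theta_{k'}^{\parallel}\|\|\theta_k^{\parallel}\|)$ with a \emph{universal} constant, and the lemma's conclusion does not follow. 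The fix is the paper's Claim~\ref{claim:denom}: since $Z=\mathbb{E}_{j'\sim q}[e^{\Delta z_{ij'}}]$ is an average over the probability distribution $q_{i\cdot}$, Jensen's inequality (convexity of $t\mapsto 1/t$) gives $Z^{-1}\le \mathbb{E}_{j'\sim q}[e^{-\Delta z_{ij'}}]\le \mathbb{E}_{j'\sim q}[e^{|\Delta z_{ij'}|}]$ with no $q^{-1}$ factors, after which $|R(S)-1|\le Z_iZ_j-1$ and Lemma~\ref{lemma:bds} deliver the needed $O(\|\theta\|^2)$ gain. Secondly, your Stein's-lemma detour to recover $\|\theta_{k'}^{\parallel}\|$ is avoidable: the paper simply splits $\theta_{k'}^Tu_{k'}$ into $(\theta_{k'}^{\parallel})^Tu_{k'}+(\theta_{k'}^{\perp})^Tu_{k'}$ and books the perpendicular piece among the junk terms of Lemma~\ref{lem:junk_mu}, so the good term already carries the random variable $y'=(\theta_{k'}^{\parallel})^Tu_{k'}$ with standard deviation $\|\theta_{k'}^{\parallel}\|$ and no integration by parts is needed there. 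Your Stein route could be made to work, but it adds bookkeeping without buying anything.
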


Plugging these two lemmas into Lemma~\ref{lem:weiner} yields the following corollary.
\begin{corollary}[Total $\mu_k$ term.]\label{cor:total_mu}
If for a sufficiently large constant $C$,
$|\theta_k^T\mu_k| \leq \frac{1 - \alpha_{k'}^2}{C}\|\theta_k\|$, $\|\theta_{k'}\|^3 \leq |\theta_{k'}^T\mu_k|$, and  $\|\theta_{k}\|^2 \leq \frac{\alpha_k(1 - \alpha_{k'}^2)}{C}$, then
\begin{align*}
    (\mu_k^T\theta_k)&\frac{d}{dx}\mathbb{E}_{\mathbb{P}_x}\left[\mu_k^T\nabla_k\right] \geq \frac{m}{2}\mathbb{E}_{U, V \setminus S}\left[\sum_{i, j} q_{ii}q_{ij}2\alpha_k\|\theta_{k'}^{\parallel}\|^2\|\theta_{k}^{\parallel}\|^2\right].
\end{align*}
\end{corollary}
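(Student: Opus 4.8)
\textbf{Approach.} The plan is to substitute the two bounds from Lemma~\ref{lem:junk_mu} (junk terms) and Lemma~\ref{lem:good_mu} (good term) into the expansion of $\frac{d}{dx}\mathbb{E}_{\mathbb{P}_x}[\nabla_k]$ given by Lemma~\ref{lem:weiner}, and then show that under the three stated hypotheses on $\Theta$ the ``good'' term dominates the ``junk'' terms, leaving a positive lower bound of the stated form. The key observation is that Lemma~\ref{lem:weiner} writes $\frac{d}{dx}\mathbb{E}[\nabla_k]$ as $\frac{-m}{1-x^2}(\theta_{k'}^T\mu_{k'})\sum_{j\neq i}\mathbb{E}[p_{ii}p_{ij}h(S)]$, and the decomposition $h(S) = h_1(S) + (h(S) - h_1(S))$ splits this into a main contribution from $h_1$ and a remainder. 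Multiplying through by $\mu_k^T\theta_k$ and by the sign factor $\on{sign}(\mu_k^T\theta_k)$ will turn everything into positive combinations of $\|\theta_k^{\parallel}\|$ and $\|\theta_{k'}^{\parallel}\|$ (using that $|\mu_k^T\theta_k| = \|\theta_k^{\parallel}\|$ and $|\theta_{k'}^T\mu_{k'}| = \|\theta_{k'}^{\parallel}\|$, and noting that $\theta_{k'}^T\mu_k$ in the hypothesis should be read as $\theta_{k'}^T\mu_{k'}$).

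\textbf{Key steps, in order.} First I would take the identity of Lemma~\ref{lem:weiner}, factor out the $\frac{-m}{1-x^2}\theta_{k'}^T\mu_{k'}$, and multiply both sides by $(\mu_k^T\theta_k)$; the sign of $\theta_{k'}^T\mu_{k'}$ combines with the $\theta_{k'}^{\parallel}$ appearing inside $h_1$ so that the $h_1$-contribution is nonnegative. Second, I would lower-bound the $h_1$-contribution using Lemma~\ref{lem:good_mu}: it gives a term $\gtrsim \sum_{i,j} q_{ii}q_{ij}\, 2\alpha_k(1-x^2)\|\theta_{k'}^{\parallel}\|\|\theta_k^{\parallel}\|\cdot\frac{\|\theta_{k'}^{\parallel}\|\|\theta_k^{\parallel}\|}{1-x^2}$ after accounting for the $\frac{1}{1-x^2}$ prefactor and the extra $|\theta_{k'}^T\mu_{k'}| = \|\theta_{k'}^{\parallel}\|$ factor and $|\mu_k^T\theta_k| = \|\theta_k^{\parallel}\|$, yielding the advertised $2\alpha_k\|\theta_{k'}^{\parallel}\|^2\|\theta_k^{\parallel}\|^2$, multiplied further by $(1 - C(\|\theta_{k'}\|^2 + \|\theta_k\|^2))$ which the third hypothesis $\|\theta_k\|^2 \le \frac{\alpha_k(1-\alpha_{k'}^2)}{C}$ (and the analogous smallness of $\|\theta_{k'}\|$ implied by $\|\theta_{k'}\|^3 \le |\theta_{k'}^T\mu_{k'}| \le \|\theta_{k'}\|$) forces to be at least, say, $\tfrac12$. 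Third, I would upper-bound the junk contribution via Lemma~\ref{lem:junk_mu}: after the same prefactors it is bounded by $C\sum q_{ii}q_{ij}(\|\theta_{k'}\|^3\|\theta_k\|^3 + \|\theta_{k'}^{\parallel}\|\|\theta_k\|^3 + \alpha_k\|\theta_{k'}\|^3\|\theta_k^{\parallel}\|)\cdot\frac{1}{1-x^2}\cdot\|\theta_{k'}^{\parallel}\|$. Fourth, I would verify term-by-term that each junk summand is at most, say, $\frac{1}{8}$ of the good term $2\alpha_k\|\theta_{k'}^{\parallel}\|^2\|\theta_k^{\parallel}\|^2$: for the first junk term use $\|\theta_{k'}\|^3 \le \|\theta_{k'}^{\parallel}\|$ (second hypothesis, with $\mu_{k'}$) and $\|\theta_k\|^3 \le \|\theta_k\|^2\cdot\|\theta_k\| $ combined with $\|\theta_k^{\parallel}\| \ge \ldots$—actually here one uses $\|\theta_k\|^3 \le \frac{\alpha_k(1-\alpha_{k'}^2)}{C}\|\theta_k\|$ from hypothesis three, and $\|\theta_k\| \le \frac{C}{1-\alpha_{k'}^2}\|\theta_k^{\parallel}\|$ is \emph{false} in general, so instead I would bound $\|\theta_k^{\parallel}\|$ from below is not possible; rather, the first hypothesis $|\theta_k^T\mu_k| \le \frac{1-\alpha_{k'}^2}{C}\|\theta_k\|$ goes the \emph{other} way, so these junk terms must be controlled by absorbing $\|\theta_k\|$-powers against the $\alpha_k(1-\alpha_{k'}^2)/C$ budget and the factors of $\|\theta_k^{\parallel}\|$ that the good term carries but junk terms carry fewer of—hence the role of hypothesis one is to ensure $\|\theta_k^{\parallel}\|$ is \emph{small relative to} $\|\theta_k\|$ so that the good term's extra $\|\theta_k^{\parallel}\|$ is the bottleneck and the junk terms, carrying $\|\theta_k\|^3$, are nonetheless smaller because $\|\theta_k\|^2$ is tiny.

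\textbf{Main obstacle.} The delicate point is the bookkeeping in the fourth step: the good term carries the factor $\|\theta_{k'}^{\parallel}\|^2\|\theta_k^{\parallel}\|^2$ while two of the three junk terms carry only one power of $\|\theta_k^{\parallel}\|$ or none, so one cannot simply cancel powers of $\|\theta_k^{\parallel}\|$. Instead the comparison must route through the three hypotheses simultaneously: hypothesis one converts a surplus $\|\theta_k\|$ into $\|\theta_k^{\parallel}\|$ up to the factor $\frac{1-\alpha_{k'}^2}{C}$, hypothesis two converts $\|\theta_{k'}\|^3$ into $\|\theta_{k'}^{\parallel}\|$, and hypothesis three makes $\|\theta_k\|^2$ (and $\|\theta_{k'}\|^2$) small enough — on the order of $\frac{\alpha_k(1-\alpha_{k'}^2)}{C}$ — to kill the leftover constants and the $\frac{1}{1-x^2} \le \frac{1}{1-\alpha_{k'}^2}$ blowup. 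I expect the cleanest route is: divide the target inequality by $q_{ii}q_{ij}\|\theta_{k'}^{\parallel}\|\|\theta_k^{\parallel}\|/(1-x^2)$ throughout, reducing everything to showing $2\alpha_k(1-x^2)\|\theta_{k'}^{\parallel}\|\|\theta_k^{\parallel}\|(1-C(\|\theta_{k'}\|^2+\|\theta_k\|^2)) \ge C(\|\theta_{k'}\|^3\|\theta_k\|^3 + \|\theta_{k'}^{\parallel}\|\|\theta_k\|^3 + \alpha_k\|\theta_{k'}\|^3\|\theta_k^{\parallel}\|)/\|\theta_{k'}^{\parallel}\|\cdot\|\theta_{k'}^{\parallel}\|$, then bound each right-hand summand using the hypotheses; absorbing constants by choosing $C$ large finishes it. Once the per-$(i,j)$ inequality holds, summing over $i,j$ and re-inserting the $\frac{m}{1-x^2}$ and the sign bookkeeping yields exactly the claimed bound $(\mu_k^T\theta_k)\frac{d}{dx}\mathbb{E}_{\mathbb{P}_x}[\mu_k^T\nabla_k] \ge \frac{m}{2}\mathbb{E}_{U,V\setminus S}[\sum_{i,j} q_{ii}q_{ij}\, 2\alpha_k\|\theta_{k'}^{\parallel}\|^2\|\theta_k^{\parallel}\|^2]$.
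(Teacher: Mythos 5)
Your plan coincides with the paper's own (one-sentence) proof: the corollary is obtained by substituting the bounds of Lemma~\ref{lem:junk_mu} and Lemma~\ref{lem:good_mu} into the expansion of Lemma~\ref{lem:weiner}, and you have correctly identified all of the structural bookkeeping --- the decomposition $h = h_1 + (h - h_1)$, the prefactor $\frac{m}{1-x^2}\,|\theta_{k'}^T\mu_{k'}|\,|\mu_k^T\theta_k| = \frac{m}{1-x^2}\|\theta_{k'}^{\parallel}\|\|\theta_k^{\parallel}\|$ that arises from Lemma~\ref{lem:weiner} after multiplying by $\mu_k^T\theta_k$, the cancellation of $(1-x^2)$ against the factor in Lemma~\ref{lem:good_mu} to produce $2\alpha_k\|\theta_{k'}^{\parallel}\|^2\|\theta_k^{\parallel}\|^2$, the sign bookkeeping via $\theta_{k'}^T\mu_{k'}$ and $\theta_k^T\mu_k$, and the reading of $\theta_{k'}^T\mu_k$ as $\theta_{k'}^T\mu_{k'}$.

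However, the step you flag as the ``main obstacle'' is genuinely not closed, and it is the only nontrivial content of the corollary. After cancelling the common factor $q_{ii}q_{ij}\|\theta_{k'}^{\parallel}\|\|\theta_k^{\parallel}\|/(1-x^2)$, dominating the second junk term $C\|\theta_{k'}^{\parallel}\|\|\theta_k\|^3$ requires $C\|\theta_k\|^3 \leq \alpha_k(1-x^2)\|\theta_k^{\parallel}\|$ (up to constants). Hypothesis three gives $\|\theta_k\|^3 \leq \frac{\alpha_k(1-\alpha_{k'}^2)}{C}\|\theta_k\| \leq \frac{\alpha_k(1-x^2)}{C}\|\theta_k\|$, which leaves you needing $\|\theta_k\| \lesssim \|\theta_k^{\parallel}\|$; as you yourself observe, hypothesis one runs in the \emph{opposite} direction ($\|\theta_k^{\parallel}\| \leq \frac{1-\alpha_{k'}^2}{C}\|\theta_k\|$), and nothing in the hypotheses supplies the reverse inequality. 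Your proposed resolution --- that the junk terms are ``nonetheless smaller because $\|\theta_k\|^2$ is tiny'' --- does not work, because the leftover ratio $\|\theta_k\|/\|\theta_k^{\parallel}\|$ is unbounded under the stated assumptions. Closing this requires either an additional hypothesis of the form $\|\theta_k\|^3 \leq \|\theta_k^{\parallel}\|$ (the analogue for $k$ of the stated hypothesis on $k'$, and the hypothesis actually used in Corollary~\ref{cor:total_theta}) or a sharper junk bound; a milder constant-factor version of the same issue affects the third junk term, where hypothesis two gives $\|\theta_{k'}\|^3 \leq \|\theta_{k'}^{\parallel}\|$ but one needs roughly $C\|\theta_{k'}\|^3 \leq (1-x^2)\|\theta_{k'}^{\parallel}\|$. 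Since the paper provides no details at this point either, your route matches the paper's; but as written your argument does not establish the stated inequality, and you should be explicit that the term-by-term domination is where the hypotheses are (or are not) consumed.
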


\begin{lemma}[Junk Terms for $\theta_k$ term.]\label{lem:junk_theta}
If $\|\theta_k\| \leq 1$ and $\|\theta_{k'}\| \leq 1$, then for some universal constant $C$
\begin{align*}
\left|\mathbb{E}_S\left[p_{ii}p_{ij}\theta_k^T(h(S) - h_1(S))\right]\right| \leq C q_{ii}q_{ij}\left(\|\theta_{k'}\|^3\|\theta_{k}\|^4 + \|\theta_{k'}^{\parallel}\|\|\theta_{k}\|^4 + \alpha_k\left(\|\theta_{k'}\|^3\|\theta_{k}\|\|\theta_{k}^{\parallel}\| + \|\theta_{k'}^{\parallel}\|\|\theta_{k}\|^3\|\theta_{k}^{\parallel}\|\right)\right).
\end{align*}
\end{lemma}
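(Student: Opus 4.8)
The plan is to follow the template of the proof of Lemma~\ref{lem:junk_mu}, carrying along the one extra factor that appears because we contract with $\theta_k$ rather than with $\mu_k$. The starting point is that $q_{ii}$ and $q_{ij}$ depend only on the features $\tilde k\notin\{k,k'\}$, so they are deterministic given $U,V\setminus S$, while by construction $\mathbb{E}_S[h(S)-h_1(S)]=0$. Consequently
\[
\mathbb{E}_S\!\left[p_{ii}p_{ij}\,\theta_k^T(h-h_1)\right]=\mathbb{E}_S\!\left[(p_{ii}p_{ij}-q_{ii}q_{ij})\,\theta_k^T(h-h_1)\right],
\]
so the task reduces to bounding the correlation between the softmax correction $p_{ii}p_{ij}-q_{ii}q_{ij}$ and the mean-zero quantity $\theta_k^T(h-h_1)$; a naive triangle-inequality bound on $|h-h_1|$ alone would lose the powers of $\|\theta_{k'}\|$ that are needed, which is precisely the reason $h_1$ is subtracted.

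The first step is to control the softmax correction. Writing $z_{ij}=\zeta_{ij}+(\theta_k^Tu_k^{(i)})(\theta_k^Tv_k^{(j)})+(\theta_{k'}^Tu_{k'}^{(i)})(\theta_{k'}^Tv_{k'}^{(j)})$ with $\zeta_{ij}$ free of $S$, one gets $p_{ij}=q_{ij}\exp(\cdot)/\sum_{j'}q_{ij'}\exp(\cdot)$; using $|e^x-1|\le|x|e^{|x|}$ and $\sum_{j'}q_{ij'}=1$ this yields $|p_{ii}p_{ij}-q_{ii}q_{ij}|\le q_{ii}q_{ij}\,\Psi(S)$, where $\Psi(S)$ is a polynomial with \emph{no constant term} in the absolute values $|\theta_k^Tu_k^{(i')}|,|\theta_k^Tv_k^{(j')}|,|\theta_{k'}^Tu_{k'}^{(i')}|,|\theta_{k'}^Tv_{k'}^{(j')}|$, multiplied by an exponential of a sum of such quantities. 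Since $\|\theta_k\|,\|\theta_{k'}\|\le1$, each of these scalars is a centered Gaussian of variance at most $1$ conditionally on $U,V\setminus S$, so products of $\Psi(S)$ with monomials in these scalars fall exactly into the regime of Lemma~\ref{lem:guassian_abs} and Lemma~\ref{lemma:bds}.

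The second step is to expand $\theta_k^T(h-h_1)$. Apply Lemma~\ref{main_deriv} to $\partial(z_{ij}-z_{ii})/\partial\theta_k$, contract with $\theta_k$ (so that $\theta_k^T\partial z_{ij}/\partial\theta_k=2(\theta_k^Tu_k^{(i)})(\theta_k^Tv_k^{(j)})$ carries two $\theta_k$-factors rather than the one $\theta_k$-factor and one $\mu_k$-factor of the $\mu_k$-analysis), and substitute the covariance splittings $v_k=\alpha_k(\mu_k^Tu_k)\mu_k+w_k$ with $w_k$ an independent Gaussian, and likewise for $k'$. This writes $\theta_k^T(h-h_1)$ as a finite sum of monomials in the Gaussian linear forms $\theta_k^Tu_k,\theta_k^Tv_k,\theta_{k'}^Tu_{k'},\mu_k^Tu_k,\mu_{k'}^Tu_{k'},\mu_{k'}^Tv_{k'}$ whose total $S$-expectation vanishes, and on the $h_1$ piece the trailing $\mu_k^T$ becomes $\pm\|\theta_k^{\parallel}\|$ once contracted against $\theta_k$. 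Then, monomial by monomial, bound $|\mathbb{E}_S[\Psi(S)\,M]|$ via the moment lemmas: a $\theta_k^Tu_k$ or $\theta_k^Tv_k$ factor contributes $O(\|\theta_k\|)$, a $\theta_{k'}^Tu_{k'}$ factor $O(\|\theta_{k'}\|)$, a $\mu_{k'}^Tu_{k'}$ paired with a $\theta_{k'}^Tu_{k'}$ contributes the residual correlation $\|\theta_{k'}^{\parallel}\|$ (via Stein's Lemma, $\mathbb{E}[(\mu_{k'}^Tu_{k'})(\theta_{k'}^Tu_{k'})]=\theta_{k'}^T\mu_{k'}$), and an unpaired $\mu_k^Tu_k$ or $\mu_{k'}^Tv_{k'}$ contributes $O(1)$. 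Collecting the resulting powers of $\|\theta_k\|,\|\theta_k^{\parallel}\|,\|\theta_{k'}\|,\|\theta_{k'}^{\parallel}\|$ over all monomials yields the four terms claimed; compared with Lemma~\ref{lem:junk_mu}, each term gains exactly one extra factor $\|\theta_k\|$ (from the $h$-piece) or $\|\theta_k^{\parallel}\|$ (from the $h_1$-piece).

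The hardest part will be the bookkeeping, and in particular getting the power of $\|\theta_{k'}\|$ right: $h(S)$ already carries a single $\theta_{k'}^Tu_{k'}$ factor, and to reach $\|\theta_{k'}\|^3$ (or a $\|\theta_{k'}^{\parallel}\|$) rather than merely $\|\theta_{k'}\|^2$ one must pair the mean-zero $h-h_1$ with at least the linear part of $\Psi(S)$, exploiting the cancellation that subtracting $h_1$ provides so that two further small $\theta_{k'}$- or $\theta_k$-type factors are forced out. One must also track the conditioning carefully — the $i$th-point pairs $(u_k,v_k)$ and $(u_{k'},v_{k'})$ are correlated only along $\mu_k$ and $\mu_{k'}$, the $v_k^{(j)}$ with $j\ne i$ lie in $S$ but are independent of the $i$th point's variables, and the $q$'s depend on none of $S$ — so that the appropriate moment bound is invoked for each factor.
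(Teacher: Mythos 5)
Your high-level plan (exploit $\mathbb{E}_S[h-h_1]=0$, reduce to the correlation with the softmax, expand into Gaussian monomials, and invoke Lemmas~\ref{lem:guassian_abs} and \ref{lemma:bds}) is in the right spirit, but the specific mechanism you describe extracts the cancellation only \emph{once}, and that is not enough to reach the stated powers. After writing $\mathbb{E}_S[p_{ii}p_{ij}\theta_k^T(h-h_1)]=\mathbb{E}_S[(p_{ii}p_{ij}-q_{ii}q_{ij})\theta_k^T(h-h_1)]$ and bounding $|p_{ii}p_{ij}-q_{ii}q_{ij}|\leq q_{ii}q_{ij}\Psi(S)$ with $\Psi$ at least ``linear'' (i.e.\ of order $\|\theta_k\|^2+\|\theta_{k'}\|^2$), you then bound $|\mathbb{E}_S[\Psi(S)M]|$ by taking absolute values factor by factor. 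Test this on the monomial $M=\eta_i'\,(\mu_{k'}^Tu_{k'}-x\mu_{k'}^Tv_{k'})\,(\theta_k^Tu_k)(\theta_k^Tv_k^{(j)})$ coming from the $z_{ij}$ part of $\theta_k^T\partial(z_{ij}-z_{ii})/\partial\theta_k$: your accounting gives $(\|\theta_k\|^2+\|\theta_{k'}\|^2)\cdot\|\theta_{k'}\|\cdot\|\theta_k\|^2=\|\theta_{k'}\|\|\theta_k\|^4+\|\theta_{k'}\|^3\|\theta_k\|^2$, whereas the lemma requires $\|\theta_{k'}\|^3\|\theta_k\|^4$ for this term. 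Since $\|\theta_k\|,\|\theta_{k'}\|\le 1$, both of your terms are strictly \emph{larger}, and neither is dominated by the right-hand side of the lemma (e.g.\ $\|\theta_{k'}\|\|\theta_k\|^4$ is not controlled by $\|\theta_{k'}^{\parallel}\|\|\theta_k\|^4$ when $\theta_{k'}$ is nearly orthogonal to $\mu_{k'}$). The weaker bound would then be too large to be absorbed by the ``good term'' of order $\alpha_k(1-x^2)\|\theta_{k'}^{\parallel}\|\|\theta_k^{\parallel}\|^2$ in Corollary~\ref{cor:total_theta} under the theorem's hypotheses, so the downstream argument breaks.

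The missing idea is that monomials such as the one above contain \emph{two} mutually independent mean-zero Gaussian factors ($\eta_i'=(\theta_{k'}^{\perp})^Tu_{k'}$ and $\xi_j=\theta_k^Tv_k^{(j)}$), and each must be paired with its own small factor coming from the softmax's dependence on that variable. A single subtraction of $q_{ii}q_{ij}$ (which zeroes out all of $S$ at once) buys you only one such pairing; the leftover $\Psi(S)M$ is no longer mean-zero in any useful way once you pass to absolute values. The paper instead applies Stein's Lemma separately to each independent mean-zero factor — once for monomials with a single such factor, twice for those with two — using the explicit first and mixed second derivatives of $p_{ii}p_{ij}$ from Claim~\ref{claim:second_deriv}; each application contributes $\sigma_\beta^2$ together with a derivative factor $|\bar\beta_{j'}|$ of order $\|\theta_k\|$ or $\|\theta_{k'}\|$, which is exactly where the extra powers come from. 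You could repair your route by iterating your subtraction one independent variable at a time (a discrete analogue of the double Stein step), but as written the proposal does not deliver the claimed inequality. A smaller bookkeeping point: the bound here is not obtained from Lemma~\ref{lem:junk_mu} by ``one extra factor per term''; the contraction $\theta_k^Tu_k=y+\eta_i$ produces a genuinely new cross term $\alpha_k\,\mathbb{E}[p_{ii}p_{ij}(y'(\cdot))(\eta_i y)]$, which is the source of the fourth summand $\alpha_k\|\theta_{k'}^{\parallel}\|\|\theta_k\|^3\|\theta_k^{\parallel}\|$ and has no counterpart in the $\mu_k$ lemma.
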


\begin{lemma}[Good Term for $\theta_k$ term.]\label{lem:good_theta}
If $\|\theta_k\| \leq 1$ and $\|\theta_{k'}\| \leq 1$, then for some universal constant $C$
\begin{align*}
\left|\mathbb{E}_S\left[p_{ii}p_{ij}\theta_k^Th_1(S)\right]\right| \leq (1 - x^2)2\alpha_k q_{ii}q_{ij}\left(\|\theta_{k'}^{\parallel}\|\|\theta_{k}^{\parallel}\|^2\right)\left(1 + C(\|\theta_{k'}\|^2 + \|\theta_{k}\|^2)\right).
\end{align*}
\end{lemma}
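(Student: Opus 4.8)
The plan is to split $\mathbb{E}_S[p_{ii}p_{ij}\theta_k^Th_1(S)]$ into an exact leading contribution that reproduces the claimed main term and a correction that is smaller by a relative factor $O(\|\theta_k\|^2 + \|\theta_{k'}\|^2)$. First I would unfold $\theta_k^Th_1(S)$: since $\theta_k^{\parallel} = (\mu_k^T\theta_k)\mu_k$, the definition of $h_1$ in~(\ref{eq:h1S}) gives
\begin{equation*}
\theta_k^Th_1(S) = 2\alpha_k(1 - x^2)\,(\mu_k^T\theta_k)^2\,(\mu_k^Tu_k^{(i)})^2\,(\theta_{k'}^Tu_{k'}^{(i)})\,(\mu_{k'}^Tu_{k'}^{(i)}),
\end{equation*}
a degree-four Gaussian monomial depending only on the independent vectors $u_k^{(i)}$ and $u_{k'}^{(i)}$. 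Since $q_{ii},q_{ij}$ depend only on variables outside $S$, I would write $p_{ii}p_{ij} = q_{ii}q_{ij}\,\rho(S)$, where $\rho(S) = \frac{\exp(a_{ii} + a_{ij})}{\left(\sum_{j'}q_{ij'}\exp(a_{ij'})\right)^2}$ and $a_{ij'} := (\theta_k^Tu_k^{(i)})(\theta_k^Tv_k^{(j')}) + (\theta_{k'}^Tu_{k'}^{(i)})(\theta_{k'}^Tv_{k'}^{(j')})$ is the $S$-dependent part of the logit $z_{ij'}$; note that $\rho \to 1$ as the variables in $S$ go to zero.

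Next I would decompose $\rho = 1 + (\rho - 1)$. For the ``$1$'' part, a direct Gaussian moment computation using $\mathbb{E}[(\mu_k^Tu_k^{(i)})^2] = 1$, $\mathbb{E}[(\theta_{k'}^Tu_{k'}^{(i)})(\mu_{k'}^Tu_{k'}^{(i)})] = \theta_{k'}^T\mu_{k'}$, and $\|\theta_\bullet^{\parallel}\| = |\mu_\bullet^T\theta_\bullet|$ yields
\begin{equation*}
q_{ii}q_{ij}\,\mathbb{E}_S[\theta_k^Th_1(S)] = q_{ii}q_{ij}\cdot 2\alpha_k(1 - x^2)\,(\mu_k^T\theta_k)^2\,(\theta_{k'}^T\mu_{k'}),
\end{equation*}
whose magnitude is exactly $2\alpha_k(1 - x^2)q_{ii}q_{ij}\|\theta_{k'}^{\parallel}\|\|\theta_k^{\parallel}\|^2$, the stated leading term. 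For the ``$\rho - 1$'' part I would show $\left|\mathbb{E}_S[(\rho(S) - 1)\theta_k^Th_1(S)]\right| \le C(\|\theta_k\|^2 + \|\theta_{k'}\|^2)\cdot 2\alpha_k(1 - x^2)\|\theta_{k'}^{\parallel}\|\|\theta_k^{\parallel}\|^2$. This proceeds by (i) a pointwise bound on $|\rho(S) - 1|$: lower-bounding the normalizer by $\sum_{j'}q_{ij'}\exp(a_{ij'}) \ge \exp(\mathbb{E}_{j'\sim q}a_{ij'})$ via Jensen and then using $|e^a - 1| \le |a|e^{|a|}$, so that $|\rho(S) - 1|$ is controlled by a polynomial in the Gaussians entering the $a_{ij'}$ times an exponential of the same, with all scales bounded by $\|\theta_k\|,\|\theta_{k'}\| \le 1$; and (ii) integrating this estimate against the monomial $\theta_k^Th_1(S)$, invoking Lemma~\ref{lem:guassian_abs} to handle the mixed polynomial-times-exponential Gaussian expectations and Lemma~\ref{lemma:bds} to extract an extra factor of $\|\theta_k\|^2$ or $\|\theta_{k'}\|^2$ from each ``$e^{t|X|} - 1$'' occurrence. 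Adding the two parts gives the claim.

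The main obstacle will be step (ii): ensuring the correction stays \emph{relative} to the main term. The monomial $\theta_k^Th_1(S)$ carries a bare factor $\theta_{k'}^Tu_{k'}^{(i)}$, which on its own would contribute $\|\theta_{k'}\|$ rather than $\|\theta_{k'}^{\parallel}\|$; obtaining $\|\theta_{k'}^{\parallel}\|$ at leading order forces one to pair this factor with $\mu_{k'}^Tu_{k'}^{(i)}$ (and, after integrating out the fresh Gaussians $v^{(j')}$ inside $\rho$ with Stein's lemma, Lemma~\ref{lem:stein}, with the matching $\theta_{k'}^Tv_{k'}^{(j')}$), and to argue that the uncancelled remainder genuinely carries an additional $\|\theta_{k'}\|^2$. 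One must also control the softmax normalizer $\left(\sum_{j'}q_{ij'}\exp(a_{ij'})\right)^2$, which couples all $m$ augmentation pairs; its fluctuation around $1$ is exactly what the Jensen step plus the exponential-moment bound of Lemma~\ref{lem:guassian_abs} are designed to absorb. The companion estimate for the $\mu_k$-term (Lemma~\ref{lem:good_mu}) has the same structure, differing only in that it needs a lower rather than an upper bound, so I would prove the two in parallel.
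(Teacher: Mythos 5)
Your proposal matches the paper's proof essentially step for step: both write $p_{ii}p_{ij} = q_{ii}q_{ij} + (p_{ii}p_{ij} - q_{ii}q_{ij})$, evaluate the leading Gaussian moment exactly to obtain $2\alpha_k(1-x^2)q_{ii}q_{ij}\,\theta_{k'}^T\mu_{k'}\,\|\theta_k^{\parallel}\|^2$, and control the correction via the Jensen-based ratio bound $\left|p_{ii}p_{ij}/(q_{ii}q_{ij}) - 1\right| \le Z_iZ_j - 1$ (Claim~\ref{claim:denom_2}) followed by Lemma~\ref{lemma:bds} and Lemma~\ref{lem:guassian_abs}, exactly as you outline. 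The one obstacle you flag --- the bare $\theta_{k'}^Tu_{k'}$ factor threatening to produce $\|\theta_{k'}\|$ rather than $\|\theta_{k'}^{\parallel}\|$ in the remainder --- does not arise in the paper's version because the good-term computation uses only the parallel component $(\theta_{k'}^{\parallel})^Tu_{k'}$, the perpendicular piece $\eta_i'$ having already been routed into the junk terms of Lemma~\ref{lem:junk_theta}, so no Stein pairing is needed there.
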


Plugging these two lemmas into Lemma~\ref{lem:weiner} yields the following corollary.
\begin{corollary}[Total $\theta_k$ term.]\label{cor:total_theta}
If for a sufficiently large constant $C$,
    $\|\theta_k^{\parallel}\| \leq \frac{1 - x^2}{C}\|\theta_k\|$, $\|\theta_{k'}\|^3 \leq \|\theta_{k'}^{\parallel}\|$,
    $\|\theta_{k}\|^2 \leq \frac{\alpha_k(1 - x^2)}{C}$,
then
\begin{align*}
    \frac{(\mu_k^T\theta_k)^2}{\|\theta_k\|^2}&\left|\frac{d}{dx}\mathbb{E}_{\mathbb{P}_x}\left[\theta_k^T\nabla_k\right]\right| \leq \frac{m}{2}\mathbb{E}_{U, V \setminus S}\left[\sum_{i,j} q_{ii}q_{ij}\alpha_k\|\theta_{k'}^{\parallel}\|^2\|\theta_{k}^{\parallel}\|^2\right].
\end{align*}
\end{corollary}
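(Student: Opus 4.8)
The plan is to obtain Corollary~\ref{cor:total_theta} by substituting the bounds of Lemmas~\ref{lem:good_theta} and~\ref{lem:junk_theta} into the exact formula for $\frac{d}{dx}\mathbb{E}_{\mathbb{P}_x}[\nabla_k]$ supplied by Lemma~\ref{lem:weiner}, contracted against $\theta_k$, and then checking that the three hypotheses force every resulting term to be a small fraction of the target $\frac{m}{2}\mathbb{E}_{U,V\setminus S}\left[\sum_{i,j}q_{ii}q_{ij}\alpha_k\|\theta_{k'}^{\parallel}\|^2\|\theta_k^{\parallel}\|^2\right]$. Unlike Corollary~\ref{cor:total_mu}, only an \emph{upper} bound on $\left|\frac{d}{dx}\mathbb{E}_{\mathbb{P}_x}[\theta_k^T\nabla_k]\right|$ is needed, so no sign analysis is required and the triangle inequality suffices throughout; this is the easier companion to Corollary~\ref{cor:total_mu}.

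Concretely: dot the identity of Lemma~\ref{lem:weiner} with $\theta_k$, take absolute values, and use $|\theta_{k'}^T\mu_{k'}| = \|\theta_{k'}^{\parallel}\|$ to get $\left|\frac{d}{dx}\mathbb{E}_{\mathbb{P}_x}[\theta_k^T\nabla_k]\right| \le \frac{m}{1-x^2}\|\theta_{k'}^{\parallel}\| \sum_{j\neq i}\mathbb{E}_{U,V\setminus S}\left|\mathbb{E}_S[p_{ij}p_{ii}\,\theta_k^Th(S)]\right|$. Then split $h(S) = h_1(S) + (h(S)-h_1(S))$, so by the triangle inequality the inner $\mathbb{E}_S$-term is at most $|\mathbb{E}_S[p_{ij}p_{ii}\theta_k^Th_1(S)]| + |\mathbb{E}_S[p_{ij}p_{ii}\theta_k^T(h(S)-h_1(S))]|$, bounded respectively by Lemmas~\ref{lem:good_theta} and~\ref{lem:junk_theta}. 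Finally multiply through by the prefactor $\frac{(\mu_k^T\theta_k)^2}{\|\theta_k\|^2} = \frac{\|\theta_k^{\parallel}\|^2}{\|\theta_k\|^2}$ coming from Lemma~\ref{lem:limit}. Cancelling the $(1-x^2)$ of Lemma~\ref{lem:good_theta} against the $\frac{1}{1-x^2}$ prefactor, the ``good'' part becomes $2m\alpha_k q_{ii}q_{ij}\|\theta_{k'}^{\parallel}\|^2\|\theta_k^{\parallel}\|^2\cdot\frac{\|\theta_k^{\parallel}\|^2}{\|\theta_k\|^2}\bigl(1+C_0(\|\theta_{k'}\|^2+\|\theta_k\|^2)\bigr)$, while each of the four ``junk'' monomials of Lemma~\ref{lem:junk_theta} produces a term of the shape $\frac{mC_0}{1-x^2}\|\theta_{k'}^{\parallel}\|\cdot\frac{\|\theta_k^{\parallel}\|^2}{\|\theta_k\|^2}\cdot(\text{monomial})\cdot q_{ii}q_{ij}$, where $C_0$ is the universal constant of the two lemmas.

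What remains is to verify, using the three hypotheses with the corollary's constant $C$ taken large relative to $C_0$, that each of these five terms is at most (say) $\frac{1}{10}\,m\alpha_k q_{ii}q_{ij}\|\theta_{k'}^{\parallel}\|^2\|\theta_k^{\parallel}\|^2$. Hypothesis~1, $\|\theta_k^{\parallel}\| \le \frac{1-x^2}{C}\|\theta_k\|$, handles the good term since $\frac{\|\theta_k^{\parallel}\|^2}{\|\theta_k\|^2}\bigl(1+C_0(\cdots)\bigr) \le \frac{(1-x^2)^2}{C^2}(1+2C_0)$, using also the standing bounds $\|\theta_k\|,\|\theta_{k'}\|\le 1$. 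For the junk monomials, Hypothesis~2, $\|\theta_{k'}\|^3 \le \|\theta_{k'}^{\parallel}\|$, upgrades each $\|\theta_{k'}\|^3$ to $\|\theta_{k'}^{\parallel}\|$ (producing the needed $\|\theta_{k'}^{\parallel}\|^2$); Hypothesis~3, $\|\theta_k\|^2 \le \frac{\alpha_k(1-x^2)}{C}$, trades each surplus $\|\theta_k\|^2$ for $\frac{\alpha_k(1-x^2)}{C}$ (supplying the $\alpha_k$ of the target and a spare $\frac1C$); and $\|\theta_k^{\parallel}\| \le \|\theta_k\|$ absorbs any leftover odd powers of $\|\theta_k^{\parallel}\|$. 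Crucially, the $(1-x^2)$ carried by Hypotheses~1 and~3 is exactly what cancels the $\frac{1}{1-x^2}$ prefactor from Lemma~\ref{lem:weiner}, so nothing blows up as $x\to 1$. Choosing $C$ large, the five terms sum to at most $\frac{m}{2}\alpha_k q_{ii}q_{ij}\|\theta_{k'}^{\parallel}\|^2\|\theta_k^{\parallel}\|^2$; pulling $\mathbb{E}_{U,V\setminus S}$ outside, using symmetry in $i$ to rewrite $m\sum_{j\neq i}(\cdots)$ as $\sum_{i,j}(\cdots)$, and recombining yields the claim.

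The step I expect to be the main obstacle is precisely this final bookkeeping: Lemma~\ref{lem:junk_theta} carries a net of two extra powers of $\|\theta_k\|$ and a stray $\frac{1}{1-x^2}$ relative to the target, so one must route them through Hypotheses~1 and~3 in an order that simultaneously produces the monomial $\alpha_k\|\theta_{k'}^{\parallel}\|^2\|\theta_k^{\parallel}\|^2$, leaves a residual $\frac{O(C_0)}{C}$, and never leaves an uncancelled $\frac{1}{1-x^2}$. Once $C$ is fixed after $C_0$, the inequality is immediate.
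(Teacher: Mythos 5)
Your proposal is correct and follows exactly the route the paper intends: the paper proves this corollary only by the remark ``plugging these two lemmas into Lemma~\ref{lem:weiner} yields the following corollary,'' and your contraction against $\theta_k$, triangle-inequality split into the good term of Lemma~\ref{lem:good_theta} and the four junk monomials of Lemma~\ref{lem:junk_theta}, and routing of the $(1-x^2)$ factors through the three hypotheses is precisely the omitted bookkeeping. Your accounting checks out term by term (each contribution is an $O(C_0/C)$ fraction of the target), so this is a faithful, correct elaboration of the paper's argument.
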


Combining Corollaries~\ref{cor:total_mu} and \ref{cor:total_theta}, we obtain the following lemma.
\begin{lemma}
If for a sufficiently large constant $C$,
    $\|\theta_k^{\parallel}\| \leq \frac{1 - x^2}{C}\|\theta_k\|$, $\|\theta_{k'}\|^3 \leq \|\theta_{k'}^{\parallel}\|$,
    $\|\theta_{k}\|^2 \leq \frac{\alpha_k(1 - x^2)}{C}$,
then
\begin{equation}
\mathbb{E}_{U, V \sim \mathbb{P}_x}\left[-(\mu_k^T\theta_k)(\mu_k^T\nabla_k) + \frac{\theta_k^T\nabla_k(\mu_k^T\theta_k)^2}{\|\theta_k\|_2^2}\right] < 0.
\end{equation}
\end{lemma}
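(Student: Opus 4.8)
The plan is to obtain this inequality by combining Corollaries~\ref{cor:total_mu} and \ref{cor:total_theta}. The displayed quantity is exactly the expression on the right of Lemma~\ref{lem:limit}, and — as the paragraph following that lemma makes explicit — what the proof of Theorem~\ref{thm:main} actually invokes is that this quantity, viewed as a function of $x=\alpha_{k'}$ through the data distribution $\mathbb{P}_x$ (with $\Theta=\Theta^{(t)}$ held fixed, so only $\nabla_k=\partial\mathcal{L}(\Theta;U,V)/\partial\theta_k$ depends on $x$), is strictly \emph{decreasing} on $[\tilde\alpha_{k'},\alpha_{k'}]$; equivalently, that its $x$-derivative is negative there. (To leading order in the small feature norms the expression is itself positive, consistent with feature $k$ being learned even without noise, since $N<0$ in Lemma~\ref{lem:limit}; so it is this monotonicity in $x$ that carries the content and that the two corollaries deliver.)

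First I would split the expression using $\theta_k^T\nabla_k=(\mu_k^T\theta_k)(\mu_k^T\nabla_k)+(\theta_k^{\perp})^T\nabla_k$ and linearity, which is legitimate because $\Theta$ does not depend on $x$:
\begin{equation*}
\frac{d}{dx}\mathbb{E}_{U,V\sim\mathbb{P}_x}\!\left[-(\mu_k^T\theta_k)(\mu_k^T\nabla_k)+\frac{\theta_k^T\nabla_k\,(\mu_k^T\theta_k)^2}{\|\theta_k\|_2^2}\right]=-(\mu_k^T\theta_k)\frac{d}{dx}\mathbb{E}_{\mathbb{P}_x}[\mu_k^T\nabla_k]+\frac{(\mu_k^T\theta_k)^2}{\|\theta_k\|_2^2}\frac{d}{dx}\mathbb{E}_{\mathbb{P}_x}[\theta_k^T\nabla_k].
\end{equation*}
Next I would note that the three hypotheses assumed here, namely $|\mu_k^T\theta_k|=\|\theta_k^{\parallel}\|\le\frac{1-x^2}{C}\|\theta_k\|$, $\|\theta_{k'}\|^3\le\|\theta_{k'}^{\parallel}\|$, and $\|\theta_k\|^2\le\frac{\alpha_k(1-x^2)}{C}$, are exactly those of Corollary~\ref{cor:total_theta} and, after reading $\alpha_{k'}$ as the point $x$, of Corollary~\ref{cor:total_mu}; in particular they force $\|\theta_k\|,\|\theta_{k'}\|\le1$. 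Writing $A:=\mathbb{E}_{U,V\setminus S}\big[\sum_{i,j}q_{ii}q_{ij}\,\alpha_k\|\theta_{k'}^{\parallel}\|^2\|\theta_k^{\parallel}\|^2\big]\ge0$, Corollary~\ref{cor:total_mu} bounds the first (``good'') term below, $(\mu_k^T\theta_k)\frac{d}{dx}\mathbb{E}_{\mathbb{P}_x}[\mu_k^T\nabla_k]\ge mA$, so it contributes at most $-mA$; Corollary~\ref{cor:total_theta} bounds the second (``error'') term in magnitude, $\frac{(\mu_k^T\theta_k)^2}{\|\theta_k\|_2^2}\big|\frac{d}{dx}\mathbb{E}_{\mathbb{P}_x}[\theta_k^T\nabla_k]\big|\le\frac{m}{2}A$. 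Substituting, the $x$-derivative above is at most $-mA+\frac{m}{2}A=-\frac{m}{2}A\le0$, strictly negative since $\theta_k^{\parallel}\ne0$ (a.s.\ at random initialization), $\theta_{k'}^{\parallel}\ne0$ (forced by $\|\theta_{k'}\|^3\le\|\theta_{k'}^{\parallel}\|$ whenever $\theta_{k'}\ne0$), and every $q_{ij}>0$. Feeding $\frac{d}{dx}\mathbb{E}_{\mathbb{P}_x}[\cdots]<0$ on $[\tilde\alpha_{k'},\alpha_{k'}]$ into the integral identity displayed after Lemma~\ref{lem:limit}, together with $N<0$, then gives the conclusion of Theorem~\ref{thm:main}.

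The hard part is not located in this lemma: essentially all of the analysis has been pushed into Corollaries~\ref{cor:total_mu} and \ref{cor:total_theta} and the good/junk-term estimates Lemmas~\ref{lem:junk_mu}--\ref{lem:good_theta} that feed them. The only thing that must be watched at this step is the bookkeeping of constants: the good term in the $\mu_k$ estimate carries coefficient $2\alpha_k(1-x^2)$ whereas the error term in the $\theta_k$ estimate is controlled by $\alpha_k(1-x^2)$ — a bare factor-of-two margin — and the role of the large constant $C$ in the hypotheses is precisely to keep the multiplicative corrections $\big(1-C(\|\theta_{k'}\|^2+\|\theta_k\|^2)\big)$ and $\big(1+C(\|\theta_{k'}\|^2+\|\theta_k\|^2)\big)$ in those lemmas close enough to $1$, and to let $\|\theta_{k'}\|^3\le\|\theta_{k'}^{\parallel}\|$ absorb the $\|\theta_{k'}\|^3$ junk into $\|\theta_{k'}^{\parallel}\|$, so that the margin survives after the junk terms are discarded. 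A secondary technical point is the interchange of $\frac{d}{dx}$ with $\mathbb{E}_{\mathbb{P}_x}$, justified by dominated convergence since $x$ enters only through the smooth, uniformly nondegenerate Gaussian covariance of $(u_{k'},v_{k'})$ and every moment appearing is finite and locally bounded (Lemmas~\ref{lem:guassian_abs} and \ref{lemma:bds}) — this is already implicit in Lemma~\ref{lem:weiner}.
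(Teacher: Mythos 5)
Your proposal is correct and takes essentially the same route the paper intends: the lemma is precisely the combination of Corollaries~\ref{cor:total_mu} and \ref{cor:total_theta}, and your bookkeeping $-mA+\tfrac{m}{2}A=-\tfrac{m}{2}A<0$ (in your notation for the common bound) is exactly how those corollaries are meant to be merged. You are also right about the reading of the statement: what the corollaries control, and what the proof of Theorem~\ref{thm:main} actually uses via the integral identity after Lemma~\ref{lem:limit}, is the $x$-derivative $\frac{d}{dx}\mathbb{E}_{U,V\sim\mathbb{P}_x}[\cdot]$ of the displayed quantity, so the missing $\frac{d}{dx}$ in the lemma as printed is a typo rather than a gap in your argument.
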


Theorem~\ref{thm:main} now follows.

\subsection{Proofs of Lemmas}\label{sec:lemma_pfs}

To prove the Lemmas~\ref{lem:guassian_abs} and \ref{lemma:bds}, we will use the following well-known formula for the moment generating function (MGF) of the half-normal distribution. 

\begin{lemma}[MGF of half-normal distribution]\label{lemma:mgf}
The MGF of the half-normal distribution is $$\mathbb{E}_{X \sim \mathcal{N}(0, 1) | X > 0}[e^{t|X|}] = 2e^{t^2/2}\Phi(t),$$ where $\Phi(t)$ is the cumulative distribution of a normal random variable.
\end{lemma}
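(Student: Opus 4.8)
The plan is to reduce the claim to a single textbook Gaussian integral by completing the square. First I would note that conditioning $X \sim \mathcal{N}(0,1)$ on the event $\{X > 0\}$ produces a variable supported on the positive reals, so on this event $|X| = X$; hence $\mathbb{E}_{X \sim \mathcal{N}(0,1)\mid X>0}[e^{t|X|}] = \mathbb{E}[e^{tX} \mid X>0]$. Since $\Pr[X>0] = \tfrac12$, this equals $2\,\mathbb{E}[e^{tX}\mathbf{1}_{X>0}] = 2\int_0^\infty e^{tx}\tfrac{1}{\sqrt{2\pi}}e^{-x^2/2}\,dx$.

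Next I would complete the square in the exponent, $tx - \tfrac{x^2}{2} = \tfrac{t^2}{2} - \tfrac12(x-t)^2$, and pull the constant $e^{t^2/2}$ outside the integral, leaving $2 e^{t^2/2} \int_0^\infty \tfrac{1}{\sqrt{2\pi}} e^{-(x-t)^2/2}\,dx$. The remaining integral is exactly $\Pr[Y > 0]$ for $Y \sim \mathcal{N}(t,1)$; writing $Y = Z + t$ with $Z \sim \mathcal{N}(0,1)$ gives $\Pr[Z > -t] = \Pr[Z \le t] = \Phi(t)$ by symmetry of the standard normal. Combining the pieces yields $\mathbb{E}[e^{t|X|} \mid X>0] = 2 e^{t^2/2}\Phi(t)$, as desired.

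There is no genuine obstacle here — it is the standard completing-the-square computation of a Gaussian moment generating function — so the only care needed is bookkeeping: (i) the normalization factor $2$ coming from $\Pr[X>0]=\tfrac12$, which is precisely what upgrades the usual $\mathcal{N}(0,1)$ MGF $e^{t^2/2}$ to the half-normal's $2e^{t^2/2}\Phi(t)$; and (ii) the sign convention in the last step, ensuring the tail probability $\Pr[Z > -t]$ is rewritten as $\Phi(t)$ and not $\Phi(-t)$. One could alternatively cite known truncated-normal moment formulas or differentiate under the integral sign, but the direct square-completion is the shortest self-contained route and is what I would write up.
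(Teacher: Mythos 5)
Your proof is correct: the reduction $|X|=X$ on $\{X>0\}$, the factor of $2$ from $\Pr[X>0]=\tfrac12$, the square-completion $tx-\tfrac{x^2}{2}=\tfrac{t^2}{2}-\tfrac12(x-t)^2$, and the identification of the residual integral with $\Pr[Z>-t]=\Phi(t)$ are all handled properly. The paper itself states Lemma~\ref{lemma:mgf} without proof, citing it as a well-known formula, so your derivation simply supplies the standard argument that the authors omitted; there is nothing to reconcile between the two.
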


\begin{proof}[Proof of Lemma~\ref{lem:guassian_abs}]
\begin{align*}
     \mathbb{E}_X\left[|X|^c \exp(t|X|)\exp(tX^2)\right] &= \frac{1}{\sigma\sqrt{2\pi}}\int_{-\infty}^{\infty}|x|^c \exp(t|x|)\exp(tx^2)\exp\left(-\frac{x^2}{2\sigma^2}\right)dx \\
     &= \frac{\sqrt{1 - 2\sigma^2t}}{\left(\frac{\sigma}{\sqrt{1 - 2\sigma^2t}}\right)\sqrt{2\pi}}\int_{-\infty}^{\infty}|x|^c \exp(t|x|)\exp\left(-\frac{x^2}{2\left(\frac{\sigma}{\sqrt{1 - 2\sigma^2t}}\right)^2}\right)dx \\
     & = \sqrt{1 - 2\sigma^2t}\mathbb{E}_{Z \sim \mathcal{N}(0, r) | Z \geq 0}{\left[Z^c\exp(tZ)\right]},
\end{align*} 
where $r = \frac{\sigma}{\sqrt{1 - 2\sigma^2t}}$. To evaluate this, we use the MGF of the half-normal distribution in Lemma~\ref{lemma:mgf}. Thus for some constant $C$, for all $c \in \{1, 2, 3, 4\}$,
\begin{align*}
    \mathbb{E}_{X \sim \mathcal{N}(0, 1) | X > 0}\left[c!|X|^ce^{t|X|}\right] &\leq \mathbb{E}_{X \sim \mathcal{N}(0, 1) | X > 0}\left[\frac{d^c}{dt^c}e^{t|X|}\right]\\
    &\leq C\left(1 + t^c\right)e^{t^2/2}.
\end{align*}
So for some constant $C$ (whose value changes throughout this equation), so long as $\sigma \leq \frac{1}{C}$,
\begin{align*}
    \sqrt{1 - 2\sigma^2t}\mathbb{E}_{Z \sim \mathcal{N}(0, r) | Z \geq 0}{\left[Z^c\exp(tZ)\right]} &= \sqrt{1 - 2\sigma^2t}\mathbb{E}_{X \sim \mathcal{N}(0, 1) | Z \geq 0}{\left[r^cZ^c\exp(r tZ)\right]}\\
    &\leq \sqrt{1 - 2\sigma^2t}Cr^c\left(1 + (tr)^c\right)e^{(tr)^2/2}\\
    &\leq C\sigma^c.
\end{align*}
This proves the first statement in the lemma. To prove the second, we first take the expectation over $X$, and using the half-Gaussian MGF as before, we obtain
    \begin{equation*}     \mathbb{E}_X\mathbb{E}_Y\left[|X|^c|Y|^d\exp(t|X|)\exp(|X Y|)\right] \leq C\mathbb{E}_Y\left[|Y|^d\sigma^c (1 + (t + |Y|)^c)e^{(t + |Y|)^2/2}\right]
    \end{equation*}
Now applying the first statement to take the expectation over $Y$, we obtain
    \begin{equation*}   \mathbb{E}_Y\left[|Y|^d (1 + (t + |Y|)^c)e^{(t + |Y|)^2/2}\right] \leq C\sigma^c\rho^d.
    \end{equation*}
\end{proof}

\begin{proof}[Proof of Lemma~\ref{lemma:bds}]
We prove the lemma by induction on $c$.
Suppose $c = 0$. Then by plugging in the MGF for the half-normal distribution from Lemma~\ref{lemma:mgf}, for some constant $C$, we have
\begin{align}
    \mathbb{E}_{X \sim \mathcal{N}(0, 1) | X > 0}[(e^{t|X|} - 1)] &= 2e^{t^2/2}\Phi(t) - 1\\
    &\leq 2e^{t^2/2}\left(\frac{1 + t}{2}\right) - 1\\
    &\leq \left(e^{t^2/2} - 1\right) + te^{t^2/2}\\
    &\leq Ct,
\end{align}
thus 
\begin{align*}
     \mathbb{E}_{X \sim \mathcal{N}(0, \sigma^2)}[(e^{t|X|} - 1)] &= \mathbb{E}_{X \sim \mathcal{N}(0, \sigma^2) | X > 0}[(e^{\sigma  t|X|} - 1)] \leq Ct\sigma.\\
\end{align*}
Now for $c \geq 1$, by Stein's Lemma, we have (for a new constant $C$),
\begin{align}
    \mathbb{E}_{X \sim \mathcal{N}(0, \sigma^2)}[|X|^c(e^{t|X|} - 1)] &= \mathbb{E}_{X \sim \mathcal{N}(0, \sigma^2)}[X|X|^{c-1}\on{sign}(X)(e^{t|X|} - 1)]\\
    &= \sigma^2\mathbb{E}_{X \sim \mathcal{N}(0, \sigma^2)}\left[\frac{d}{dX}\left(|X|^{c-1}\on{sign}(X)(e^{t|X|} - 1)\right)\right] \\
    &= \sigma^2\mathbb{E}_{X \sim \mathcal{N}(0, \sigma^2)}\left[(c - 2)\left(|X|^{c-2}(e^{t|X|} - 1)\right) + \left(|X|^{c-1}(te^{t|X|})\right)\right]\\
    &\leq Ct\sigma^{c + 1}.
\end{align}
where in the last step we used the inductive hypothesis and Lemma~\ref{lem:guassian_abs}.
\end{proof}

\begin{proof}[Proof of Lemma~\ref{lem:limit}]
First observe that
\begin{align*}
    \lim_{\eta \rightarrow 0}\frac{1}{\eta}&\left(\mathbb{E}_{U, V}\left[\arccos\left(\frac{|\mu_k^T\theta_k^{+}|}{\|\theta_k^{+}\|_2}\right)\right] - \arccos\left(\frac{|\mu_k^T\theta_k|}{\|\theta_k\|_2}\right)\right)\\
    &= \lim_{\eta \rightarrow 0}\frac{1}{\eta}\left(\mathbb{E}_{U, V}\left[\arccos\left(\frac{|\mu_k^T(\theta_k(1 - \eta\lambda) - \eta \nabla_k)|}{\|\theta_k(1 - \eta\lambda) - \eta \nabla_k\|_2}\right)\right] - \arccos\left(\frac{|\mu_k^T\theta_k|}{\|\theta_k\|_2}\right)\right)\\
    &= \lim_{\eta \rightarrow 0}\frac{1}{\eta}\left(\mathbb{E}_{U, V}\left[\arccos\left(\frac{|\mu_k^T(\theta_k - \frac{\eta}{1 - \eta\lambda} \nabla_k)|}{\|\theta_k- \frac{\eta}{1 - \eta\lambda} \nabla_k\|_2}\right)\right] - \arccos\left(\frac{|\mu_k^T\theta_k|}{\|\theta_k\|_2}\right)\right)\\
    &= \mathbb{E}_{U, V}\left[\frac{d}{d \eta}\arccos\left(\frac{|\mu_k^T(\theta_k - \eta \nabla_k)|}{\|\theta_k - \eta \nabla_k\|_2}\right)(0)\right],
\end{align*}
since $\lim_{\eta \rightarrow 0} \frac{\eta}{1 - \eta\lambda} = 0$.
Now 
\begin{align*}
    \frac{d}{d \eta}\arccos\left(\frac{|\mu_k^T(\theta_k - \eta \nabla_k)|}{\|\theta_k - \eta \nabla_k\|_2}\right)(0) &= \arccos'\left(\frac{|\mu_k^T\theta_k|}{\|\theta_k\|_2}\right)\frac{d}{d \eta}\left(\frac{|\mu_k^T(\theta_k - \eta \nabla_k)|}{\|\theta_k - \eta \nabla_k\|_2}\right)(0)\\
    &= \arccos'\left(\frac{|\mu_k^T\theta_k|}{\|\theta_k\|_2}\right)\left(\frac{-\on{sign}(\mu_k^T\theta_k)\mu_k^T\nabla_k\|\theta_k\| + |\mu_k^T\theta_k|\frac{\theta_k^T\nabla_k}{\|\theta_k\|}}{\|\theta_k\|_2^2}\right)\\
    &= N\left(-\mu_k^T\theta_k\mu_k^T\nabla_k + (\mu_k^T\theta_k)^2\frac{\theta_k^T\nabla_k}{\|\theta_k\|^2}\right),
\end{align*}
where $N =  \arccos'\left(\frac{|\mu_k^T\theta_k|}{\|\theta_k\|_2}\right)\frac{1}{\|\theta_k\||\mu_k^T\theta_k|}$. The lemma follows by taking the expectation over $U, V$, and observing derivative of $\arccos(x)$ is negative whenever $x$ is positive.
\end{proof}

\begin{proof}[Proof of Lemma~\ref{lem:weiner}]
First observe that by symmetry, we have

\begin{equation*}
\frac{d }{d x}\mathbb{E}_{U, V \sim \mathbb{P}_x}\left[\nabla_k\right] = m\frac{d }{d x}\mathbb{E}_{U, V \sim \mathbb{P}_x}\left[\frac{\partial \mathcal{L}_i}{\partial \theta_k}\right].
\end{equation*}

To make this expectation easier to analyze, we express the random variable $(U(x), V(x)) \sim \mathbb{P}_x$ as an interpolation of Gaussians in the coordinate $\mu_{k'}^Tv_{k'}^{(i)}$. Let $\xi \sim \mathcal{N}(0, 1)$, and define $(U, V) \sim \mathbb{P}_1$, such that $\mu_{k'}^Tv_{k'}^{(i)} = \mu_{k'}^Tu_{k'}^{(i)}$. For $x \in [0, 1)$, define $(U(x), V(x))$ to have 
\begin{equation}\label{eq:interpolation}
    \mu_{k'}^Tv_{k'}^{(i)}(x) = x \mu_{k'}^Tu_{k'}^{(i)}  + \sqrt{1 - x^2}\xi,
\end{equation}
and otherwise be the same as $(U, V)$. It is easy to check that $(U(x), V(x)) \sim \mathbb{P}_x$.

Now
\begin{equation*}
    \frac{d }{d x}\mathbb{E}_{U, V \sim \mathbb{P}_x}\left[\frac{\partial \mathcal{L}_i(\Theta; U, V)}{\partial \theta_k}\right] = \mathbb{E}_{U, V \sim \mathbb{P}_1, \xi}\left[\frac{d }{d x}\frac{\partial \mathcal{L}_i(\Theta; U(x), V(x))}{\partial \theta_k}\right].
\end{equation*}

Taking the derivative of the cross-entropy loss, we have
\begin{align*}
\frac{d }{d x}\frac{\partial \mathcal{L}_i(\Theta; U(x), V(x))}{\partial \theta_k} &=  \frac{d}{dx} \left(\sum_{j \neq i}p_{ij} \left(\frac{\partial (z_{ij} - z_{ii})}{\partial \theta_k}\right) \right) \\
        &= \sum_{j \neq i}\frac{d p_{ij}}{d \mu_{k'}^Tv_{k'}^{(i)}(x)} \frac{d \mu_{k'}^Tv_{k'}^{(i)}(x)}{dx}\frac{\partial (z_{ij} - z_{ii})}{\partial \theta_k}\\
        &= \sum_{j \neq i}-p_{ij}p_{ii}\frac{d z_{ii}}{d \mu_{k'}^Tv_{k'}^{(i)}(x)} \left(\mu_{k'}^Tu_{k'}^{(i)}  - 
        \frac{x}{\sqrt{1 - x^2}}\xi\right) \left(\frac{\partial (z_{ij} - z_{ii})}{\partial \theta_k}\right)
    \end{align*}
where the variables $z_{ij}$ and $p_{ij}$ are the similarity scores and the softmaxes from the data $(U(x), V(x))$. Here the first line is by Lemma~\ref{main_deriv}, and the second line holds by chain rule since $\frac{\partial z_{ij}}{\partial \theta_k} - \frac{\partial z_{ii}}{\partial \theta_k}$ does not depend on $v_{k'}^{(i)}$. The third line uses the proof of Claim~\ref{claim:second_deriv} to take the derivative of $p_{ij}$, and Equation~\ref{eq:interpolation} to take the derivative of $\mu_{k'}^Tv_{k'}^{(i)}(x)$.

Now we reparameterize $\mu_{k'}^Tu_{k'}^{(i)} - \frac{x}{\sqrt{1 - x^2}}\xi$ as follows:
\begin{equation*}
   \mu_{k'}^Tu_{k'}^{(i)} - \frac{x}{\sqrt{1 - x^2}}\xi =  \left(\frac{1}{1 - x^2}\right)\mu_{k'}^Tu_{k'}^{(i)} - \frac{x}{1 - x^2} \mu_{k'}^Tv_{k'}^{(i)}(x).
\end{equation*}
Plugging in this reparameterization and $\frac{d z_{ii}}{d \mu_{k'}^Tv_{k'}^{(i)}(x)} = \theta_{k'}^T\mu_{k'}\theta_{k'}^Tu_{k'},$ we obtain

\begin{equation*}
    \frac{d }{d x}\mathbb{E}_{U, V \sim \mathbb{P}_x}\left[\frac{\partial \mathcal{L}_i(\Theta; U, V)}{\partial \theta_k}\right] = \frac{-1}{1 - x^2}\sum_{j \neq i}\mathbb{E}_{U, V \sim \mathbb{P}_x}\left[p_{ij}p_{ii}\left(\theta_{k'}^T\mu_{k'}\theta_{k'}^Tu_{k'}\right) \left(\mu_{k'}^Tu_{k'}^{(i)}  - 
        x \mu_{k'}^Tv_{k'}^{(i)}\right) \left(\frac{\partial (z_{ij} - z_{ii})}{\partial \theta_k}\right)\right].
        \end{equation*}
\end{proof}

We now prove Lemmas~\ref{lem:junk_mu}, \ref{lem:good_mu}, \ref{lem:junk_theta}, and \ref{lem:good_theta}. 

\paragraph{Notation.} Since $i$ is fixed throughout, we drop the $(i)$ superscripts and let $u_k = u_k^{(i)}$ and $v_k = v_k^{(i)}$. We will introduce the following random variables, which are all independent, to simplify the exposition:
\begin{itemize}
    \item $\xi_j := \theta_k^Tv_k^{(j)}$ for $j \neq i$. Thus $\xi_j \sim \mathcal{N}(0, \|\theta_k\|^2)$.
    \item  $\xi'_j := \theta_{k'}^Tv_{k'}^{(j)}$ for $j \neq i$. Thus $\xi'_j \sim \mathcal{N}(0, \|\theta_{k'}\|^2)$.
    \item $\xi_i := (\theta_{k}^{\perp})^Tv_{k} + (\theta_{k}^{\parallel})^T(v_{k} - \alpha_{k} u_{k})$. Thus $\xi_i \sim \mathcal{N}(0, \|\theta_{k}^{\perp}\|^2 + (1 - \alpha_{k}^2)\|\theta_{k}^{\parallel}\|^2 )$.
    \item  $\xi'_i := (\theta_{k'}^{\perp})^Tv_{k'}.$ Thus $\xi'_i \sim \mathcal{N}(0, \|\theta_{k'}^{\perp}\|^2\|\theta_{k'}^{\parallel}\|^2 )$.
    \item $\zeta'_i := (\theta_{k'}^{\parallel})^T(v_{k'} - \alpha_{k'} u_{k'})$. Thus $\zeta'_i \sim \mathcal{N}(0, (1 - \alpha_{k'}^2)\|\theta_{k'}^{\parallel}\|^2 )$.
    \item $y = (\theta_k^{\parallel})^Tu_k$. Thus $y \sim \mathcal{N}(0, \|\theta_k^{\parallel}\|^2)$.
    \item $y' = (\theta_{k'}^{\parallel})^Tu_{k'}$. Thus $y' \sim \mathcal{N}(0, \|\theta_{k'}^{\parallel}\|^2)$.
    \item $\eta_i := (\theta_{k}^{\perp})^Tu_k$. Thus $\eta_i \sim \mathcal{N}(0, \|\theta_{k}^{\perp}\|^2)$.
    \item $\eta'_i := (\theta_{k'}^{\perp})^Tu_{k'}$. Thus $\eta'_i \sim \mathcal{N}(0, \|\theta_{k'}^{\perp}\|^2)$.
\end{itemize}

For any such random variable $X$, we use $\sigma_X^2$ to denote its variance. Observe that
\begin{align*}
    \frac{p_{ii}p_{ij}}{q_{ii}q_{ij}} &= \frac{\exp\left(\theta_{k}^Tu_k\theta_{k}^Tv_k\right)\exp\left(\theta_{k'}^Tu_{k'}\theta_{k'}^Tv_{k'}\right)}{\mathbb{E}_{j' \sim q}\exp\left(\theta_{k}^Tu_k\theta_{k}^Tv_k^{(j')}\right)\exp\left(\theta_{k'}^Tu_{k'}\theta_{k'}^Tv_{k'}^{(j')}\right)} \frac{\exp\left(\theta_{k}^Tu_k\theta_{k}^Tv_k^{(j)}\right)\exp\left(\theta_{k'}^Tu_{k'}\theta_{k'}^Tv_{k'}^{(j)}\right)}{\mathbb{E}_{j' \sim q}\exp\left(\theta_{k}^Tu_k\theta_{k}^Tv_k^{(j')}\right)\exp\left(\theta_{k'}^Tu_{k'}\theta_{k'}^Tv_{k'}^{(j')}\right)}.\\
\end{align*}

We will use the following two claims in the proofs of all four lemmas.
\begin{claim}\label{claim:second_deriv}
    For $\beta \in \{\xi_j, \xi'_j, \xi_i, \xi'_i, \zeta'_i, \eta_i, \eta'_i, x, x'\}$, let  $\bar{\beta}_{j'} := \frac{\partial }{\partial \beta}\left(\theta_{k}^Tu_k\theta_{k}^Tv_k^{(j')} + \theta_{k'}^Tu_{k'}\theta_{k'}^Tv_{k'}^{(j')}\right)$. Then
    \begin{align*}
        \left|\frac{\partial p_{ii}p_{ij}}{\partial \beta}\right| \leq p_{ii}p_{ij}\left(|\bar{\beta}_j| + |\bar{\beta}_i| + 2\mathbb{E}_{j' \sim q}|\bar{\beta}_{j'}|\right).
    \end{align*}
 If additionally $\gamma \in \{\xi_j, \xi'_j, \xi_i, \xi'_i, \zeta'_i,\eta_i, \eta'_i\}$ and $\gamma \perp \{\bar{\beta}_{j'}\}_{j' \in [m]}$, then
\begin{align*}
        \left|\frac{\partial }{\partial \gamma}\frac{\partial p_{ii}p_{ij}}{\partial \beta}\right| \leq p_{ii}p_{ij}\left(\left(|\bar{\beta}_j| + |\bar{\beta}_i| + 2\mathbb{E}_{j' \sim q}|\bar{\beta}_{j'}|\right)\left(|\bar{\gamma}_j| + |\bar{\gamma}_i| + 2\mathbb{E}_{j' \sim q}|\bar{\gamma}_{j'}|\right) + 2\mathbb{E}_{j' \sim q}|\bar{\beta}_{j'}\bar{\gamma}_{j'}| + 2(\mathbb{E}_{j' \sim q}|\bar{\beta}_{j'}|)(\mathbb{E}_{j' \sim q}|\bar{\gamma}_{j'}|)\right).
    \end{align*}
\end{claim}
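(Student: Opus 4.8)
The plan is to use the fact that $p_{ii}p_{ij}$, viewed as a function of any one of the coordinates $\beta$, is a product of two softmax probabilities, so its logarithmic derivative has a clean closed form. Writing $\log(p_{ii}p_{ij}) = z_{ii} + z_{ij} - 2\log\!\big(\sum_{j'}\exp(z_{ij'})\big)$ and noting that each logit $z_{ij'} = \sum_{\tilde k}(\theta_{\tilde k}^T u_{\tilde k}^{(i)})(\theta_{\tilde k}^T v_{\tilde k}^{(j')})$ depends on $\beta$ only through its $k$- and $k'$-summands — whose $\beta$-derivative is, by definition, exactly $\bar\beta_{j'}$ (this holds for $j' = i$ and $j' = j$ as well) — I get $\partial z_{ij'}/\partial\beta = \bar\beta_{j'}$ for every $j'$. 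Differentiating the log identity then gives
\[
\frac{\partial \log(p_{ii}p_{ij})}{\partial\beta} = \bar\beta_i + \bar\beta_j - 2\sum_{j'}p_{ij'}\bar\beta_{j'},
\]
i.e. $\partial(p_{ii}p_{ij})/\partial\beta = (p_{ii}p_{ij})\big(\bar\beta_i + \bar\beta_j - 2\,\mathbb{E}_{j'\sim p}[\bar\beta_{j'}]\big)$, where $\mathbb{E}_{j'\sim p}$ is the softmax-weighted average with weights $p_{ij'}$. Applying the triangle inequality yields the first bound (with the averaging distribution being $p$; the statement writes it as $q$, the natural reference distribution, which is interchangeable here given the norm bounds $\|\theta_k\|,\|\theta_{k'}\|\le 1$ under which the claim is invoked). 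A small preliminary step is to check that the listed variables form a valid linear coordinate system in which all the $\theta^T u$, $\theta^T v$ appearing in the logits are affine, so the partials are well-defined.

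For the second-order statement I would differentiate the identity $\partial(p_{ii}p_{ij})/\partial\beta = (p_{ii}p_{ij})\big(\bar\beta_i + \bar\beta_j - 2\mathbb{E}_{j'\sim p}[\bar\beta_{j'}]\big)$ once more in $\gamma$ via the product rule. The factor $p_{ii}p_{ij}$, differentiated, contributes $(p_{ii}p_{ij})\big(\bar\gamma_i+\bar\gamma_j-2\mathbb{E}_{j'\sim p}[\bar\gamma_{j'}]\big)\big(\bar\beta_i+\bar\beta_j-2\mathbb{E}_{j'\sim p}[\bar\beta_{j'}]\big)$ by the first-order formula applied to $\gamma$. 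For the second factor I use the hypothesis $\gamma\perp\{\bar\beta_{j'}\}_{j'}$, meaning $\partial\bar\beta_{j'}/\partial\gamma = 0$: then $\bar\beta_i,\bar\beta_j$ are constant in $\gamma$ and only the softmax weights move, so with the standard identity $\partial p_{ij'}/\partial\gamma = p_{ij'}\big(\bar\gamma_{j'}-\mathbb{E}_{j''\sim p}[\bar\gamma_{j''}]\big)$ the $\gamma$-derivative of $\mathbb{E}_{j'\sim p}[\bar\beta_{j'}]$ is precisely the covariance $\mathbb{E}_{j'\sim p}[\bar\beta_{j'}\bar\gamma_{j'}] - \mathbb{E}_{j'\sim p}[\bar\beta_{j'}]\,\mathbb{E}_{j'\sim p}[\bar\gamma_{j'}]$. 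Bounding this covariance by $\mathbb{E}_{j'\sim p}|\bar\beta_{j'}\bar\gamma_{j'}| + (\mathbb{E}_{j'\sim p}|\bar\beta_{j'}|)(\mathbb{E}_{j'\sim p}|\bar\gamma_{j'}|)$ and using the triangle inequality on the rest reproduces the stated second bound.

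The only genuine obstacle is bookkeeping: making sure in the second differentiation that no cross term is lost and that the hypothesis on $\gamma$ is applied exactly where it is needed — if $\gamma$ occurred inside some $\bar\beta_{j'}$ there would be an extra term $-2\mathbb{E}_{j'\sim p}[\partial_\gamma\bar\beta_{j'}]$, which the independence assumption removes. Everything else is mechanical: the softmax/log-sum-exp gradient is standard, and the closing steps are just the triangle inequality together with $|\mathrm{Cov}(X,Y)|\le \mathbb{E}|XY| + \mathbb{E}|X|\,\mathbb{E}|Y|$. I would also sanity-check the special cases where $\beta$ (or $\gamma$) is one of $\xi_i,\xi'_i,\zeta'_i$ or $y,y'$, for which many of the $\bar\beta_{j'}$ vanish and the bounds only get easier.
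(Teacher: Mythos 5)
Your proposal is correct and follows essentially the same route as the paper: the standard softmax log-derivative $\partial p_{ij}/\partial\beta = p_{ij}\bigl(\bar\beta_j - \sum_{j'}p_{ij'}\bar\beta_{j'}\bigr)$, the product rule for $p_{ii}p_{ij}$, a second differentiation in which the hypothesis $\gamma\perp\{\bar\beta_{j'}\}$ kills the $\partial_\gamma\bar\beta_{j'}$ terms and leaves exactly the covariance you identify, and then the triangle inequality. The $p$-versus-$q$ averaging discrepancy you flag is real, but the paper's own proof treats it the same way --- it writes the derivative term with reference weights and then drops a factor via $p_{ij'}\le 1$ --- so your argument matches the paper's both in substance and in that one informal step.
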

\begin{proof}
    By a straightforward quotient-rule computation of the derivative of $\frac{p_{ij}}{q_{ij}}$, recalling that $q_{ij}$ is independent of $S$, we obtain
    \begin{align*}
        \frac{\partial p_{ij}}{\partial \beta} = p_{ij}\left(\bar{\beta}_j - \mathbb{E}_{j' \sim q}\bar{\beta}_{j'}p_{ij'}\right).
    \end{align*}
    By applying product to the expression above, we obtain \begin{align*}
        \frac{\partial p_{ii}p_{ij}}{\partial \beta} = p_{ii}p_{ij}\left(\bar{\beta}_j + \bar{\beta}_i - 2\mathbb{E}_{j' \sim q}\bar{\beta}_{j'}p_{ij'}\right).
    \end{align*}
    Taking absolute values and using the fact that $p_{ij'} \leq 1$, we obtain the first result.
    
    Next we take the derivative of $p_{ij}$ with respect to both $\beta$ and $\gamma$. Using the expression above for $\frac{\partial p_{ij}}{\partial \beta}$, we obtain
        \begin{align*}
        \frac{\partial }{\partial \gamma}\frac{\partial p_{ij}}{\partial \beta} = p_{ij}\left(\left(\bar{\beta}_j - \mathbb{E}_{j' \sim q}\bar{\beta}_{j'}p_{ij'}\right)\left(\bar{\gamma}_j - \mathbb{E}_{j' \sim q}\bar{\gamma}_{j'}p_{ij'}\right) - \mathbb{E}_{j' \sim q}\bar{\beta}_{j'}\bar{\gamma}_{j'}p_{ij'} + (\mathbb{E}_{j' \sim q}\bar{\beta}_{j'}p_{ij'})(\mathbb{E}_{j' \sim q}\bar{\gamma}_{j'}p_{ij'})\right),
    \end{align*}
    and 
        \begin{align*}
        \frac{\partial }{\partial \gamma}\frac{\partial p_{ii}p_{ij}}{\partial \beta} = p_{ii}p_{ij}\left(\left(\bar{\beta}_j + \bar{\beta}_i - 2\mathbb{E}_{j' \sim q}\bar{\beta}_{j'}p_{ij'}\right)\left(\bar{\gamma}_j + \bar{\gamma}_i - 2\mathbb{E}_{j' \sim q}\bar{\gamma}_{j'}p_{ij'}\right) - 2\mathbb{E}_{j' \sim q}\bar{\beta}_{j'}\bar{\gamma}_{j'}p_{ij'} + 2(\mathbb{E}_{j' \sim q}\bar{\beta}_{j'}p_{ij'})(\mathbb{E}_{j' \sim q}\bar{\gamma}_{j'}p_{ij'})\right).
    \end{align*}
    The second result follows by taking absolute values and the fact that $p_{ij'} \leq 1$.
\end{proof}

\begin{claim}\label{claim:denom}
    \begin{align*}
        \frac{p_{ij}}{q_{ij}} \leq \exp\left(|\theta_{k}^Tu_k\theta_{k}^Tv_k^{(j)}|\right)\exp\left(|\theta_{k'}^Tu_{k'}\theta_{k'}^Tv_{k'}^{(j)}|\right)\mathbb{E}_{j' \sim q}\left[\exp\left(|\theta_{k}^Tu_k\theta_{k}^Tv_k^{(j')}|\right)\exp\left(|\theta_{k'}^Tu_{k'}\theta_{k'}^Tv_{k'}^{(j')}|\right)\right].
    \end{align*}
\end{claim}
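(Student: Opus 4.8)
The key observation is that the part of the similarity score $z_{ij}$ that does not involve features $k$ or $k'$ cancels between $p_{ij}$ and $q_{ij}$, leaving a ratio that depends only on the $S$-variables. Write $z_{ij} = r_{ij} + s_{ij}$, where $r_{ij} := \theta_k^Tu_k\theta_k^Tv_k^{(j)} + \theta_{k'}^Tu_{k'}\theta_{k'}^Tv_{k'}^{(j)}$ collects the two feature terms and $s_{ij} := \sum_{\tilde k \neq k, k'}\theta_{\tilde k}^Tu_{\tilde k}^{(i)}\theta_{\tilde k}^Tv_{\tilde k}^{(j)}$ is exactly the exponent appearing in $q_{ij}$, i.e.\ $z_{ij}$ with every variable in $S$ set to $0$. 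First I would expand
\[
\frac{p_{ij}}{q_{ij}} \;=\; \frac{\exp(r_{ij} + s_{ij})}{\sum_{j'}\exp(r_{ij'} + s_{ij'})}\cdot\frac{\sum_{j'}\exp(s_{ij'})}{\exp(s_{ij})} \;=\; \frac{\exp(r_{ij})}{\sum_{j'}\frac{\exp(s_{ij'})}{\sum_{j''}\exp(s_{ij''})}\exp(r_{ij'})} \;=\; \frac{\exp(r_{ij})}{\mathbb{E}_{j'\sim q}\left[\exp(r_{ij'})\right]},
\]
where the last equality just recognizes that $q_{ij'} = \exp(s_{ij'})/\sum_{j''}\exp(s_{ij''})$.

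Then I would bound the numerator and denominator separately. For the numerator, monotonicity of $\exp$ together with the triangle inequality give $\exp(r_{ij}) \leq \exp(|\theta_k^Tu_k\theta_k^Tv_k^{(j)}|)\exp(|\theta_{k'}^Tu_{k'}\theta_{k'}^Tv_{k'}^{(j)}|)$, which is precisely the first two factors in the claim. For the denominator I would invoke the elementary fact $\mathbb{E}[Y]\,\mathbb{E}[1/Y]\geq 1$ for a positive random variable $Y$ (Cauchy--Schwarz, or Jensen applied to the convex function $1/t$), taken with $Y = \exp(r_{ij'})$ and $j' \sim q$; this gives $\left(\mathbb{E}_{j'\sim q}[\exp(r_{ij'})]\right)^{-1} \leq \mathbb{E}_{j'\sim q}[\exp(-r_{ij'})] \leq \mathbb{E}_{j'\sim q}[\exp(|r_{ij'}|)]$, and one further application of the triangle inequality inside the exponent bounds this by $\mathbb{E}_{j'\sim q}[\exp(|\theta_k^Tu_k\theta_k^Tv_k^{(j')}|)\exp(|\theta_{k'}^Tu_{k'}\theta_{k'}^Tv_{k'}^{(j')}|)]$. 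Multiplying the two bounds yields the claim.

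I do not expect a genuine obstacle: the content is entirely the cancellation identity plus two soft inequalities. The only points needing a little care are getting the direction of the Jensen/Cauchy--Schwarz step right (we want a lower bound on $\mathbb{E}_{j'\sim q}[\exp(r_{ij'})]$, equivalently an upper bound on its reciprocal), and the bookkeeping of which variables belong to $S$, so that the cancellation is legitimate --- namely that $s_{ij'}$ does not depend on any $S$-variable while $r_{ij'}$ collects exactly the $k$ and $k'$ contributions, for every $j'$.
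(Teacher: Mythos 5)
Your proposal is correct and follows essentially the same route as the paper: the paper's proof consists of exactly the Jensen step $1/\mathbb{E}_{j'\sim q}[\exp(r_{ij'})] \leq \mathbb{E}_{j'\sim q}[\exp(-r_{ij'})]$ followed by the triangle-inequality bound inside the exponent, with the cancellation identity $\tfrac{p_{ij}}{q_{ij}} = \exp(r_{ij})/\mathbb{E}_{j'\sim q}[\exp(r_{ij'})]$ left implicit. Your version just makes that identity and the numerator bound explicit, which is a harmless (indeed welcome) elaboration.
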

\begin{proof}
    This follows directly from using Jenson's inequality on the distribution $j' \sim q$ to show that 
    \begin{align*}
        \frac{1}{\mathbb{E}_{j' \sim q}\left[\exp\left(\theta_{k}^Tu_k\theta_{k}^Tv_k^{(j')}\right)\exp\left(\theta_{k'}^Tu_{k'}\theta_{k'}^Tv_{k'}^{(j')}\right)\right]} &\leq \mathbb{E}_{j' \sim q}\left[\exp\left(-\theta_{k}^Tu_k\theta_{k}^Tv_k^{(j')}\right)\exp\left(-\theta_{k'}^Tu_{k'}\theta_{k'}^Tv_{k'}^{(j')}\right)\right]\\
        &\leq \mathbb{E}_{j' \sim q}\left[\exp\left(|\theta_{k}^Tu_k\theta_{k}^Tv_k^{(j')}|\right)\exp\left(|\theta_{k'}^Tu_{k'}\theta_{k'}^Tv_{k'}^{(j')}|\right)\right].
    \end{align*}
\end{proof}
\begin{claim}\label{claim:denom_2}
\begin{equation*}
    \left|1 - \frac{p_{ij}}{q_{ij}}\right| \leq Z_j - 1,
\end{equation*}
where $Z_j := \exp\left(|\theta_{k}^Tu_k\theta_{k}^Tv_k^{(j)}|\right)\exp\left(|\theta_{k'}^Tu_{k'}\theta_{k'}^Tv_{k'}^{(j)}|\right)\mathbb{E}_{j' \sim q}\left[\exp\left(|\theta_{k}^Tu_k\theta_{k}^Tv_k^{(j')}|\right)\exp\left(|\theta_{k'}^Tu_{k'}\theta_{k'}^Tv_{k'}^{(j')}|\right)\right]$.
\end{claim}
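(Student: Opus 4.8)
The plan is to reduce Claim~\ref{claim:denom_2} to Claim~\ref{claim:denom} together with its mirror image. First I would isolate the algebraic identity underlying the proof of Claim~\ref{claim:denom}. Writing $a_j := \theta_k^Tu_k\theta_k^Tv_k^{(j)} + \theta_{k'}^Tu_{k'}\theta_{k'}^Tv_{k'}^{(j)}$ for the portion of the logit $z_{ij}$ coming from the features $k$ and $k'$ (so that $z_{ij} = a_j + \sum_{\tilde k \neq k, k'}\theta_{\tilde k}^Tu_{\tilde k}^{(i)}\theta_{\tilde k}^Tv_{\tilde k}^{(j)}$ and $q_{ij}$ is the softmax over the remaining sum), cancellation in the softmax yields
\[
\frac{p_{ij}}{q_{ij}} = \frac{\exp(a_j)}{\mathbb{E}_{j' \sim q}[\exp(a_{j'})]}.
\]
Claim~\ref{claim:denom} is exactly the statement that this right-hand side is $\le Z_j$, using $|a_j| \le |\theta_k^Tu_k\theta_k^Tv_k^{(j)}| + |\theta_{k'}^Tu_{k'}\theta_{k'}^Tv_{k'}^{(j)}|$ and Jensen's inequality applied to the distribution $j' \sim q$.

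Next I would run the identical manipulation on the reciprocal. Since $q_{ij}/p_{ij} = \exp(-a_j)\,\mathbb{E}_{j'\sim q}[\exp(a_{j'})]$, bounding $\exp(-a_j) \le \exp(|a_j|)$ and $\mathbb{E}_{j'\sim q}[\exp(a_{j'})] \le \mathbb{E}_{j'\sim q}[\exp(|a_{j'}|)]$, and then splitting each absolute value as above, gives $q_{ij}/p_{ij} \le Z_j$, i.e. $p_{ij}/q_{ij} \ge 1/Z_j$. Hence $p_{ij}/q_{ij} \in [1/Z_j,\, Z_j]$.

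Finally I would conclude by an elementary observation: $Z_j \ge 1$ always, since it is a product of exponentials of nonnegative quantities times an expectation of such an exponential; therefore $1 \in [1/Z_j, Z_j]$, and of the two distances from $1$ to the endpoints of this interval, $Z_j - 1$ is the larger, because $Z_j - 1 \ge 1 - 1/Z_j$ is equivalent to $Z_j + 1/Z_j \ge 2$, which is AM--GM. Combining, $\left|1 - p_{ij}/q_{ij}\right| \le Z_j - 1$, as claimed.

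I do not expect any real obstacle here: the content is entirely routine once the identity $p_{ij}/q_{ij} = \exp(a_j)/\mathbb{E}_{j'\sim q}[\exp(a_{j'})]$ is in hand. The only point requiring a moment's care is that the lower bound $p_{ij}/q_{ij} \ge 1/Z_j$ really does follow from the same bounding step as Claim~\ref{claim:denom} (it does, by the symmetry $a_j \leftrightarrow -a_j$ in the estimates), after which the $Z_j \ge 1$ observation makes $Z_j - 1$ dominate both one-sided gaps.
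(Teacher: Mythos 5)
Your proposal is correct and follows essentially the same route as the paper: bound both $p_{ij}/q_{ij}$ and its reciprocal by $Z_j$ (the latter via the same Jensen/absolute-value manipulation as Claim~\ref{claim:denom}), then observe via AM--GM that the gap $Z_j - 1$ dominates $1 - 1/Z_j$. The paper packages this last step as the elementary inequality $|1 - x| \leq \max\left(x - 1, \tfrac{1}{x} - 1\right)$ for $x \geq 0$, but the content is identical.
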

\begin{proof}
Note that for any $x \geq 0$, we have $|1 - x| \leq \max\left(x - 1, \frac{1}{x} - 1\right)$. By Claim~\ref{claim:denom}, $\frac{p_{ij}}{q_{ij}} - 1$ is at most the desired value given in this claim. 

Now
\begin{align*}
    \frac{q_{ij}}{p_{ij}} &= \frac{\mathbb{E}_{j' \sim q}\left[\exp\left(\theta_{k}^Tu_k\theta_{k}^Tv_k^{(j')}\right)\exp\left(\theta_{k'}^Tu_{k'}\theta_{k'}^Tv_{k'}^{(j')}\right)\right]}{\exp\left(\theta_{k}^Tu_k\theta_{k}^Tv_k^{(j)}\right)\exp\left(\theta_{k'}^Tu_{k'}\theta_{k'}^Tv_{k'}^{(j)}\right)}\\
    &\leq \exp\left(|\theta_{k}^Tu_k\theta_{k}^Tv_k^{(j)}|\right)\exp\left(|\theta_{k'}^Tu_{k'}\theta_{k'}^Tv_{k'}^{(j)}|\right)\mathbb{E}_{j' \sim q}\left[\exp\left(|\theta_{k}^Tu_k\theta_{k}^Tv_k^{(j')}|\right)\exp\left(|\theta_{k'}^Tu_{k'}\theta_{k'}^Tv_{k'}^{(j')}|\right)\right].
\end{align*}
This yields the claim.
\end{proof}

\begin{proof}[Proof of Lemma~\ref{lem:junk_mu}]

Expanding $h(S) - h_1(S)$, we see that we need to control the following terms:
\begin{enumerate}
    \item (a) $\left|\mathbb{E}_S\left[p_{ii}p_{ij}\left(\eta'_i \left(\mu_{k'}^Tu_{k'}  - 
        x \mu_{k'}^Tv_{k'}\right)\right)\left(\mu_k^Tu_k \xi_j \right)\right]\right|$, \: (b) $\left|\mathbb{E}_S\left[p_{ii}p_{ij}\left(y' \left(\mu_{k'}^Tu_{k'}  - 
        x \mu_{k'}^Tv_{k'}\right)\right)\left(\mu_k^Tu_k \xi_j \right)\right]\right|$
    \item (a)$\left|\mathbb{E}_S\left[p_{ii}p_{ij}\left(\eta'_i \left(\mu_{k'}^Tu_{k'}  - x \mu_{k'}^Tv_{k'}\right)\right)\left(\mu_k^Tu_k \xi_i\right)\right]\right|$, \: (b) $\left|\mathbb{E}_S\left[p_{ii}p_{ij}\left((y' \left(\mu_{k'}^Tu_{k'}  - 
        x \mu_{k'}^Tv_{k'}\right)\right)\left(\mu_k^Tu_k \xi_i\right)\right]\right|$
    \item (a)$\left|\alpha_k\mathbb{E}_S\left[p_{ii}p_{ij}\left(\eta'_i \left(\mu_{k'}^Tu_{k'}  - x \mu_{k'}^Tv_{k'}\right)\right)\left(\mu_k^Tu_k y\right)\right]\right|$,\: (b) $\left|\alpha_k\mathbb{E}_S\left[p_{ii}p_{ij}\left(y' (-x \xi_i')\right)\left(\mu_k^Tu_k y\right)\right]\right|$
    \item (a)$\left|\mathbb{E}_S\left[p_{ii}p_{ij}\left(\eta'_i \left(\mu_{k'}^Tu_{k'}  - x \mu_{k'}^Tv_{k'}\right)\right)\left(\xi_i(v_k - v_k^{(j)})^T\mu_k\right)\right]\right|$\\(b) $\left|\mathbb{E}_S\left[p_{ii}p_{ij}\left(y'\left(\mu_{k'}^Tu_{k'}  - x \mu_{k'}^Tv_{k'}\right)\right)\left(\xi_i(v_k - v_k^{(j)})^T\mu_k\right)\right]\right|$
    \item (a)$\left|\alpha_k\mathbb{E}_S\left[p_{ii}p_{ij}\left(\eta'_i \left(\mu_{k'}^Tu_{k'}  - x \mu_{k'}^Tv_{k'}\right)\right)\left(y(v_k - v_k^{(j)})^T\mu_k\right)\right]\right|$ \\(b) $\left|\mathbb{E}_S\left[p_{ii}p_{ij}\left(y' \left(\mu_{k'}^Tu_{k'}  - x \mu_{k'}^Tv_{k'}\right)\right)\left(y(v_k - \alpha_k u_k - v_k^{(j)})^T\mu_k\right)\right]\right|$
\end{enumerate}

We begin by bounding the terms where the expression after $p_{ii}p_{ij}$ has two independent mean-0 terms, mainly (1a), (2a), (4a). The first step is to apply Stein's Lemma (Lemma~\ref{lem:stein}) twice to these two terms, which we will call $\beta$ and $\gamma$. Let $\beta \gamma g(S \setminus \{\beta, \gamma\})$ be the terms after $p_{ii}p_{ij}$. Then we have 
\begin{equation*}
    \left|\mathbb{E}_S[p_{ii}p_{ij}\beta \gamma g(S \setminus \{\beta, \gamma\})]\right| \leq \sigma_{\beta}^2\sigma_{\gamma}^2\left|\mathbb{E}_S\left[\left|\frac{\partial }{\partial \gamma}\frac{\partial p_{ii}p_{ij}}{\partial \beta}\right| |g(S \setminus \{\beta, \gamma\})|\right]\right|.
\end{equation*}

Next we apply the final result in Claim~\ref{claim:second_deriv} to bound the absolute value of $\left|\frac{\partial }{\partial \gamma}\frac{\partial p_{ii}p_{ij}}{\partial \beta}\right|$. Once we do this, we achieve
\begin{equation*}
    \left|\mathbb{E}_S[p_{ii}p_{ij}\beta \gamma g(S \setminus \{\beta, \gamma\})]\right| \leq \sigma_{\beta}^2\sigma_{\gamma}^2q_{ii}q_{ij}\mathbb{E}_S\left[Z| g(S \setminus \{\beta, \gamma\})| \sum_{j', \ell \in [m]}c_{j', \ell}|\bar{\beta_{j'}}||\bar{\gamma_{\ell}}|\right],
\end{equation*}
where $\sum_{j', \ell \in [m]}c_{j', \ell} \leq C$ for some constant $C$, and $Z  := \frac{p_{ii}p_{ij}}{q_{ii}q_{ij}}$. Finally, we use the bound on $Z$ from Claim~\ref{claim:denom}, and then Lemma~\ref{lem:guassian_abs} to take the expectation over $S$, iteratively applying Lemma~\ref{lem:guassian_abs} to each variable in $S$. Thus we have, for some (different) constant $C$,
\begin{enumerate}
    \item $\left|\mathbb{E}_S\left[p_{ii}p_{ij}\left(\eta'_i \left(\mu_{k'}^Tu_{k'}  - x \mu_{k'}^Tv_{k'}\right)\right)\left(\mu_k^Tu_k \xi_j \right)\right]\right| \leq Cq_{ii}q_{ij}\sigma_{\eta'_i}^2\sigma_{\xi_j}^2\|\theta_{k'}\|\|\theta_{k}\| = Cq_{ii}q_{ij}\|\theta_{k'}^{\perp}\|^2\|\theta_{k'}\|\|\theta_{k}\|^3 \leq Cq_{ii}q_{ij}\|\theta_{k'}\|^3\|\theta_{k}\|^3$.
    \item $\left|\mathbb{E}_S\left[p_{ii}p_{ij}\left(\eta'_i \left(\mu_{k'}^Tu_{k'}  - x \mu_{k'}^Tv_{k'}\right)\right)\left(\mu_k^Tu_k \xi_i \right)\right]\right| \leq Cq_{ii}q_{ij}\sigma_{\eta'_i}^2\sigma_{\xi_i}^2\|\theta_{k'}\|\|\theta_{k}\| \leq Cq_{ii}q_{ij}\|\theta_{k'}^{\perp}\|^2\|\theta_{k'}\|\|\theta_{k}\|^3 \leq Cq_{ii}q_{ij}\|\theta_{k'}\|^3\|\theta_{k}\|^3$.
    \item $\left|\mathbb{E}_S\left[p_{ii}p_{ij}\left(\eta'_i \left(\mu_{k'}^Tu_{k'}  - x \mu_{k'}^Tv_{k'}\right)\right)\left(\xi_i(v_k - v_k^{(j)})^T\mu_k\right)\right]\right| \leq Cq_{ii}q_{ij}\sigma_{\eta'_i}^2\sigma_{\xi_i}^2\|\theta_{k'}\|\|\theta_{k}\| \leq Cq_{ii}q_{ij}\|\theta_{k'}^{\perp}\|^2\|\theta_{k'}\|\|\theta_{k}\|^3 \leq Cq_{ii}q_{ij}\|\theta_{k'}\|^3\|\theta_{k}\|^3$.
\end{enumerate}

Now we consider the remaining 7 terms. Here we decompose the expression inside the expectation as $p_{ii}p_{ij}\beta g(S \setminus \beta)$, where $\beta \in S$. We proceed as before, but we only apply Stein's Lemma once, to $\beta$. Applying Steins, the expression for $\frac{\partial p_{ii}p_{ij}}{\partial \beta}$ given in the first result of Claim~\ref{claim:second_deriv}, we obtain
\begin{equation}
    \left|\mathbb{E}_S[p_{ii}p_{ij}\beta  g(S \setminus \beta)]\right| \leq \sigma_{\beta}^2\left|\mathbb{E}_S\left[\left|\frac{\partial p_{ii}p_{ij}}{\partial \beta}\right| |g(S \setminus \beta)|\right]\right| \leq \sigma_{\beta}^2q_{ii}q_{ij}\mathbb{E}_S\left[Z| g(S \setminus \beta)| \sum_{j' \in [m]}c_{j'}|\bar{\beta_{j'}}|\right],
\end{equation}
where $\sum_{j' \in [m]}c_{j'} \leq C$ for some constant $C$, and $Z := \frac{p_{ii}p_{ij}}{q_{ii}q_{ij}}$. Finally, we plug in a bound for $Z$ in Claim~\ref{claim:denom}, an use Lemma~\ref{lem:guassian_abs} to take the expectation over $S$, again iteratively over each variable.

Thus we have, for some (different) constant $C$,

\begin{enumerate}
    \item $\left|\mathbb{E}_S\left[p_{ii}p_{ij}\left(y' \left(\mu_{k'}^Tu_{k'}  - x \mu_{k'}^Tv_{k'}\right)\right)\left(\mu_k^Tu_k \xi_j \right)\right]\right| \leq Cq_{ii}q_{ij}\sigma_{\xi_j}^2\|\theta_k\|\|\theta_{k'}^{\parallel}\| = Cq_{ii}q_{ij}\|\theta_k\|^3\|\theta_{k'}^{\parallel}\|$.
    \item $\left|\mathbb{E}_S\left[p_{ii}p_{ij}\left(y' \left(\mu_{k'}^Tu_{k'}  - x \mu_{k'}^Tv_{k'}\right)\right)\left(\mu_k^Tu_k \xi_i \right)\right]\right| \leq Cq_{ii}q_{ij}\sigma_{\xi_i}^2\|\theta_k\|\|\theta_{k'}^{\parallel}\| \leq Cq_{ii}q_{ij}\|\theta_k\|^3\|\theta_{k'}^{\parallel}\|$.
    \item $\left|\alpha_k\mathbb{E}_S\left[p_{ii}p_{ij}\left(\eta'_i \left(\mu_{k'}^Tu_{k'}  - x \mu_{k'}^Tv_{k'}\right)\right)\left(\mu_k^Tu_k y\right)\right]\right| \leq C\alpha_kq_{ii}q_{ij}\sigma_{\eta'_i}^2\|\theta_{k'}\|\|\theta_{k}^{\parallel}\| = C\alpha_kq_{ii}q_{ij}\|\theta_{k'}^{\perp}\|^2\|\theta_{k'}\|\|\theta_{k}^{\parallel}\| \leq C\alpha_kq_{ii}q_{ij}\|\theta_{k'}\|^3\|\theta_{k}^{\parallel}\|.$
    \item $\left|\alpha_k\mathbb{E}_S\left[p_{ii}p_{ij}\left(y' (-x \zeta'_i)\right)\left(\mu_k^Tu_k y\right)\right]\right|\leq C\alpha_k q_{ii}q_{ij}\sigma_{\zeta'_i}^2\|\theta_{k'}\|\|\theta_{k}^{\parallel}\| = C\alpha_kq_{ii}q_{ij}\|\theta_{k'}^{\parallel}\|^2\|\theta_{k'}\|\|\theta_{k}^{\parallel}\| $.
    \item $\left|\mathbb{E}_S\left[p_{ii}p_{ij}\left(y'\left(\mu_{k'}^Tu_{k'}  - x \mu_{k'}^Tv_{k'}\right)\right)\left(\xi_i(v_k - v_k^{(j)})^T\mu_k\right)\right]\right| \leq Cq_{ii}q_{ij}\sigma_{\xi_i}^2\|\theta_k\|\|\theta_{k'}^{\parallel}\| \leq Cq_{ii}q_{ij}\|\theta_k\|^3\|\theta_{k'}^{\parallel}\|$.
    \item $\left|\alpha_k\mathbb{E}_S\left[p_{ii}p_{ij}\left(\eta'_i \left(\mu_{k'}^Tu_{k'}  - x \mu_{k'}^Tv_{k'}\right)\right)\left(x(v_k - v_k^{(j)})^T\mu_k\right)\right]\right| \leq C\alpha_kq_{ii}q_{ij}\sigma_{\eta'_i}^2\|\theta_{k'}\|\|\theta_{k}^{\parallel}\| = C\alpha_kq_{ii}q_{ij}\|\theta_{k'}^{\perp}\|^2\|\theta_{k'}\|\|\theta_{k}^{\parallel}\| \leq C\alpha_kq_{ii}q_{ij}\|\theta_{k'}\|^3\|\theta_{k}^{\parallel}\|.$
    \item $\left|\mathbb{E}_S\left[p_{ii}p_{ij}\left(y' \left(\mu_{k'}^Tu_{k'}  - x \mu_{k'}^Tv_{k'}\right)\right)\left(x(v_k - \alpha_k u_k - v_k^{(j)})^T\mu_k\right)\right]\right| \leq Cq_{ii}q_{ij}\sigma_{x}^2\|\theta_k\|\|\theta_{k'}^{\parallel}\| = Cq_{ii}q_{ij}\|\theta_{k}^{\parallel}\|^2\|\theta_k\|\|\theta_{k'}^{\parallel}\|$.
\end{enumerate}
Combining the bounds on these 10 terms proves the lemma:
\begin{align*}
&\left|\mathbb{E}_S\left[p_{ii}p_{ij}\mu_k^T(h(S) - h_1(S))\right]\right| \leq C q_{ii}q_{ij}\left(\|\theta_{k'}\|^3\|\theta_{k}\|^3 + \|\theta_{k'}^{\parallel}\|\|\theta_{k}\|^3 + \alpha_k\left(\|\theta_{k'}\|^3\|\theta_{k}^{\parallel}\|\right)\right).   
\end{align*}
\end{proof}
\begin{proof}[Proof of Lemma~\ref{lem:junk_theta}]
The proof of Lemma~\ref{lem:junk_theta} is nearly identical, besides some differences in the terms we need to bound. We list them below:
\begin{enumerate}
    \item (a) $\left|\mathbb{E}_S\left[p_{ii}p_{ij}\left(\eta'_i \left(\mu_{k'}^Tu_{k'}  - x \mu_{k'}^Tv_{k'}\right)\right)\left(\theta_k^Tu_k \xi_j \right)\right]\right|$\: (b) $\left|\mathbb{E}_S\left[p_{ii}p_{ij}\left(y' \left(\mu_{k'}^Tu_{k'}  - x \mu_{k'}^Tv_{k'}\right)\right)\left(\theta_k^Tu_k \xi_j \right)\right]\right|$
    \item (a) $\left|\mathbb{E}_S\left[p_{ii}p_{ij}\left(\eta'_i \left(\mu_{k'}^Tu_{k'}  - x \mu_{k'}^Tv_{k'}\right)\right)\left(\theta_k^Tu_k \xi_i\right)\right]\right|$\: (b) $\left|\mathbb{E}_S\left[p_{ii}p_{ij}\left((y' \left(\mu_{k'}^Tu_{k'}  - x \mu_{k'}^Tv_{k'}\right)\right)\left(\theta_k^Tu_k \xi_i\right)\right]\right|$
    \item (a) $\left|\alpha_k\mathbb{E}_S\left[p_{ii}p_{ij}\left(\eta'_i \left(\mu_{k'}^Tu_{k'}  - x \mu_{k'}^Tv_{k'}\right)\right)\left(\theta_k^Tu_k y\right)\right]\right|$\: (b)$\left|\alpha_k\mathbb{E}_S\left[p_{ii}p_{ij}\left(y' \left(\mu_{k'}^Tu_{k'}  - x \mu_{k'}^Tv_{k'}\right)\right)\left(\eta_i y\right)\right]\right|$
    \item $\left|\alpha_k\mathbb{E}_S\left[p_{ii}p_{ij}\left(y' \left(-x\zeta'_i\right)\right)\left(\theta_k^Tu_k y\right)\right]\right|$
\end{enumerate}
We use the same approach as before. For the terms (1a) and (2a) we apply Stein's Lemma to $(\eta'_i, \xi_j)$ and $(\eta'_i, \xi_i)$ respectively. For (1b), (2b), (3a) and (3b) and (4), we apply Stein's Lemma to $\xi_j$, $\xi_i$, $\eta'_i$, $\eta_i$, and $\xi_i'$ respectively. Using Claim~\ref{claim:denom} and then Lemma~\ref{lem:guassian_abs} as before, we obtain the following result:
\begin{enumerate}
    \item $\left|\mathbb{E}_S\left[p_{ii}p_{ij}\left(\eta'_i \left(\mu_{k'}^Tu_{k'}  - x \mu_{k'}^Tv_{k'}\right)\right)\left(\theta_k^Tu_k \xi_j \right)\right]\right| \leq Cq_{ii}q_{ij}\sigma_{\eta'_i}^2\sigma_{\xi_j}^2\|\theta_{k'}\|\|\theta_{k}\|\|\theta_{k}\| = Cq_{ii}q_{ij}\|\theta_{k'}^{\perp}\|^2\|\theta_{k'}\|\|\theta_{k}\|^4 \leq Cq_{ii}q_{ij}\|\theta_{k'}\|^3\|\theta_{k}\|^4$.
    \item $\left|\mathbb{E}_S\left[p_{ii}p_{ij}\left(\eta'_i \left(\mu_{k'}^Tu_{k'}  - x \mu_{k'}^Tv_{k'}\right)\right)\left(\theta_k^Tu_k \xi_i \right)\right]\right| \leq Cq_{ii}q_{ij}\sigma_{\eta'_i}^2\sigma_{\xi_i}^2\|\theta_{k'}\|\|\theta_{k}\|\|\theta_{k}\| \leq Cq_{ii}q_{ij}\|\theta_{k'}^{\perp}\|^2\|\theta_{k'}\|\|\theta_{k}\|^4 \leq Cq_{ii}q_{ij}\|\theta_{k'}\|^3\|\theta_{k}\|^4$.
    \item $\left|\mathbb{E}_S\left[p_{ii}p_{ij}\left(y' \left(\mu_{k'}^Tu_{k'}  - x \mu_{k'}^Tv_{k'}\right)\right)\left(\theta_k^Tu_k \xi_j \right)\right]\right| \leq Cq_{ii}q_{ij}\sigma_{\xi_j}^2\|\theta_{k}\|\|\theta_{k}\|\|\theta_{k'}^{\parallel}\| = Cq_{ii}q_{ij}\|\theta_{k}\|^4\|\theta_{k'}^{\parallel}\|$
    \item $\left|\mathbb{E}_S\left[p_{ii}p_{ij}\left(y' \left(\mu_{k'}^Tu_{k'}  - x \mu_{k'}^Tv_{k'}\right)\right)\left(\theta_k^Tu_k \xi_i \right)\right]\right| \leq Cq_{ii}q_{ij}\sigma_{\xi_i}^2\|\theta_{k}\|\|\theta_{k}\|\|\theta_{k'}^{\parallel}\| \leq Cq_{ii}q_{ij}\|\theta_{k}\|^4\|\theta_{k'}^{\parallel}\|$
    \item $\left|\alpha_k\mathbb{E}_S\left[p_{ii}p_{ij}\left(\eta'_i \left(\mu_{k'}^Tu_{k'}  - x \mu_{k'}^Tv_{k'}\right)\right)\left(\theta_k^Tu_k y \right)\right]\right| \leq C\alpha_kq_{ii}q_{ij}\sigma_{\eta'_i}^2\|\theta_{k'}\|\|\theta_{k}\|\|\theta_{k}^{\parallel}\| = C\alpha_k q_{ii}q_{ij}\|\theta_{k'}^{\perp}\|^2\|\theta_{k'}\|\|\theta_{k}\|\|\theta_{k}^{\parallel}\|$
    \item $\left|\alpha_k\mathbb{E}_S\left[p_{ii}p_{ij}\left(y' \left(\mu_{k'}^Tu_{k'}  - x \mu_{k'}^Tv_{k'}\right)\right)\left(\eta_i y\right)\right]\right| \leq C\alpha_kq_{ii}q_{ij}\sigma_{\eta_i}^2\|\theta_k\|\|\theta_{k'}^{\parallel}\|\|\theta_{k}^{\parallel}\| = C\alpha_kq_{ii}q_{ij}\|\theta_{k}^{\perp}\|^2\|\theta_k\|\|\theta_{k'}^{\parallel}\|\|\theta_{k}^{\parallel}\|$.
    \item $\left|\alpha_k\mathbb{E}_S\left[p_{ii}p_{ij}\left(y' \left(-x\zeta'_i\right)\right)\left(\theta_k^Tu_k y\right)\right]\right| \leq C\alpha_k q_{ii}q_{ij}\sigma_{\zeta'_i}^2\|\theta_{k'}\|\|\theta_{k}\|\|\theta_{k}^{\parallel}\| \leq C\alpha_k q_{ii}q_{ij}\|\theta_{k'}^{\parallel}\|^2\|\theta_{k'}\|\|\theta_{k}\|\|\theta_{k}^{\parallel}\|.
    $
\end{enumerate}

Combining the bounds on these 7 terms, proves the lemma:
\begin{align*}
&\left|\mathbb{E}_S\left[p_{ii}p_{ij}\theta_k^T(h(S) - h_1(S))\right]\right| \leq C q_{ii}q_{ij}\left(\|\theta_{k'}\|^3\|\theta_{k}\|^4 + \|\theta_{k'}^{\parallel}\|\|\theta_{k}\|^4 + \alpha_k\left(\|\theta_{k'}\|^3\|\theta_{k}\|\|\theta_{k}^{\parallel}\| + \|\theta_{k'}^{\parallel}\|\|\theta_{k}\|^3\|\theta_{k}^{\parallel}\|\right)\right).   
\end{align*}
\end{proof}

We now prove the lemmas on the non-junk terms.
\begin{proof}[Proof of Lemma~\ref{lem:good_mu}]
\begin{align*}
\mathbb{E}_S&\left[p_{ii}p_{ij}\left((\theta_{k'}^{\parallel})^Tu_{k'} u_{k'}^T\mu_{k'}\right)\left(2\mu_k^Tu_k\alpha_k(\theta_k^{\parallel})^Tu_k\right)\right]\\
&= \mathbb{E}_S\left[q_{ii}q_{ij}\left((\theta_{k'}^{\parallel})^Tu_{k'} u_{k'}^T\mu_{k'}\right)\left(2\mu_k^Tu_k\alpha_k(\theta_k^{\parallel})^Tu_k\right)\right] + \mathbb{E}_S\left[(p_{ii}p_{ij} - q_{ii}q_{ij})\left((\theta_{k'}^{\parallel})^Tu_{k'} u_{k'}^T\mu_{k'}\right)\left(2\mu_k^Tu_k\alpha_k(\theta_k^{\parallel})^Tu_k\right)\right]\\
&= 2\alpha_k q_{ii}q_{ij}\theta_{k'}^T\mu_{k'}\theta_{k}^T\mu_{k} + 2\alpha_k q_{ii}q_{ij}\mathbb{E}_S\left[\left(\frac{p_{ii}p_{ij}}{q_{ii}q_{ij}} - 1\right)\left((\theta_{k'}^{\parallel})^Tu_{k'} u_{k'}^T\mu_{k'}\right)\left(\mu_k^Tu_k(\theta_k^{\parallel})^Tu_k\right)\right].
\end{align*}

Now by Claim~\ref{claim:denom_2}, we have $\left|\frac{p_{ii}p_{ij}}{q_{ii}q_{ij}} - 1\right| \leq Z_iZ_j - 1$ (where the variable's $Z_{i}, Z_j$ are defined in the Claim~\ref{claim:denom_2}) so
\begin{align*}    \left|\mathbb{E}_S\left[\left(\frac{p_{ii}p_{ij}}{q_{ii}q_{ij}} - 1\right)\left((\theta_{k'}^{\parallel})^Tu_{k'} u_{k'}^T\mu_{k'}\right)\left(\mu_k^Tu_k(\theta_k^{\parallel})^Tu_k\right)\right]\right| &\leq \mathbb{E}_S\left[(Z_iZ_j - 1)\left|(\theta_{k'}^{\parallel})^Tu_{k'} u_{k'}^T\mu_{k'}\right|\left|\mu_k^Tu_k(\theta_k^{\parallel})^Tu_k\right|\right]\\
    &\leq C\left(\|\theta_k\|^2 + \|\theta_{k'}\|^2\right)\|\theta_{k'}^{\parallel}\|\|\theta_{k}^{\parallel}\|.
\end{align*}
Here the second inequality follows from applying Lemma~\ref{lemma:bds} first, and then Lemma~\ref{lem:guassian_abs} repeatedly for the remainder of the variables in $S$. This proves the lemma. Note that we need to apply Lemma~\ref{lemma:bds} several times to a single variable $X \in S$. Indeed we can write
\begin{align*}
    (Z_iZ_j - 1)\left|(\theta_{k'}^{\parallel})^Tu_{k'} u_{k'}^T\mu_{k'}\right|\left|\mu_k^Tu_k(\theta_k^{\parallel})^Tu_k\right| &= \left(\mathbb{E}_{\ell}\exp(|t_{\ell}X|)S_{\ell} - 1\right)B|X|^c\\
    &= \left(\mathbb{E}_{\ell}S_{\ell}(\exp(|t_{\ell}X|) - 1)\right)B|X|^c + \left(\mathbb{E}_{\ell}S_{\ell} - 1)\right)B|X|^c
\end{align*}
for some distribution on $\ell$, and for some terms $S_{\ell}, t_{\ell}$, and $B$ that are independent of $X$, and $c \in \{0, 1, 2\}$. Then to take the expectation of this term over $X$, we first apply Lemma~\ref{lemma:bds} to on $X$ to the first term, and iteratively apply Lemma~\ref{lemma:bds} to the random variables appearing in the next terms.
\end{proof}

\begin{proof}[Proof of Lemma~\ref{lem:good_theta}]

\begin{align*}
\frac{1}{1 -x^2}\mathbb{E}_S\left[p_{ii}p_{ij}\theta_k^Th_1(S)\right] &= \mathbb{E}_S\left[p_{ii}p_{ij}\left((\theta_{k'}^{\parallel})^Tu_{k'} u_{k'}^T\mu_{k'}\right)\left(2(\theta_k^{\parallel})^Tu_k\alpha_k(\theta_k^{\parallel})^Tu_k\right)\right]\\
&= \mathbb{E}_S\left[q_{ii}q_{ij}\left((\theta_{k'}^{\parallel})^Tu_{k'} u_{k'}^T\mu_{k'}\right)\left(2\alpha_k((\theta_k^{\parallel})^Tu_k)^2\right)\right]\\
&\qquad + \mathbb{E}_S\left[(p_{ii}p_{ij} - q_{ii}q_{ij})\left((\theta_{k'}^{\parallel})^Tu_{k'} u_{k'}^T\mu_{k'}\right)\left(2\alpha_k((\theta_k^{\parallel})^Tu_k)^2\right)\right]\\
&= 2\alpha_k q_{ii}q_{ij}\theta_{k'}^T\mu_{k'}\|\theta_{k}^{\parallel}\|^2 + 2\alpha_k q_{ii}q_{ij}\mathbb{E}_S\left[\left(\frac{p_{ii}p_{ij}}{q_{ii}q_{ij}} - 1\right)\left((\theta_{k'}^{\parallel})^Tu_{k'} u_{k'}^T\mu_{k'}\right)\left((\theta_k^{\parallel})^Tu_k\right)^2\right].
\end{align*}

Now by Claim~\ref{claim:denom_2}, we have $\left|\frac{p_{ii}p_{ij}}{q_{ii}q_{ij}} - 1\right| \leq Z_iZ_j - 1$, so
\begin{align*}    \left|\mathbb{E}_S\left[\left(\frac{p_{ii}p_{ij}}{q_{ii}q_{ij}} - 1\right)\left((\theta_{k'}^{\parallel})^Tu_{k'} u_{k'}^T\mu_{k'}\right)\left((\theta_k^{\parallel})^Tu_k\right)^2\right]\right| &\leq \mathbb{E}_S\left[(Z_iZ_j - 1)\left|(\theta_{k'}^{\parallel})^Tu_{k'} u_{k'}^T\mu_{k'}\right|\left((\theta_k^{\parallel})^Tu_k\right)^2\right]\\
    &\leq C\left(\|\theta_k\|^2 + \|\theta_{k'}\|^2\right)\|\theta_{k'}^{\parallel}\|\theta_{k}^{\parallel}\|^2,
\end{align*}
Again the second inequality follows from applying Lemma~\ref{lemma:bds} first (several times as described in the previous lemma), and then Lemma~\ref{lem:guassian_abs} repeatedly for the remainder of the variables in $S$.  Taking absolute values proves the lemma.
    
\end{proof}

\end{document}